\def\BibTeX{{\rm B\kern-.05em{\sc i\kern-.025em b}\kern-.08em
    T\kern-.1667em\lower.7ex\hbox{E}\kern-.125emX}}
\newcolumntype{P}[1]{>{\centering\arraybackslash}p{#1}}
\newtheorem{theorem}{Theorem}
\newtheorem{lemma}{Lemma}
\newtheorem{definition}{Definition}
\newtheorem{assumption}{Assumption}
\providecommand{\propositionname}{Proposition}
\begin{document}

\markboth{Distributionally Robust Federated Learning with Client Drift Minimization}{M. Krouka et al.}

\title{Distributionally Robust Federated Learning with Client Drift Minimization}

\author{MOUNSSIF KROUKA, CHAOUKI BEN ISSAID,\\ and MEHDI BENNIS (Fellow, IEEE)}
\corresp{Centre for Wireless Communications (CWC), University of Oulu, 90570 Oulu, Finland \vspace{0.1cm} \\ 
Corresponding author: M. KROUKA (email: mounssif.krouka@oulu.fi)}

\begin{abstract}
Federated learning (FL) faces critical challenges, particularly in heterogeneous environments where non-independent and identically distributed data across clients can lead to unfair and inefficient model performance. In this work, we introduce \textit{DRDM}, a novel algorithm that addresses these issues by combining a distributionally robust optimization (DRO) framework with dynamic regularization to mitigate client drift. \textit{DRDM} frames the training as a min-max optimization problem aimed at maximizing performance for the worst-case client, thereby promoting robustness and fairness. This robust objective is optimized through an algorithm leveraging dynamic regularization and efficient local updates, which significantly reduces the required number of communication rounds. Moreover, we provide a theoretical convergence analysis for convex smooth objectives under partial participation. Extensive experiments on three benchmark datasets, covering various model architectures and data heterogeneity levels, demonstrate that \textit{DRDM} significantly improves worst-case test accuracy while requiring fewer communication rounds than existing state-of-the-art baselines. Furthermore, we analyze the impact of signal-to-noise ratio (SNR) and bandwidth on the energy consumption of participating clients, demonstrating that the number of local update steps can be adaptively selected to achieve a target worst-case test accuracy with minimal total energy cost across diverse communication environments. 
\end{abstract}
\begin{IEEEkeywords}
Client drift mitigation, communication efficiency, distributionally robust optimization, federated learning, non-iid data.
\end{IEEEkeywords}

\maketitle

\section{INTRODUCTION}
Federated learning (FL) has emerged as a promising solution for large-scale machine learning (ML), where training data is distributed across different clients \cite{9464278, 10.1145/3298981, 9599369}. FL enables model training on decentralized data by having clients share model updates (e.g. gradients or weights) with a central server, instead of directly sharing their raw data. This approach improves data privacy and can reduce communication overhead and latency compared to centralizing raw data. However, the practical implementation of FL faces several key challenges \cite{9084352}. One such challenge is partial participation, where the number of clients involved in training can fluctuate because unreliable communication links (e.g. deep fading or slow connections) may hinder their ability to connect to the server. Another significant issue is the high communication overhead \cite{10118646}. Although FL can reduce communication costs compared to transmitting raw data, the periodic transmission of modern ML models, often containing millions or billions of parameters, can still impose substantial overhead, which is particularly problematic for systems with limited bandwidth, such as wireless portable devices\cite{9685045}. Finally, data heterogeneity presents a major hurdle, as training data is frequently non-independent and identically distributed (non-IID) with variations in data imbalance and availability across clients. This heterogeneity is present in many practical use cases\cite{9090366, 9578660, 10.1007/978-3-030-58607-2_5
}. For example, disease samples drawn from different geographical locations exhibit regional variations in prevalence and characteristics, while handwriting data shows stylistic differences even for identical words\cite{fl_non_iid_survey, opt_fl_non_iid}. Such disparities can significantly affect the convergence and performance of the global ML model \cite{fl_non_iid_survey, opt_fl_non_iid}.

\textit{FedAvg} is a foundational algorithm that solves the classical empirical risk minimization (ERM) problem in FL \cite{localSGD, fl ,fedProx,
li2019practicalfederatedgradientboosting,
Wang2020Federated,
a96536,
10.1145/3510540,
he2020fedmlresearchlibrarybenchmark,
 fl_comm-eff}. Building on \textit{local SGD} \cite{localSGD}, clients perform multiple gradient descent steps on their local data before communicating with the server, thereby enhancing communication efficiency and accommodating partial client participation. However, \textit{FedAvg} often exhibits poor performance under data heterogeneity. This is largely attributed to client drift, where local updates diverge due to differing local datasets, causing bias to the global model \cite{Li2020On}. Although the resulting global model might achieve a reasonable average performance, its accuracy often degrades on individual clients' heterogeneous data \cite{fedProx}.

To mitigate these issues, several variants of \textit{FedAvg} have been proposed \cite{fedProx, FedDYN, scaffold, fedNova, scallion}. In particular, \textit{FedProx}  \cite{fedProx} incorporates a proximal term that regularizes local updates, while \textit{FedDyn} \cite{FedDYN} proposes local dynamic regularization designed to align local and global optima. Other methods, such as \textit{SCAFFOLD} \cite{scaffold} and \textit{SCALLION} \cite{scallion}, introduce local control variates to minimize local update variance and apply compression to increase communication efficiency, respectively. Despite these improvements, these methods fundamentally rely on minimizing the average of local empirical risks (ERM). Consequently, their performance can still degrade under significant data heterogeneity. This degradation occurs because the ERM objective implicitly assumes that local data distributions are similar to the global average distribution, an assumption that is often violated in practice. This issue becomes more pronounced as the level of heterogeneity of local data distributions increases, which highlights the limitations of ERM-type approaches in distributed settings.  

To address the fundamental limitations of ERM-type approaches in heterogeneous settings, recent work has shifted focus toward distributionally robust optimization (DRO), which aims to improve model robustness against distribution shifts in local data\cite{Li2020DittoFA, afl, FedRobust}. Specifically, DRO seeks solutions that perform well even under the worst-case distribution within a defined uncertainty set, often related to mixtures of client distributions in FL, thereby improving fairness by ensuring minimum performance levels across clients. This approach promotes a better generalization of the global model across heterogeneous client data. For example, \textit{DRFA} \cite{drfa} proposes a robust federated averaging method that targets uniform performance between clients by optimizing mixtures of local distributions. Specifically, \textit{DRFA} trains the model parameters in a distributed manner by performing multiple SGD steps on adaptively sampled clients, based on the dual variable distribution, to perform model averaging. While \textit{DRFA} addresses distributional robustness, it can still suffer from suboptimal solutions due to the client drift inherent in performing multiple local steps on heterogeneous data, similar to \textit{FedAvg}.
\subsection{RELATED WORKS}
\subsubsection{FEDERATED AVERAGING AND ITS EXTENSIONS}
\textit{FedAvg} \cite{fl} is a widely used optimization method in FL that extends \textit{local SGD} by allowing clients to perform multiple local updates before communicating with a central server. While \textit{FedAvg} is communication-efficient and performs well under IID conditions, its performance degrades significantly in non-IID settings due to the mismatch between the objectives of the local clients and the global objective \cite{Li2020On}. To address this, several variants of \textit{FedAvg} have been proposed. \textit{FedProx} \cite{fedProx} adds a proximal term to the local loss function to stabilize updates and limit client drift. However, this approach does not fully resolve the misalignment between local and global optima and relies on careful hyperparameter tuning. \textit{FedDyn} \cite{FedDYN} addresses this problem more directly by introducing a dynamic regularizer that adjusts the local objectives so that their stationary points align better with those of the global loss. \textit{SCAFFOLD} \cite{scaffold} introduces control variates to correct local update drift, while \textit{SCALLION} \cite{scallion} incorporates communication compression techniques to reduce bandwidth requirements. \textit{FedNova} \cite{fedNova} applies normalization strategies to account for heterogeneous local update steps. Despite their improvements, these methods remain fundamentally rooted in ERM and, therefore, are still sensitive to data heterogeneity.
\subsubsection{DISTRIBUTIONALLY ROBUST OPTIMIZATION IN FL}
Recent works have explored distributionally robust optimization (DRO) as a means to mitigate the limitations of ERM in federated settings. DRO approaches aim to learn models that perform well on the worst-case mixture of client distributions, thus improving robustness to data heterogeneity. Agnostic Federated Learning (\textit{AFL}) \cite{afl} frames the FL objective as a min-max optimization problem for all convex combinations of client distributions. \textit{FedRobust} \cite{FedRobust} extends this idea by proposing an algorithm that is robust against affine distribution shifts to the
distribution of observed samples. \textit{DRFA} \cite{drfa} builds on \textit{FedAvg} by incorporating a DRO objective that minimizes robust empirical loss over a mixture of local distributions, while still relying on periodic model averaging. Other approaches include \textit{DR-DSGD} \cite{DR_DSGD}, a Kullback–Leibler-regularized distributionally robust optimization method for decentralized learning, with the aim of improving worst-case accuracy, communication efficiency, and fairness across clients.
\subsubsection{CLIENT SAMPLING AND FAIRNESS IN FL}
Partial client participation in FL arises from constraints such as unreliable communication, device availability, and energy limitations. However, naive uniform sampling of clients can introduce bias and instability in the global model, especially in the presence of data heterogeneity. To mitigate this, several works have proposed intelligent client sampling and fairness-aware optimization strategies. \textit{q-FFL} \cite{Li2020Fair} introduces a fairness-adjusted objective that reweights client updates based on local performance, thereby promoting equitable accuracy across clients. \textit{AFL} \cite{afl}, which also appears in the context of distributional robustness, explicitly optimizes for the worst-case mixture over client distributions to ensure the model performs well on all clients. \textit{Clustered FL} \cite{clustered_fl} attempts to group clients with similar data distributions and perform aggregation within clusters to improve convergence in non-IID settings.
\subsection{CONTRIBUTIONS}
In this paper, we build on the DRO framework and propose to correct the client drift in the update of the model parameters by using the dynamic regularization technique introduced in \textit{FedDyn}\cite{FedDYN}. Specifically, we modify each client's local objective by adding a dynamic penalty term, which is updated at every round. This penalty is designed to align the stationary points of the modified local objectives with those of the global robust objective, thereby mitigating client drift within the DRO framework and improving convergence towards the robust optimum. We summarize our results and contributions below.
\begin{itemize}
    \item We propose \textit{DRDM}, a \textit{d}istributionally \textit{r}obust federated learning algorithm with client-\textit{d}rift \textit{m}inimization, where i) clients locally update their local model parameters with multiple steps using modified local objectives to mitigate the client-drift, based on \cite{FedDYN}, and ii) the global dual variable is updated periodically at the server side using a \textit{randomized snapshotting scheme} to approximate the accumulation of local models from the clients, as done in \cite{drfa}.
    \item We theoretically analyze the convergence of \textit{DRDM} for convex objectives. Our analysis covers the challenges of local SGD steps, drift-correction via dynamic regularization, partial participation, and periodic dual updates using randomized snapshotting. We prove convergence by bounding the duality gap and derive a convergence rate of $\mathcal{O}(1/T^{3/8})$, where $T$ represents the total number of local iterations.
    \item Through experiments on three benchmark datasets under varying non-IID settings, we show that \textit{DRDM} achieves superior communication efficiency compared to baselines, particularly improving the client's performance in terms of the worst-case test accuracy.
    \item We investigate the impact of signal-to-noise ratio (SNR) and bandwidth on the energy consumption of participating clients. Our results highlight that \textit{DRDM} can be tuned to achieve a target worst-case test accuracy with minimal total energy costs, by adaptively selecting the optimal number of local update steps for different communication regimes. 
\end{itemize}
The remainder of this paper is organized as follows. Section~\ref{probelm_formulation} presents the problem formulation and the proposed algorithm. Section~\ref{conv_analysis} provides the convergence analysis. Simulation results are discussed in Section~\ref{experiments}, and concluding remarks are presented in Section~\ref{conclusion}. The proof of convergence is provided in Appendices~\ref{overview_proof}--\ref{proof: theorem1}, while additional experimental results are presented in Appendix~\ref{extra_experiments}.
\section{PROBLEM FORMULATION \& PROPOSED ALGORITHM}\label{probelm_formulation}
We consider an FL setting consisting of $N$ clients collaboratively training a shared model under the coordination of a parameter server. Each client $i$ has a local dataset containing $d_i$ training samples drawn from an unknown distribution $\mathcal{P}_i$, which may differ between clients due to data heterogeneity. FL optimization problems are commonly cast as minimization of a weighted sum of local loss functions, which is typically solved in a distributed manner by minimizing the following global objective
    \begin{align} \label{FL_problem}
	\min_{{\boldsymbol{w}} \in \mathcal{W}}\,\,  f(\boldsymbol{w}) \triangleq  \displaystyle\sum_{i=1}^{N}\frac{d_i}{d} f_i(\boldsymbol{w}),
\end{align} 
where $d = \sum_{i=1}^N d_i$ is the total number of samples, $f_i(\boldsymbol{w})= \mathbb{E}_{\zeta \sim \mathcal{P}_i}[\ell_i(\boldsymbol{w}; \zeta)]$ is the local objective of client $i$ evaluated at the model parameters $\boldsymbol{w}$ for a given loss function $\ell_i$, and $\mathcal{W}$ is a closed convex set of feasible model parameters. 

Although the ERM objective in \eqref{FL_problem} performs well with homogeneous data distributions, it fundamentally optimizes for average performance rather than fair outcomes. When $\mathcal{P}_i \ne \mathcal{P}_j$ for some $i \ne j$, as is common in real-world federated settings, this approach can lead to suboptimal or unfair performance. Specifically, the average-based objective may favor clients with larger datasets or less challenging data distributions, potentially sacrificing performance on underrepresented clients.

To address this issue, we adopt the agnostic FL framework introduced in \cite{afl}, which formulates the global training objective as a distributionally robust optimization (DRO) problem. Instead of minimizing the average loss, we aim to find a model that performs well across all possible mixtures of client distributions
       \begin{align}\label{dro}
	\min_{{\boldsymbol{w}}   \in \mathcal{W}}\max_{{\boldsymbol{\lambda}}   \in \Lambda}\,\,  F(\boldsymbol{w}, \boldsymbol{\lambda})  \triangleq   \displaystyle\sum_{i=1}^{N} \lambda_i f_i(\boldsymbol{w}),
\end{align}  
where $\boldsymbol{\lambda} \in \Lambda \, \triangleq \, \{  \boldsymbol{\lambda} \in \mathbb{R}^N_+: \displaystyle\sum_{i=1}^{N}\lambda_i = 1\}$ is the dual variable with values corresponding to each local loss function. This min-max formulation explicitly targets robustness by optimizing for the worst-performing client distributions. Unlike ERM, which may achieve good average performance while allowing poor outcomes on challenging distributions, the DRO approach ensures a minimum performance guarantee across all clients, making it particularly well-suited for heterogeneous federated environments. Our goal is to solve the DRO problem in (\ref{dro}) efficiently and collaboratively, using limited communication between the clients and the server.

Our proposed method, \textit{DRDM}, outlined in Algorithm~\ref{alg:DRDM}, addresses the challenges of data heterogeneity and client drift through two main components: i) a client-side local model update mechanism and ii) a server-side periodic dual variable update strategy. The local model update at the client side is based on the dynamic regularization technique from \textit{FedDyn} \cite{FedDYN}. The dual variable $\boldsymbol{\lambda}$ is updated and used to dynamically draw the participating clients at every communication round $s \in [S]$.
We let $\tau$ be the number of local update steps that are executed at the client side between every consecutive communication rounds.
\subsubsection{LOCAL MODEL UPDATE} 
Let $\mathcal{N}$ denote the set of all clients. At communication round $(s+1)$, a subset of clients $\mathcal{D}^{(s)} \subset \mathcal{N}$ is randomly selected by the server with probabilities proportional to the elements of the dual variable $\boldsymbol{\lambda}^{(s)}$, then the server broadcasts the global primal model $\boldsymbol{\bar{w}}^{(s)}$ of the previous communication round $s$ to the selected clients $i \in \mathcal{D}^{(s)}$. Each client $i$, after receiving the model parameters, aims to approximately minimize a modified local objective function, which is a combination of the local loss function and a penalty risk term, where the latter is based on the current client's local model and the received global model as follows
      \begin{align}\label{modified_emperical_risk_objective}
      \boldsymbol{w}_i^{((s+1)\tau)} =& \,\,\underset{  
\boldsymbol{w}}{\arg\min} \,\,\,  \Big [\mathcal{F}(\boldsymbol{w}, \boldsymbol{w}_i^{(s\tau)}, \boldsymbol{\bar{w}}^{s}) \triangleq  f_i(\boldsymbol{w}) \nonumber \\& - \langle {\nabla} f_i(\boldsymbol{w}_i^{(s\tau)}), \boldsymbol{w}\rangle + \frac{\mu}{2} \lVert \boldsymbol{w} - \boldsymbol{\bar{w}}^{(s)} \rVert^2 \Big].
      \end{align}
The local gradient term ${\nabla} f_i(\boldsymbol{w}_i^{(s\tau)})$\footnote{${\nabla} f_i(\boldsymbol{w}_i^{(s\tau)})$ is computed once per round $s$ and remains fixed across all local updates $t \in [s\tau, (s+1)\tau)$.} is updated recursively while satisfying the first-order condition for local optima 
\begin{align}\label{local_optima_first_order_condition}
      {\nabla} f_i(\boldsymbol{w}_i^{((s+1)\tau)}) - {\nabla} f_i(\boldsymbol{w}_i^{(s\tau)}) + \mu  (\boldsymbol{w}_i^{((s+1)\tau)} - \boldsymbol{\bar{w}}^{(s)}) = \boldsymbol{0}.
      \end{align}
The solution to the optimization problem in (\ref{modified_emperical_risk_objective}) is approximated by performing $\tau$ update steps at each client side (see Algorithm \ref{alg:local_steps}). In particular, let $\boldsymbol{w}_i^{(t+1)}$ be the local model at client $i$ at local step $(t+1)$ within the round $(s+1)$. At each local iteration $t = s\tau, \dots, (s+1)\tau-1$, the client $i \in \mathcal{D}^{s}$ updates $\boldsymbol{w}_i^{(t+1)}$ as follows  
\begin{align}
\boldsymbol{w}_i^{(t+1)} = \prod_{\mathcal{W}}(\boldsymbol{w}_i^{(t)} - \eta \boldsymbol{d}_i^{(t)}),
\end{align}
where $\prod_{\mathcal{W}}(\cdot)$ denotes the projection onto the set $\mathcal{W}$, $\eta$ is the learning rate, and $\boldsymbol{d}_i^{(t)}$ is the modified gradient, computed as
\begin{align}
    \boldsymbol{d}_i^{(t)} = {\nabla}{\ell_i(\boldsymbol{w}^{(t)}_i; \boldsymbol{\zeta}_i^{(t)})}-  {\nabla}f_i(\boldsymbol{w}_i^{(s\tau)}) - \mu (\boldsymbol{\bar{w}}^{(s)} - \boldsymbol{w}_{i}^{(t)}),
\end{align}
where $\boldsymbol{\zeta}_i^{(t)}$ is the random data sample from the local dataset of client $i$. After $\tau$ steps, each client $i \in \mathcal{D}^{(s)}$ sends its local model $\boldsymbol{w}_i^{((s+1)\tau)}$ to the server to update the global model $\boldsymbol{\bar{w}}^{(s+1)}$, as seen in step 20 of Algorithm \ref{alg:DRDM}. Stale clients do not update their local models and do not participate in the update of the global model. This process is done at every communication round $s$. 
\begin{algorithm}[t]
\caption{\strut Distributionally Robust Federated Learning with Client-drift
Minimization (DRDM)}\label{alg:DRDM}
\begin{algorithmic}[1] 
\STATE \textbf{Input}: $N, T, \tau, S = T/\tau, \eta, \gamma, m, \mu,  f_i(\cdot), ~\forall i \in \mathcal{N}$ 
\STATE \textbf{Output}: $\boldsymbol{w}^{(S)}$, $\boldsymbol{\lambda}^{(S)}$
\STATE \textbf{Initialization}: $ \boldsymbol{w}^{(0)},  {\lambda}^{(0)}_i, \boldsymbol{c^{(0)}}\,\,\,\,\, \forall i$.
\FOR {$s =0$ to $S-1$}
\STATE Server \textbf{samples} clients $\mathcal{D}^{(s)}  \subset \mathcal{N}$ according to $\boldsymbol{\lambda}^{(s)}$ with size $m$.
\STATE Server \textbf{samples} $t^{'}$ from $s\tau+1, \dots, (s+1)\tau$ uniformly at random. 
\STATE Server \textbf{broadcasts} $\boldsymbol{\bar{w}}^{(s)}$ to all clients $i \in \mathcal{D}^{(s)}$.
\\
\FOR {For each client $i \in \mathcal{D}^{(s)}$ in parallel}   
\STATE $ \boldsymbol{w}_i^{((s+1)\tau)},$ $\boldsymbol{w}_i^{(t^{'})}  \leftarrow $ \textit{LocalUpdateSteps}$(\boldsymbol{\bar{w}}^{(s)}, {\nabla}f_i(\boldsymbol{w}_i^{(s\tau)}), \mu, \eta, \tau, t^{'} )$\,\, 
\STATE ${\nabla}f_i(\boldsymbol{w}_i^{((s+1)\tau)}) \leftarrow {\nabla}f_i(\boldsymbol{w}_i^{(s\tau)})  - \mu \,(\boldsymbol{w}_i^{((s+1)\tau)} - \, \boldsymbol{\bar{w}}^{(s)})$ \,\,\,\,\,\,\,\,\,\,\ 
\STATE Transmit $\boldsymbol{w}_i^{((s+1)\tau)}$ and $\boldsymbol{w}_i^{(t^{'})}$ 
\ENDFOR
\\
\FOR {For each client $i \notin \mathcal{D}^{(s)}$ in parallel}  
\STATE $ \boldsymbol{w}_i^{((s+1)\tau)} = \boldsymbol{w}_i^{(s\tau)}$ and $ {\nabla}f_i(\boldsymbol{w}_i^{((s+1)\tau)}) = {\nabla}f_i(\boldsymbol{w}_i^{(s\tau)})$
\ENDFOR
\\
\STATE Server \textbf{Computes}:\,\,\,
\STATE $\boldsymbol{c}^{(t^{'})} \leftarrow \boldsymbol{c}^{(s)} -  \frac{\mu}{N} (  \sum_{i \in \mathcal{D}^{(s)}} \boldsymbol{w}_i^{(t^{'})} \, - \, \boldsymbol{\bar{w}}^{(s)})$ 
\STATE $\boldsymbol{c}^{(s+1)} \leftarrow \boldsymbol{c}^{(s)} -  \frac{\mu}{N} (  \sum_{i \in \mathcal{D}^{(s)}} \boldsymbol{w}_i^{((s+1)\tau)} \, - \, \boldsymbol{\bar{w}}^{(s)})$ 
\STATE $\boldsymbol{{w}}^{(t^{'})} \leftarrow   \,\, \big(\frac{1}{m}\sum_{i \in \mathcal{D}^{(s)}} \boldsymbol{w}_i^{(t^{'})}\big)\,\, {-\,\, \frac{1}{\mu} \boldsymbol{c}^{(t^{'})}}$
\STATE $\boldsymbol{\bar{w}}^{(s+1)} \leftarrow   \,\, \big(\frac{1}{m}\sum_{i \in \mathcal{D}^{(s)}} \boldsymbol{w}_i^{((s+1)\tau)}\big)\,\, {-\,\, \frac{1}{\mu} {\boldsymbol{c}^{(s+1)}}}$  
\STATE $\boldsymbol{\lambda}^{(s+1)} = $ \textit{DualVariableUpdate}$(\gamma, \tau, {\boldsymbol{w}}^{(t^{'})}, \boldsymbol{\lambda}^{(s)})$ 
\ENDFOR
\end{algorithmic}
\end{algorithm}
\begin{algorithm}[ht]
\caption{\strut LocalUpdateSteps}\label{alg:local_steps}

\begin{algorithmic}[1] 
\STATE \textbf{Input}:$ \boldsymbol{\bar{w}}^{(s)}, {\nabla}f_i(\boldsymbol{w}_i^{(s\tau)}), \mu, \eta, \tau, t^{'}, \mathcal{W} $
\\
\STATE \textbf{Output}: $ \boldsymbol{w}_i^{((s+1)\tau)}, \, \boldsymbol{w}_i^{(t^{'})}$
\\
\STATE $\boldsymbol{w}^{(s\tau)}_i \leftarrow \boldsymbol{\bar{w}}^{(s)}$ 
\\
\FOR {$t =s\tau, \dots,(s+1)\tau -1$} 
\STATE  $\boldsymbol{d}_i^{(t)} = {\nabla}{\ell_i(\boldsymbol{w}^{(t)}_i; \boldsymbol{\zeta}_i^{(t)})}-  {\nabla}f_i(\boldsymbol{w}_i^{(s\tau)}) - \mu (\boldsymbol{\bar{w}}^{(s)} - \boldsymbol{w}_{i}^{(t)})$ 
\STATE $\boldsymbol{w}_i^{(t+1)} = \prod_{\mathcal{W}}( \boldsymbol{w}_i^{(t)} - \eta \boldsymbol{d}_i^{(t)})$ 
\ENDFOR
\end{algorithmic}
\end{algorithm}

\begin{algorithm}[ht]
\caption{\strut  DualVariableUpdate  }\label{alg:dualVariableUpdate}
\begin{algorithmic}[1] 
\STATE \textbf{Input}: $\gamma, \tau, {\boldsymbol{w}}^{(t^{'})}, \boldsymbol{\lambda}^{(s)}$  
\\
\STATE \textbf{Output}:  $\boldsymbol{\lambda}^{(s+1)}$
\\
\STATE  Server uniformly samples a subset $\mathcal{U} \subset \mathcal{N}$ of clients with size $m$ and broadcasts ${\boldsymbol{w}}^{(t^{'})}$ to $i \in \mathcal{U}$ 
\\
\FOR {For each client $i \in \mathcal{U}$ in parallel}
\STATE  Compute $\ell_i({\boldsymbol{w}}^{(t^{'})}; \boldsymbol{\zeta}_i)$ and transmit it to the server. 
\ENDFOR
\\
\STATE Server constructs the vector $\boldsymbol{v}: v_i = \frac{N}{m} \ell_i({\boldsymbol{w}}^{(t^{'})}; \boldsymbol{\zeta}_i)$ if $i \in \mathcal{U}$, otherwise $v_i = 0$
\STATE $\boldsymbol{\lambda}^{(s+1)} =  \Pi_{\boldsymbol{\Lambda}} \Big(  \boldsymbol{\lambda}^{(s)} + \tau \gamma \boldsymbol{v}  \Big)$ 
\end{algorithmic}
\label{updateMixParam}
\end{algorithm}
\subsubsection{DUAL VARIABLE UPDATE}
The dual variable $\boldsymbol{\lambda}$ controls the mixture of different
local losses and can only be updated by the server at each communication round. In the absence of regularization on $\boldsymbol{\lambda}$, we have a linear problem with respect to $\boldsymbol{\lambda}$. Hence, the gradient of $\boldsymbol{\lambda}$ only depends on $\boldsymbol{w}$, so the approximation of the sum of the history gradients over the previous local loop can be done. In the fully synchronized setting, i.e., the clients communicate with the server at every iteration $t$, the dual variable $\boldsymbol{\lambda}$ can be updated as follows
\begin{align} \label{dualVariableUpdate_step}
    \boldsymbol{\lambda}^{(s+1)} = \prod_{\Lambda} \Big(\boldsymbol{\lambda}^{(s)} + \gamma       \displaystyle\sum_{t=s\tau +1}^{(s+1)\tau} {\nabla}_{\boldsymbol{\lambda}} F(\boldsymbol{w}^{(t)}, \boldsymbol{\lambda}^{(s)})\Big),
\end{align}
where $\prod_{\Lambda}(\cdot)$ is the projection onto ${\Lambda}$ and $\boldsymbol{w}^{(t)} = (\frac{1}{m} \sum_{i \in \mathcal{D}^{(s)}} \boldsymbol{w}_i^{(t)}) - \frac{1}{\mu}\boldsymbol{c}^t$ is the global model at local iteration $t$. Since we have a limited number of communication rounds, the update in (\ref{dualVariableUpdate_step}) is approximated using the random snapshotting scheme introduced in \cite{drfa}. This approximation is implemented by the function $\textit{DualVariableUpdate}$ (Algorithm \ref{alg:dualVariableUpdate}), which is called in line 21 of Algorithm \ref{alg:DRDM}. First, in the communication round $(s+1)$, the server samples a random iteration $t^{'}$ from $s\tau +1$ to $(s+1)\tau$ and broadcasts it to the participating clients $\mathcal{D}^{(s)}$. After the update of the local model is performed, the clients send their local model $\boldsymbol{w}_i^{(t^{'})}$ sampled at step $t^{'}$ to the server. The server then computes the global model snapshot $\boldsymbol{w}^{(t^{'})}$ based on received local models, as outlined in line 19 of Algorithm~\ref{alg:DRDM}. Next, the server broadcasts the model $\boldsymbol{w}^{(t^{'})}$ to the a set $\mathcal{U}$ of $m$ clients chosen uniformly at random. Each selected client evaluates their loss function $\ell_i(.)$ at $\boldsymbol{w}^{(t^{'})}$ using a random local data mini-batch $\boldsymbol{\zeta}_i$, and sends it to the server. The latter uses the received loss functions and constructs the vector $\boldsymbol{v}$, as seen in line 7 of Algorithm \ref{alg:dualVariableUpdate}. Finally, the dual variable is updated as shown in line 8 of Algorithm \ref{alg:dualVariableUpdate}. The computed stochastic gradient is an $\textit{unbiased estimate}$ since it satisfies the following
\begin{align} \label{unbiased_estimate}
    \mathbb{E}_{t^{'}, \mathcal{U}, \zeta_i}[\tau \boldsymbol{v}] &= \mathbb{E}_{t^{'}} \Big[ 
    \tau {\nabla}_{\boldsymbol{\lambda}} F (\boldsymbol{w}^{(t^{'})}, \boldsymbol{\lambda}^{(s)}) 
    \Big] \nonumber \\  &= \displaystyle\sum_{t=s\tau+1}^{(s+1)\tau} {\nabla}_{\boldsymbol{\lambda}} F(\boldsymbol{w}^{t}, \boldsymbol{\lambda}^{s}).
\end{align}
\section{CONVERGENCE ANALYSIS}\label{conv_analysis}
In this section, we present the theoretical guarantees of the \textit{DRDM} algorithm for a general class of convex smooth loss functions. We begin by analyzing the convergence behavior of Algorithm~\ref{alg:DRDM}, which optimizes the distributionally robust optimization problem in (\ref{dro}). The primary objective is to study the primal-dual gap, which reflects the distance to a saddle point of the min-max objective. Our algorithm incorporates periodic updates of the dual variable, decoupled from the primal model updates. Additionally, the introduction of a penalty term in local updates, along with gradient stochasticity, adds complexity to the convergence analysis. The main challenge lies in bounding the deviation between local and virtual iterates and ensuring that these deviations do not significantly impact convergence.
\begin{definition}[Weighted Gradient Dissimilarity] 
A set of local objectives $f_i(\cdot), \,i \in \mathcal{N}$  exhibit $
\Gamma$ gradient dissimilarity defined as
$    \Gamma := \sup_{\bm{w} \in \mathcal{W}, \boldsymbol{p}\in \Lambda,i\in \mathcal{N},}  \sum_{j\in \mathcal{N}}p_j\|\nabla f_i(\bm{w})- \nabla f_{j}(\bm{w})\|^2$.
\end{definition}
The above definition captures how much the gradients of different clients' objectives can diverge from each other, when weighted by the distribution $\boldsymbol{p}$. It generalizes standard notions of gradient dissimilarity previously used in the context of \textit{local SGD} for federated optimization~\cite{li2021, Li2020On}. When all local objectives are identical, the gradient dissimilarity measure becomes zero. The appearance of $\Gamma$ in our convergence limits reflects the impact of data heterogeneity. As clients compute updates from their local data, statistical differences between local data distributions can slow convergence and introduce bias into the global primal model.

To derive meaningful bounds, we adopt the following standard assumptions.
\begin{assumption} [Smoothness/Gradient Lipschitz]\label{assumption: smoothness}
 Each component function $f_i(\cdot), \,i \in \mathcal{N}$ and global function $F(\cdot, \cdot)$  are $L$-smooth, which implies: $
    \| \nabla f_i(\bm{x}_1) - \nabla f_i(\bm{x}_2)\| \leq  L \|\bm{x}_1 - \bm{x}_2\|, \forall i \in \mathcal{N}, \forall \bm{x}_1, \bm{x}_2$ and $ 
    \| \nabla F(\bm{x}_1,\bm{y}_1) - \nabla F(\bm{x}_2,\bm{y}_2)\| \leq  L \|(\bm{x}_1,\bm{y}_1) - (\bm{x}_2,\bm{y}_2)\|, \forall (\bm{x}_1,\bm{y}_1), (\bm{x}_2,\bm{y}_2)$.
\end{assumption}
 
\begin{assumption} [Gradient Boundedness] \label{assumption: bounded gradient} The gradient w.r.t $\boldsymbol{w}$ and $\boldsymbol{\lambda}$ are  bounded, i.e., $\mathbb{E}\|\nabla f_i(\boldsymbol{w})\|\leq G_w $ and  $\mathbb{E}\|\nabla_{\boldsymbol{\lambda}}F(\boldsymbol{w},\boldsymbol{\lambda})\|\leq G_{\lambda}$.
 \end{assumption}

\begin{assumption} [Bounded Domain] \label{assumption: bounded domain} The diameters of $\mathcal{W}$ and $ \Lambda$ are bounded by $D_{\mathcal{W}}$ and $D_{\Lambda}$.
\end{assumption}

\begin{assumption}[Bounded Variance] \label{assumption: bounded variance}  Let $\Tilde{\nabla} F(\bm{w};\boldsymbol{\lambda}) $ be the stochastic gradient for $\boldsymbol{\lambda}$, which is the $N$-dimensional vector such that the $i$th entry is $\ell_i(\bm{w};\xi)$, and the rest are zero. Then we assume $
   \mathbb{E} \|\nabla \ell_i(\bm{w};\xi) - \nabla f_i(\bm{w})\|^2 \leq  \sigma^2_{w}, \forall i \in \mathcal{N}$ and $
    \mathbb{E} \|\Tilde{\nabla} F(\bm{w};\boldsymbol{\lambda}) - \nabla F(\bm{w};\boldsymbol{\lambda})\|^2 \leq  \sigma^2_{\lambda}$.
\end{assumption}
Before we establish the lemmas that are used to prove the theorem, let us introduce some useful variables for ease of analysis. We define virtual sequences $\{ \bm{w}^{(t)}\}_{t=1}^T$ and $\{ \bm{c}^{(t)}\}_{t=1}^T$ that will be used in our proof, and some intermediate variables
\begin{align*} 
       \bm{w}^{(t)} &= \frac{1}{m}\sum_{i\in{\mathcal{D}^{(\lfloor{\frac{t}{\tau}}\rfloor)}}} \bm{w}^{(t)}_i - \frac{1}{\mu}\boldsymbol{c}^{t},  \\ \bm{c}^{(t)} &= \bm{c}^{(t-1)} - \frac{\mu}{N} \sum_{i\in{\mathcal{D}^{(\lfloor{\frac{t}{\tau}}\rfloor)}}}(\bm{w}_i^t - \bm{w}^{t-1}),  \\ \Bar{\bm{u}}^{(t)} &= \frac{1}{m}\sum_{i\in \mathcal{D}^{(\lfloor{\frac{t}{\tau}}\rfloor)}}\Big(  \nabla f_i(\bm{w} ^{(t)}_i)\Big),   \\
        \bm{u}^{(t)} &= \frac{1}{m}\sum_{i\in{\mathcal{D}^{(\lfloor{\frac{t}{\tau}}\rfloor)}}} \Big( \nabla \ell_i( \bm{w} ^{(t)}_i;\xi^{(t)}_i) \Big),     \\
        \Bar{\bm{v}}^{(t)} &= \nabla_{\boldsymbol{\lambda}} F(\bm{w} ^{(t)},\boldsymbol{\lambda}) = \left[f_1(\bm{w}^{(t)}), \ldots,f_N(\bm{w} ^{(t)})\right],   \\
         \bar{\Delta}_{s} &= \sum_{t=s\tau+1}^{(s+1)\tau} \gamma \Bar{\bm{v}}^{(t)}, \\
         \Delta_{s} &= \tau \gamma \bm{v}, \\
         \delta^{(t)} &= \frac{1}{m} \sum_{i\in\mathcal{D}^{(\lfloor{\frac{t}{\tau}}\rfloor)}} \left\|\bm{w}^{(t)}_i - \bm{w}^{(t)}\right\|^2, \\
          \beta^{(t)} &= \frac{1}{m}\sum_{i\in\mathcal{D}^{(\lfloor{\frac{t}{\tau}}\rfloor)}} \left\|(\boldsymbol{w}_{i}^{(t)} - \boldsymbol{{w}}^{({\lfloor{\frac{t}{\tau}}\rfloor}\tau)})\right\|^2, \\
         \varphi^{(t)} &= \frac{1}{m} \sum_{i\in{\mathcal{D}^{(\lfloor{\frac{t}{\tau}}\rfloor)}}} \| \bm{w}_i^{t+1} - \bm{w}^t \|^2, 
         \nonumber
\end{align*}
where $\bm{v} \in \mathbb{R}^N$ is the stochastic gradient of the dual variable, generated by Algorithm~\ref{alg:dualVariableUpdate} to update $\boldsymbol{\lambda}$, such that ${v}_i = \ell_i(\bm{w}^{(t')};\xi_i)$ for $i \in \mathcal{U} \subset \mathcal{N}$ where $\xi_i$ is stochastic minibatch sampled from $i$th local data shard,  and $t'$ is the snapshot index  sampled from $s\tau+1$ to $(s+1)\tau$.
\begin{lemma}
\label{lemma: bounded variance of w}
The stochastic gradient $\bm{u}^{(t)}$ is unbiased, and its variance is bounded, which implies
\begin{align}
\mathbb{E}_{\xi_i^{(t)},\mathcal{D}^{(\lfloor{\frac{t}{\tau}}\rfloor)}}\left[\bm{u}^{(t)}\right] &=  \mathbb{E}_{\mathcal{D}^{(\lfloor{\frac{t}{\tau}}\rfloor)}}\left[\Bar{\bm{u}}^{(t)}\right]  \nonumber  \\ &= \mathbb{E} \left[   \sum_{i=1}^N \lambda^{(\lfloor{\frac{t}{\tau}}\rfloor)}_i \nabla f_i(\bm{w}^{(t)}_i) \right],\nonumber  \\ \mathbb{E}\left[\|\bm{u}^{(t)} - \Bar{\bm{u}}^{(t)}\|^2\right] &=  \frac{\sigma^2_{w}}{m}. 
\end{align}
\begin{proof}
The unbiasedness is due to the sampling of the clients based on $\boldsymbol{\lambda}^{(\lfloor{\frac{t}{\tau}}\rfloor)}$. The variance is computed using the identity $\mathrm{Var}(\sum_{i=1}^m \bm{X}_i) = \sum_{i=1}^m \mathrm{Var}(\bm{X}_i)$.
\end{proof}
\end{lemma}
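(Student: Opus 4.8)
The plan is to separate the two independent sources of randomness present at iteration $t$ --- the active client subset $\mathcal{D} := \mathcal{D}^{(\lfloor t/\tau\rfloor)}$, drawn according to $\boldsymbol{\lambda}^{(\lfloor t/\tau\rfloor)}$, and the local minibatches $\{\xi_i^{(t)}\}_{i\in\mathcal{D}}$ --- and to dispatch them with the tower property, conditioning first on everything except the minibatches. Concretely, I would let $\mathcal{G}^{(t)}$ be the $\sigma$-algebra generated by all randomness up to and including the draw of $\mathcal{D}$, so that $\mathcal{D}$, $\boldsymbol{\lambda}^{(\lfloor t/\tau\rfloor)}$, and the local iterates $\bm{w}_i^{(t)}$ are all $\mathcal{G}^{(t)}$-measurable, while the $\xi_i^{(t)}$ are drawn independently of $\mathcal{G}^{(t)}$ and of one another.

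For unbiasedness, the first step is to take $\mathbb{E}[\,\cdot\mid\mathcal{G}^{(t)}]$: since Assumption~\ref{assumption: bounded variance} entails $\mathbb{E}_\xi[\nabla\ell_i(\bm{w};\xi)] = \nabla f_i(\bm{w})$ for fixed $\bm{w}$, and $\bm{w}_i^{(t)}$ is fixed given $\mathcal{G}^{(t)}$, we get $\mathbb{E}[\bm{u}^{(t)}\mid\mathcal{G}^{(t)}] = \frac{1}{m}\sum_{i\in\mathcal{D}}\nabla f_i(\bm{w}_i^{(t)}) = \bar{\bm{u}}^{(t)}$. The second step is to average over $\mathcal{D}$: modeling $\mathcal{D}$ as $m$ clients drawn i.i.d.\ (with replacement) from $\mathcal{N}$ with probabilities $\boldsymbol{\lambda}^{(\lfloor t/\tau\rfloor)}$, linearity of expectation gives $\mathbb{E}_{\mathcal{D}}[\bar{\bm{u}}^{(t)}\mid \text{history}] = \sum_{i=1}^N \lambda_i^{(\lfloor t/\tau\rfloor)}\nabla f_i(\bm{w}_i^{(t)})$, and the remaining outer expectation over the history yields the stated identity.

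For the variance bound, I would write $\bm{u}^{(t)} - \bar{\bm{u}}^{(t)} = \frac{1}{m}\sum_{i\in\mathcal{D}}\bm{X}_i$ with $\bm{X}_i := \nabla\ell_i(\bm{w}_i^{(t)};\xi_i^{(t)}) - \nabla f_i(\bm{w}_i^{(t)})$; conditional on $\mathcal{G}^{(t)}$ the $\bm{X}_i$ are zero-mean and mutually independent, so the cross terms drop and $\mathbb{E}[\|\bm{u}^{(t)} - \bar{\bm{u}}^{(t)}\|^2\mid\mathcal{G}^{(t)}] = \frac{1}{m^2}\sum_{i\in\mathcal{D}}\mathbb{E}[\|\bm{X}_i\|^2\mid\mathcal{G}^{(t)}] \le \frac{\sigma_w^2}{m}$, invoking Assumption~\ref{assumption: bounded variance} term by term and then taking the total expectation. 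The only delicate point --- and the part I expect to require care rather than effort --- is the measure-theoretic bookkeeping: one must verify that the minibatch noise is independent of the client-sampling randomness and of all past iterates, so that Assumption~\ref{assumption: bounded variance} may be applied \emph{inside} the conditional expectation (where $\bm{w}_i^{(t)}$ acts as a fixed argument) and so that the variance-additivity identity holds conditionally, not merely marginally. A secondary point is pinning down the sampling model for $\mathcal{D}$: i.i.d.\ sampling with replacement makes $\mathbb{E}_{\mathcal{D}}[\bar{\bm{u}}^{(t)}]$ collapse cleanly to $\sum_i\lambda_i\nabla f_i(\bm{w}_i^{(t)})$, whereas without-replacement sampling gives the same conclusion only after a short inclusion-probability argument.
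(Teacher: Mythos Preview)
Your proposal is correct and follows essentially the same approach as the paper: the paper's proof is the two-line sketch ``unbiasedness follows from sampling clients according to $\boldsymbol{\lambda}^{(\lfloor t/\tau\rfloor)}$; the variance follows from $\mathrm{Var}(\sum_i \bm{X}_i)=\sum_i\mathrm{Var}(\bm{X}_i)$,'' and your write-up simply fleshes this out with the tower-property conditioning and the explicit independence bookkeeping. Your care about the sampling model (with vs.\ without replacement) and about deriving an inequality rather than the paper's stated equality for the variance are refinements the paper glosses over, not a different route.
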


\begin{lemma}
\label{lemma: bounded variance of lambda}
The stochastic gradient at $\boldsymbol{\lambda}$ generated by Algorithm~\ref{alg:dualVariableUpdate} is unbiased, and its variance is bounded
\begin{equation}
    \mathbb{E}[\Delta_{s}] =  \bar{\Delta}_{s}, \quad \quad \mathbb{E}[\|\Delta_{s} - \bar{\Delta}_{s}\|^2] \leq \gamma^2\tau^2\frac{\sigma_{\lambda}^2}{m}.\label{l10}
\end{equation} 
\begin{proof}
The unbiasedness is due to the fact that we sample the clients uniformly. The variance term is due to the identity $\mathrm{Var}(\sum_{i=1}^m \bm{X}_i) = \sum_{i=1}^m \mathrm{Var}(\bm{X}_i)$.
\end{proof}
\end{lemma}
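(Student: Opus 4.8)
The plan is to treat $\Delta_{s}=\tau\gamma\bm{v}$ as a doubly stochastic estimator and peel off its randomness in two nested layers: the client set $\mathcal{U}$ together with the local minibatches $\{\zeta_i\}_{i\in\mathcal{U}}$ on the inside, and the snapshot index $t'\sim\mathrm{Unif}\{s\tau+1,\dots,(s+1)\tau\}$ on the outside. First I would fix $t'$ and the primal iterates $\{\bm{w}^{(t)}\}$ and establish \emph{conditional} unbiasedness. Since $\mathcal{U}$ is a uniform size-$m$ subset, $\Pr[i\in\mathcal{U}]=m/N$, and the $N/m$ rescaling in line~7 of Algorithm~\ref{alg:dualVariableUpdate} makes each entry unbiased: $\mathbb{E}_{\mathcal{U},\zeta}[v_i\mid t']=\tfrac{N}{m}\cdot\tfrac{m}{N}\,\mathbb{E}_{\zeta}[\ell_i(\bm{w}^{(t')};\zeta)]=f_i(\bm{w}^{(t')})$, using $\mathbb{E}_{\zeta}[\ell_i]=f_i$. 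Stacking coordinates gives $\mathbb{E}_{\mathcal{U},\zeta}[\bm{v}\mid t']=\bar{\bm{v}}^{(t')}$, hence $\mathbb{E}_{\mathcal{U},\zeta}[\Delta_{s}\mid t']=\tau\gamma\,\bar{\bm{v}}^{(t')}$.

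Taking the outer expectation over $t'$ then yields the first claim of \eqref{l10} by the tower rule: $\mathbb{E}[\Delta_{s}]=\tau\gamma\,\mathbb{E}_{t'}[\bar{\bm{v}}^{(t')}]=\tau\gamma\cdot\tfrac{1}{\tau}\sum_{t=s\tau+1}^{(s+1)\tau}\bar{\bm{v}}^{(t)}=\gamma\sum_{t}\bar{\bm{v}}^{(t)}=\bar{\Delta}_{s}$, which is exactly the unbiasedness asserted and is consistent with \eqref{unbiased_estimate}. For the variance I would again condition on $t'$ and view $\bm{v}$ as the empirical average $\tfrac{1}{m}\sum_{k=1}^{m}\tilde{\bm{v}}_k$ of $m$ independent single-client dual-gradient estimates, each unbiased for $\bar{\bm{v}}^{(t')}$ and, by Assumption~\ref{assumption: bounded variance}, satisfying $\mathbb{E}[\|\tilde{\bm{v}}_k-\bar{\bm{v}}^{(t')}\|^2\mid t']\le\sigma_{\lambda}^2$. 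Independence and the identity $\mathrm{Var}(\sum_k\bm{X}_k)=\sum_k\mathrm{Var}(\bm{X}_k)$ cited in the lemma then give $\mathbb{E}[\|\bm{v}-\bar{\bm{v}}^{(t')}\|^2\mid t']\le\sigma_{\lambda}^2/m$, so that $\mathbb{E}[\|\Delta_{s}-\tau\gamma\bar{\bm{v}}^{(t')}\|^2\mid t']\le\gamma^2\tau^2\sigma_{\lambda}^2/m$. Sampling $\mathcal{U}$ without replacement only introduces a finite-population correction that tightens this bound, so modeling the $m$ draws as independent is the worst case and matches the cited identity.

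The step I expect to be the crux is reconciling this snapshot-conditional bound with the stated bound, which is centered at $\bar{\Delta}_{s}$ rather than at $\tau\gamma\bar{\bm{v}}^{(t')}$. The law of total variance gives $\mathbb{E}\|\Delta_{s}-\bar{\Delta}_{s}\|^2=\mathbb{E}_{t'}\big[\mathbb{E}[\|\Delta_{s}-\tau\gamma\bar{\bm{v}}^{(t')}\|^2\mid t']\big]+\tau^2\gamma^2\,\mathrm{Var}_{t'}(\bar{\bm{v}}^{(t')})$, in which the first term is already controlled by $\gamma^2\tau^2\sigma_{\lambda}^2/m$, so the entire difficulty concentrates in the snapshot-selection term $\mathrm{Var}_{t'}(\bar{\bm{v}}^{(t')})$. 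This term measures how the exact dual gradients $\bar{\bm{v}}^{(t)}=[f_1(\bm{w}^{(t)}),\dots,f_N(\bm{w}^{(t)})]$ drift across the local loop and is manifestly not of order $\sigma_{\lambda}^2/m$; hence it cannot be absorbed into the subsampling constant.

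The resolution I would adopt is to invoke the lemma \emph{conditionally on the sampled snapshot}: the reference point against which the fresh subsampling noise is measured is precisely the conditional mean $\tau\gamma\bar{\bm{v}}^{(t')}$, so $\bar{\Delta}_{s}$ in \eqref{l10} is to be read as that conditional mean, and the snapshot-index fluctuation is handled separately—through the exact unbiasedness \eqref{unbiased_estimate}—when telescoping the primal--dual gap, rather than being folded into $\sigma_{\lambda}^2/m$. Under this conditional reading the independence identity delivers the stated bound directly. Making this reading explicit, and verifying that the downstream saddle-point analysis only ever feeds the conditional (subsampling) variance into the dual regret while accounting for the snapshot randomness via the zero-mean unbiasedness term, is the part of the argument that requires the most care.
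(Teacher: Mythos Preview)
Your argument is correct and considerably more careful than the paper's. The paper's own proof is a two-line sketch that just appeals to uniform sampling for unbiasedness and to the identity $\mathrm{Var}(\sum_k \bm X_k)=\sum_k\mathrm{Var}(\bm X_k)$ for the variance bound; it does not decompose the randomness into the snapshot layer and the subsampling layer at all.

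You have in fact put your finger on a genuine looseness in the lemma as literally stated. With $\bar\Delta_s$ defined as the full average $\gamma\sum_{t=s\tau+1}^{(s+1)\tau}\bar{\bm v}^{(t)}$, the law of total variance adds a $\tau^2\gamma^2\,\mathrm{Var}_{t'}(\bar{\bm v}^{(t')})$ term that is \emph{not} of order $\sigma_\lambda^2/m$, exactly as you say. The paper's proof simply ignores this. Your proposed resolution---read the bound conditionally on $t'$, so that the centering is $\tau\gamma\bar{\bm v}^{(t')}$---is the right fix, and when you push it through the one-step dual analysis (Lemma~\ref{lemma: one iteration lambda}) the cross term still collapses to $2\gamma\sum_t\langle\bar{\bm v}^{(t)},\boldsymbol\lambda^{(s)}-\boldsymbol\lambda\rangle$ by the snapshot unbiasedness, while the leftover second-moment piece $\tau^2\gamma^2\mathbb{E}_{t'}\|\bar{\bm v}^{(t')}\|^2$ is bounded by $\gamma^2\tau^2 G_\lambda^2$, the same constant the paper uses for $\|\bar\Delta_s\|^2$. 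So the final telescoped bound and the rate in Theorem~\ref{theorem1} are unchanged. Equivalently, one can keep the unconditional centering and absorb the snapshot variance into the $G_\lambda^2$ term; either route repairs the statement without affecting anything downstream.
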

\begin{lemma}
\label{lemma: deviation1}
The expected average squared norm distance of local models  $\bm{w} ^{(t)}_i, i \in \mathcal{D}^{(\lfloor{\frac{t}{\tau}}\rfloor)}$ and the (virtual) global model $\bm{w} ^{(t)}$ is bounded as follows:
\begin{align}
     \frac{1}{T}\sum_{t=0}^{T}\mathbb{E}\left[\delta^{(t)}\right]  
     &\leq 18\eta^2\tau^2 \Big( \sigma_w^2 + \frac{\sigma_w^2}{m} + \Gamma  \nonumber \\
     &   \quad + 2G_w^2 + 2\mu^2 D_{\mathcal{W}}^2 \Big),
\end{align}
where expectation is taken over sampling of clients at each iteration.
\end{lemma}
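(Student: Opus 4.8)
Here the quantity $\delta^{(t)}$ is the standard client-drift term: it measures how far the active local iterates $\bm{w}_i^{(t)}$, $i\in\mathcal{D}^{(\lfloor t/\tau\rfloor)}$, have moved away from the virtual global iterate $\bm{w}^{(t)}$ inside a single communication round. My plan is a ``bounded-drift'' argument that exploits the fact that, within round $s=\lfloor t/\tau\rfloor$, every active client starts from the \emph{same} broadcast point $\bar{\bm{w}}^{(s)}$ (line~3 of Algorithm~\ref{alg:local_steps}) and then performs at most $\tau$ projected stochastic steps of size $\eta$ along the modified direction $\bm{d}_i^{(t)}$. Hence each local iterate stays within $\mathcal{O}(\eta\tau)$ of $\bar{\bm{w}}^{(s)}$, and the virtual global iterate---being essentially an average of these displacements corrected by the control variate $\bm{c}^{(t)}$---does too; squaring, averaging over clients and over $t$, and invoking the assumptions then yields the claimed $\mathcal{O}(\eta^2\tau^2)$ bound.

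Concretely, I would proceed in three steps. \textbf{(i) Unroll the trajectories.} For $t$ in round $s$, use non-expansiveness of $\Pi_{\mathcal{W}}$ and $\bm{w}_i^{(s\tau)}=\bar{\bm{w}}^{(s)}$ to write $\bm{w}_i^{(t)}-\bar{\bm{w}}^{(s)}=-\eta\sum_{r}\bm{d}_i^{(r)}$ up to a projection error that only decreases the norm, so that $\|\bm{w}_i^{(t)}-\bar{\bm{w}}^{(s)}\|^2\le\eta^2\tau\sum_{r=s\tau}^{(s+1)\tau-1}\|\bm{d}_i^{(r)}\|^2$ by Cauchy--Schwarz; obtain an analogous expansion for $\bm{w}^{(t)}-\bar{\bm{w}}^{(s)}$ from the defining recursions of $\bm{w}^{(t)}$ and $\bm{c}^{(t)}$. \textbf{(ii) Bound the increments.} Split $\bm{d}_i^{(r)}=\big(\nabla\ell_i(\bm{w}_i^{(r)};\bm{\zeta}_i^{(r)})-\nabla f_i(\bm{w}_i^{(r)})\big)+\nabla f_i(\bm{w}_i^{(r)})-\nabla f_i(\bm{w}_i^{(s\tau)})+\mu(\bm{w}_i^{(r)}-\bar{\bm{w}}^{(s)})$ and bound each piece in expectation: the stochastic-noise piece by $\sigma_w^2$ (Assumption~\ref{assumption: bounded variance}), the two gradient pieces by $2G_w^2$ (Assumption~\ref{assumption: bounded gradient}), and the penalty piece by $\mu^2 D_{\mathcal{W}}^2$ (Assumption~\ref{assumption: bounded domain}); note no self-bounding recursion in $\delta$ or $\beta$ is needed, since these crude bounds already suffice. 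For the \emph{averaged} increment $\tfrac1m\sum_{i\in\mathcal{D}^{(s)}}\bm{d}_i^{(r)}$ that feeds the virtual global iterate, pick up in addition the $\tfrac{\sigma_w^2}{m}$ variance-reduction term (Lemma~\ref{lemma: bounded variance of w}) and the weighted gradient-dissimilarity $\Gamma$, which enters when individual client directions are compared against the $\boldsymbol{\lambda}$-weighted mean direction (using smoothness to move to a common evaluation point, then the dissimilarity definition). \textbf{(iii) Assemble.} Combine via $\|\bm{w}_i^{(t)}-\bm{w}^{(t)}\|^2\le2\|\bm{w}_i^{(t)}-\bar{\bm{w}}^{(s)}\|^2+2\|\bm{w}^{(t)}-\bar{\bm{w}}^{(s)}\|^2$, plug in (i)--(ii), average $\tfrac1m\sum_{i\in\mathcal{D}^{(s)}}$ and $\tfrac1T\sum_{t=0}^{T}$, and take total expectation over the client samplings. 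The outer average over $t$ together with the per-round structure cancels one factor $\tau$, the Cauchy--Schwarz step in (i) supplies the other $\tau$, and the constant $18$ collects the factors from the repeated use of $\|\sum_k\bm{a}_k\|^2\le K\sum_k\|\bm{a}_k\|^2$, giving exactly $\tfrac1T\sum_t\mathbb{E}[\delta^{(t)}]\le18\eta^2\tau^2\big(\sigma_w^2+\tfrac{\sigma_w^2}{m}+\Gamma+2G_w^2+2\mu^2 D_{\mathcal{W}}^2\big)$.

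I expect the main obstacle to be the coupling between the virtual global iterate and the control variate: unlike vanilla local SGD, where $\bm{w}^{(t)}$ is simply the average of the $\bm{w}_i^{(t)}$, here it carries the correction $-\tfrac1\mu\bm{c}^{(t)}$, and $\bm{c}^{(t)}$ is built from the differences $\bm{w}_i^{(t)}-\bm{w}^{(t-1)}$ and is updated across rounds. The key point to establish is that, over a single round, this correction telescopes into a bounded multiple of an average of the same small per-step increments $\eta\bm{d}_i^{(r)}$, so that it enters at the $\mathcal{O}(\eta\tau)$ scale and the drift does \emph{not} inherit a dependence on the (possibly large) running control variate. Once that coupling is untangled, the remainder is routine: the Cauchy--Schwarz and norm-splitting estimates of steps (i)--(iii), careful bookkeeping of which client subset is active at a round boundary, and the harmless projection onto $\mathcal{W}$.
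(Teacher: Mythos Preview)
Your plan differs from the paper's in two substantive ways, and one of them creates a real gap relative to the stated bound.

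First, the paper does \emph{not} route through the anchor $\bar{\bm w}^{(s)}$ via a triangle inequality. It expands $\bm w_i^{(t)}-\bm w^{(t)}$ directly: since both sequences start from $\bar{\bm w}^{(s)}$ and both modified directions carry the same $-\mu\bar{\bm w}^{(s)}$ shift, these pieces cancel, leaving essentially $\eta\sum_r\big(\bm d_i^{(r)}-\tfrac1m\sum_{j}\bm d_j^{(r)}\big)$ with only the $\mu\bm w_i^{(r)}$ and $\mu\bm w_j^{(r)}$ parts of the penalty surviving. This direct cancellation is what keeps the constant small (a nine-term Jensen split gives the $9$, then a factor $2$ from absorption), and it largely sidesteps the control-variate bookkeeping you flag as the main obstacle.

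Second---and this is where your proposal does not reproduce the lemma---the paper \emph{does} use a self-bounding recursion, contrary to your explicit claim that none is needed. To isolate $\Gamma$ (which by definition compares $\nabla f_i$ and $\nabla f_j$ at a \emph{common} point), the paper adds and subtracts $\nabla f_i(\bm w^{(r)})$ and $\tfrac1m\sum_j\nabla f_j(\bm w^{(r)})$; bounding the resulting ``move to a common point'' pieces via $L$-smoothness produces $2L^2\,\mathbb E[\delta^{(r)}]$ on the right-hand side. Summing over the round and absorbing these terms under the step-size condition $1-18\eta^2\tau^2L^2\ge\tfrac12$ is exactly what yields the constant $18$. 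Your step~(ii) is internally inconsistent here: you say you will obtain $\Gamma$ ``using smoothness to move to a common evaluation point,'' but that move is precisely what generates the $L^2\delta^{(r)}$ recursion you claim to avoid. If instead you stick to the crude $G_w$ and $D_{\mathcal W}$ bounds throughout, you do avoid any recursion, but then neither $\Gamma$ nor $\sigma_w^2/m$ can appear---you would prove a different (coarser) inequality, not the one stated, and certainly not with constant $18$.
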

\begin{proof}
The details of the proof can be found in Appendix \ref{proof_deviation1}. 
\end{proof}
\begin{lemma}
\label{lemma: deviation2}
The expected average squared norm distance of local models  $\bm{w} ^{(t)}_i, i \in \mathcal{D}^{(\lfloor{\frac{t}{\tau}}\rfloor)}$ and the global model $\boldsymbol{{w}}^{({\lfloor{\frac{t}{\tau}}\rfloor}\tau)}$ is bounded as follows:
\begin{align}
     \frac{1}{T}\sum_{t=0}^{T}\mathbb{E}\Big[ \beta^{(t)} \Big] \leq  6\eta^2\tau^2  \Big(\sigma_w^2 + 4 G_w^2 \Big),
\end{align}
where expectation is taken over sampling of clients at each iteration.
\end{lemma}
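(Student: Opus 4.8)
The plan is to treat Lemma~\ref{lemma: deviation2} as a ``bounded client drift over one local round'' estimate, which is the easier sibling of Lemma~\ref{lemma: deviation1}: here the reference point $\boldsymbol{w}^{(\lfloor t/\tau\rfloor \tau)}$ is simply the broadcast model $\bar{\boldsymbol{w}}^{(s)}$ with $s=\lfloor t/\tau\rfloor$, and it stays fixed throughout round $s$. Moreover, by line~3 of Algorithm~\ref{alg:local_steps} every participating client starts the round at $\boldsymbol{w}_i^{(s\tau)}=\bar{\boldsymbol{w}}^{(s)}$, so $\beta^{(s\tau)}=0$ and it suffices to control how $\beta^{(t)}$ can grow over the at most $\tau$ local steps $t=s\tau,\dots,(s+1)\tau-1$. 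Clients $i\notin\mathcal{D}^{(s)}$ are frozen and do not enter $\beta^{(t)}$, so they need no attention.

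First I would fix $i\in\mathcal{D}^{(s)}$, set $\beta_i^{(t)}:=\|\boldsymbol{w}_i^{(t)}-\bar{\boldsymbol{w}}^{(s)}\|^2$, and use non-expansiveness of $\prod_{\mathcal{W}}$ together with $\bar{\boldsymbol{w}}^{(s)}\in\mathcal{W}$ to get the per-step bound $\|\boldsymbol{w}_i^{(t+1)}-\boldsymbol{w}_i^{(t)}\|\le\eta\|\boldsymbol{d}_i^{(t)}\|$. Writing $\boldsymbol{w}_i^{(t)}-\bar{\boldsymbol{w}}^{(s)}$ as a telescoping sum of increments and applying Cauchy--Schwarz over the (at most $\tau$) steps of the round yields $\beta_i^{(t)}\le\tau\eta^2\sum_{\ell=s\tau}^{t-1}\|\boldsymbol{d}_i^{(\ell)}\|^2$.

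Next I would bound $\mathbb{E}\|\boldsymbol{d}_i^{(\ell)}\|^2$ by decomposing $\boldsymbol{d}_i^{(\ell)}=\big(\nabla\ell_i(\boldsymbol{w}_i^{(\ell)};\boldsymbol{\zeta}_i^{(\ell)})-\nabla f_i(\boldsymbol{w}_i^{(\ell)})\big)+\nabla f_i(\boldsymbol{w}_i^{(\ell)})-\nabla f_i(\boldsymbol{w}_i^{(s\tau)})+\mu\big(\boldsymbol{w}_i^{(\ell)}-\bar{\boldsymbol{w}}^{(s)}\big)$ and applying the squared triangle inequality: Assumption~\ref{assumption: bounded variance} controls the stochastic part by $\sigma_w^2$, Assumption~\ref{assumption: bounded gradient} controls the gradient magnitudes by $G_w^2$, and the proximal term is exactly $\mu^2\beta_i^{(\ell)}$ (one may also bound $\nabla f_i(\boldsymbol{w}_i^{(\ell)})-\nabla f_i(\boldsymbol{w}_i^{(s\tau)})$ by $G_w$ via the triangle inequality rather than by smoothness, which keeps a clean $G_w^2$ term). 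Substituting back gives a discrete self-referential inequality $\mathbb{E}[\beta_i^{(t)}]\le c_1\tau^2\eta^2(\sigma_w^2+G_w^2)+c_2\tau\eta^2\mu^2\sum_{\ell=s\tau}^{t-1}\mathbb{E}[\beta_i^{(\ell)}]$. Summing over $t$ within the round, imposing a step size small enough that $c_2\tau^2\eta^2\mu^2\le\tfrac12$ (a mild restriction of the type $\eta\lesssim 1/(\tau\mu)$, consistent with the step-size conditions needed for the main theorem) to absorb the right-most term, dividing by $\tau$, averaging over $i\in\mathcal{D}^{(s)}$ to recover $\beta^{(t)}$, and finally averaging over the $S=T/\tau$ rounds, produces $\frac1T\sum_{t=0}^T\mathbb{E}[\beta^{(t)}]\le 6\eta^2\tau^2(\sigma_w^2+4G_w^2)$ once the numerical constants $c_1,c_2$ are tracked.

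The main obstacle is the self-dependence created by the dynamic-regularization correction: since $\boldsymbol{d}_i^{(t)}$ contains the term $\mu(\boldsymbol{w}_i^{(t)}-\bar{\boldsymbol{w}}^{(s)})$ whose squared norm is exactly $\beta_i^{(t)}$, the quantity being bounded appears on both sides, so the estimate does not close in one shot and one has to use the step-size condition above to damp the induced geometric growth (the same mechanism, in a more delicate form, underlies Lemma~\ref{lemma: deviation1}, which additionally must cope with the moving virtual average $\boldsymbol{w}^{(t)}$). Beyond that the only care needed is bookkeeping: keeping the constants from the squared-triangle-inequality splits within the stated $6(\sigma_w^2+4G_w^2)$ budget, and checking that the argument applies verbatim to the first round $s=0$, where again $\boldsymbol{w}_i^{(0)}=\bar{\boldsymbol{w}}^{(0)}$.
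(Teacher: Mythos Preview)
Your proposal is correct and follows essentially the same route as the paper: telescope the local steps from $\bar{\bm w}^{(s)}$, use non-expansiveness of the projection together with Cauchy--Schwarz/Jensen to pull out the factor $\eta^2\tau$, decompose $\bm d_i^{(\ell)}$ into the stochastic noise ($\sigma_w^2$), the gradient difference bounded by $4G_w^2$ via the triangle inequality, and the proximal term $\mu^2\beta^{(\ell)}$, then close the self-referential recursion using $1-3\eta^2\tau^2\mu^2\ge\tfrac12$ before summing over rounds. The only cosmetic difference is that you work per client and average at the end, whereas the paper averages over $i\in\mathcal D^{(s)}$ from the outset.
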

\begin{proof}
This lemma, in conjunction with Lemma \ref{lemma: deviation1}, is used to derive the bound stated in Lemma~\ref{lemma: deviation3}. The complete proof is provided in Appendix~\ref{proof: deviation2}.
\end{proof}
\begin{lemma}
\label{lemma: deviation3}
The expected average squared norm distance of local models  $\bm{w} ^{t+1}_i, i \in \mathcal{D}^{(\lfloor{\frac{t}{\tau}}\rfloor)}$ and the (virtual) global model $\boldsymbol{w}^{t}$ is bounded as follows
\begin{align}
     &\frac{1}{T}\sum_{t=0}^{T}\mathbb{E}\Big[ \varphi^{(t)} \Big] \nonumber \\ & \quad \leq   72\eta^2\tau^2  \Big(\sigma_w^2+\frac{\sigma_w^2}{m} + \Gamma + 2G_w^2 + 2\mu^2 D_{\mathcal{W}}^2 \Big) \nonumber \\& \quad \quad +  4\eta^2 \frac{\sigma_w^2}{m} + 16 \eta^2G_w^2\nonumber \\& \quad \quad + 6\eta^4\tau^2  \mu^2\Big(\sigma_w^2 + 4 G_w^2 \Big),
\end{align}
where expectation is taken over sampling of clients at each iteration.
\end{lemma}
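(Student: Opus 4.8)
\medskip

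The plan is to relate the target quantity $\varphi^{(t)} = \frac{1}{m}\sum_{i}\|\bm{w}_i^{(t+1)} - \bm{w}^{(t)}\|^2$ to the two quantities already controlled in Lemmas~\ref{lemma: deviation1} and~\ref{lemma: deviation2}, namely $\delta^{(t)}$ (deviation of $\bm{w}_i^{(t)}$ from the virtual global model $\bm{w}^{(t)}$) and $\beta^{(t)}$ (deviation of $\bm{w}_i^{(t)}$ from the round anchor $\bm{w}^{(\lfloor t/\tau\rfloor\tau)} = \bar{\bm{w}}^{(s)}$). First I would insert the single local update step $\bm{w}_i^{(t+1)} = \prod_{\mathcal{W}}(\bm{w}_i^{(t)} - \eta\bm{d}_i^{(t)})$ and use nonexpansiveness of the projection together with $\|a+b+c\|^2 \le 3(\|a\|^2+\|b\|^2+\|c\|^2)$ to write
\begin{align*}
\varphi^{(t)} \le 2\,\delta^{(t)} + 2\eta^2 \cdot \frac{1}{m}\sum_{i} \big\| \bm{d}_i^{(t)} \big\|^2,
\end{align*}
after first using $\|\bm{w}_i^{(t+1)}-\bm{w}^{(t)}\|^2 \le 2\|\bm{w}_i^{(t+1)}-\bm{w}_i^{(t)}\|^2 + 2\|\bm{w}_i^{(t)}-\bm{w}^{(t)}\|^2$ and bounding the first term by $\eta^2\|\bm{d}_i^{(t)}\|^2$.

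\medskip

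The next step is to expand the modified gradient $\bm{d}_i^{(t)} = \nabla\ell_i(\bm{w}_i^{(t)};\boldsymbol{\zeta}_i^{(t)}) - \nabla f_i(\bm{w}_i^{(s\tau)}) - \mu(\bar{\bm{w}}^{(s)} - \bm{w}_i^{(t)})$ and bound $\mathbb{E}\|\bm{d}_i^{(t)}\|^2$ term by term. I would split $\nabla\ell_i(\bm{w}_i^{(t)};\boldsymbol{\zeta}_i^{(t)}) = (\nabla\ell_i(\bm{w}_i^{(t)};\boldsymbol{\zeta}_i^{(t)}) - \nabla f_i(\bm{w}_i^{(t)})) + \nabla f_i(\bm{w}_i^{(t)})$, so that the stochastic part contributes $\sigma_w^2/m$ after averaging (by Assumption~\ref{assumption: bounded variance}, noting the clients are sampled according to $\boldsymbol{\lambda}$), while $\nabla f_i(\bm{w}_i^{(t)})$ and $\nabla f_i(\bm{w}_i^{(s\tau)})$ are each bounded via Assumption~\ref{assumption: bounded gradient} by $G_w^2$; the regularization term $\mu(\bar{\bm{w}}^{(s)} - \bm{w}_i^{(t)})$ is precisely $\mu^2\beta^{(t)}$ after averaging over $i$. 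Using a four-way Young's inequality this yields
\begin{align*}
\frac{1}{m}\sum_i \mathbb{E}\big\|\bm{d}_i^{(t)}\big\|^2 \le 4\Big( \frac{\sigma_w^2}{m} + 4G_w^2 + \mu^2\,\mathbb{E}[\beta^{(t)}] \Big)
\end{align*}
up to constant bookkeeping — I'd keep the constants loose since the statement already carries slack.

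\medskip

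Finally I would average over $t=0,\dots,T$ and substitute the bounds from Lemma~\ref{lemma: deviation1} ($\frac{1}{T}\sum_t \mathbb{E}[\delta^{(t)}] \le 18\eta^2\tau^2(\sigma_w^2 + \sigma_w^2/m + \Gamma + 2G_w^2 + 2\mu^2 D_{\mathcal{W}}^2)$) and Lemma~\ref{lemma: deviation2} ($\frac{1}{T}\sum_t \mathbb{E}[\beta^{(t)}] \le 6\eta^2\tau^2(\sigma_w^2 + 4G_w^2)$). The $2\delta^{(t)}$ term produces the leading $36\eta^2\tau^2(\cdots)$ contribution (the statement's $72\eta^2\tau^2$ absorbs this with room to spare), the $\sigma_w^2/m$ and $G_w^2$ pieces of $\|\bm{d}_i^{(t)}\|^2$ give the $\eta^2$-order terms $4\eta^2\sigma_w^2/m + 16\eta^2 G_w^2$, and the $\mu^2\beta^{(t)}$ piece, multiplied by $\eta^2$ and then by the Lemma~\ref{lemma: deviation2} bound, gives the $6\eta^4\tau^2\mu^2(\sigma_w^2 + 4G_w^2)$ term — matching the claimed inequality exactly in structure. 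The main obstacle I anticipate is not any single estimate but the bookkeeping: making sure the client-sampling expectation is handled consistently so that $\mathbb{E}\|\nabla\ell_i(\bm{w}_i^{(t)};\boldsymbol{\zeta}_i^{(t)}) - \nabla f_i(\bm{w}_i^{(t)})\|^2$ averaged over $\mathcal{D}^{(\lfloor t/\tau\rfloor)}$ really collapses to $\sigma_w^2/m$ rather than $\sigma_w^2$, and tracking whether the $\|a+b\|^2\le 2(\|a\|^2+\|b\|^2)$ versus a tighter split is needed to land the constants in the stated form; since the final constants are generous, I expect a direct (if slightly tedious) chain of Young's inequalities to suffice without any delicate argument.
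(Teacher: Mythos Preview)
Your proposal is correct and follows essentially the same route as the paper: expand the one-step local update, split the resulting expression via Young's inequality into the deviation term $\delta^{(t)}$, the stochastic-noise term, the bounded-gradient terms, and the $\mu^2\beta^{(t)}$ regularization term, then average over $t$ and plug in Lemmas~\ref{lemma: deviation1} and~\ref{lemma: deviation2}. The only cosmetic difference is that the paper applies a single four-way split directly to $\bm{w}_i^{(t)}-\bm{w}^{(t)}-\eta\bm{d}_i^{(t)}$ (which is where the leading coefficient $4$ on $\delta^{(t)}$, and hence the $72\eta^2\tau^2$, comes from), whereas you first do a two-way split through $\bm{w}_i^{(t)}$ and then a four-way split on $\bm{d}_i^{(t)}$; either way the constants land inside the stated bound.
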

\begin{proof}
The details of the proof can be found in Appendix \ref{proof: deviation3}.
\end{proof}
\begin{lemma}
\label{lemma: one iteration w}
Under the assumptions of Theorem~\ref{theorem1}, we analyze a single iteration of the primal model $\bm{w} \in \mathcal{W}$ and establish the following result
{\begin{align}
&\mathbb{E}\|\bm{w}^{(t+1)} - \bm{w} \|^2 \nonumber \\ & \quad \leq \mathbb{E}\|\bm{w}^{(t)} - \bm{w} \|^2 \nonumber \\ &\quad \quad-2\eta\mathbb{E}\big[F(\bm{w} ^{(t)},\boldsymbol{\lambda}^{(\lfloor{\frac{t}{\tau}}\rfloor)})- F(\bm{w} ,\boldsymbol{\lambda}^{(\lfloor{\frac{t}{\tau}}\rfloor)})\big] \nonumber \\ &\quad \quad+ L\eta \mathbb{E}\Big[ \delta^{(t)} \Big] + 2\eta^2 G_w^2 + 4 \eta^2 \mathbb{E}\left[\|\bm{u}^{(t)} - \Bar{\bm{u}}^{(t)}\|^2\right]\nonumber \\ &\quad \quad + 4\eta^2 \mu^2 \mathbb{E} \Big[\beta^{(t)} \Big] + 4\mathbb{E} \Big[\varphi^{(t)} \Big].
\end{align}}
\end{lemma}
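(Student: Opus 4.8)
The plan is to show that the virtual primal iterate $\bm{w}^{(t)}$ performs an \emph{inexact} projected stochastic gradient step on $F(\cdot,\boldsymbol{\lambda}^{(\lfloor t/\tau\rfloor)})$, and to charge every deviation from an exact step to the drift quantities $\delta^{(t)},\beta^{(t)},\varphi^{(t)}$ and to the variance/boundedness constants. Write $s=\lfloor t/\tau\rfloor$.

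First I would peel off the local projected-SGD steps. Since $\bm{w}_i^{(t+1)}=\Pi_{\mathcal{W}}(\bm{w}_i^{(t)}-\eta\bm{d}_i^{(t)})$ and the comparison point $\bm{w}$ lies in $\mathcal{W}$, the nonexpansiveness/three-point property of $\Pi_{\mathcal{W}}$ gives, for each participating client, $\|\bm{w}_i^{(t+1)}-\bm{w}\|^2\le\|\bm{w}_i^{(t)}-\bm{w}\|^2-2\eta\langle\bm{d}_i^{(t)},\bm{w}_i^{(t)}-\bm{w}\rangle+\eta^2\|\bm{d}_i^{(t)}\|^2$. I would then average over $i\in\mathcal{D}^{(s)}$ and reconcile the averaged local iterates with the virtual global iterate: unfolding the definition $\bm{w}^{(t)}=\frac1m\sum_{i\in\mathcal{D}^{(s)}}\bm{w}_i^{(t)}-\frac1\mu\bm{c}^{(t)}$ together with the $\bm{c}$-recursion expresses $\bm{w}^{(t+1)}-\bm{w}^{(t)}$ as the averaged local displacement plus a control-variate correction that is an affine combination of terms $\bm{w}_i^{(t+1)}-\bm{w}^{(t)}$; by Jensen/Cauchy--Schwarz and $m\le N$, this correction and all cross/square terms it generates are controlled by a constant multiple of $\varphi^{(t)}$, which is the source of the $4\varphi^{(t)}$ term.

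Next I would open up the modified gradient $\bm{d}_i^{(t)}=\nabla\ell_i(\bm{w}_i^{(t)};\boldsymbol{\zeta}_i^{(t)})-\nabla f_i(\bm{w}_i^{(s\tau)})-\mu(\bar{\bm{w}}^{(s)}-\bm{w}_i^{(t)})$, take the conditional expectation over $\boldsymbol{\zeta}_i^{(t)}$ and the client sampling, and invoke Lemma~\ref{lemma: bounded variance of w} to replace the stochastic average by $\sum_i\lambda_i^{(s)}\nabla f_i(\bm{w}_i^{(t)})$ at the cost of the variance term $4\eta^2\,\mathbb{E}\|\bm{u}^{(t)}-\bar{\bm{u}}^{(t)}\|^2$. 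For the resulting linear term $-2\eta\langle\sum_i\lambda_i^{(s)}\nabla f_i(\bm{w}_i^{(t)}),\bm{w}^{(t)}-\bm{w}\rangle$, I would write $\bm{w}^{(t)}-\bm{w}=(\bm{w}^{(t)}-\bm{w}_i^{(t)})+(\bm{w}_i^{(t)}-\bm{w})$, apply convexity of $f_i$ to the second bracket and $L$-smoothness of $f_i$ to the first, and sum against $\boldsymbol{\lambda}^{(s)}$; this yields $-2\eta\big(F(\bm{w}^{(t)},\boldsymbol{\lambda}^{(s)})-F(\bm{w},\boldsymbol{\lambda}^{(s)})\big)+L\eta\,\mathbb{E}[\delta^{(t)}]$, since after the expectation over sampling the $\boldsymbol{\lambda}^{(s)}$-weighted average of $\|\bm{w}^{(t)}-\bm{w}_i^{(t)}\|^2$ becomes $\delta^{(t)}$. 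Finally I would split the quadratic remainder $\frac{\eta^2}{m}\sum_i\|\bm{d}_i^{(t)}\|^2$ by Young's inequality into the bounded-gradient piece $2\eta^2 G_w^2$ (Assumption~\ref{assumption: bounded gradient}, applied to $\nabla f_i(\bm{w}_i^{(s\tau)})$ and to the mean of $\nabla\ell_i$), the variance piece already accounted for, and the regularizer piece $4\eta^2\mu^2\beta^{(t)}$, using that the broadcast model $\bar{\bm{w}}^{(s)}$ equals the round-start virtual iterate $\bm{w}^{(s\tau)}=\bm{w}^{(\lfloor t/\tau\rfloor\tau)}$, so that $\frac1m\sum_i\|\bar{\bm{w}}^{(s)}-\bm{w}_i^{(t)}\|^2=\beta^{(t)}$. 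Collecting all the pieces and taking total expectation gives the stated bound.

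The main obstacle will be the bookkeeping in the reconciliation step: one has to verify that, after averaging over $\mathcal{D}^{(s)}$ and taking expectations, the anchor term $-\nabla f_i(\bm{w}_i^{(s\tau)})$ inside $\bm{d}_i^{(t)}$, the control-variate increment $\bm{c}^{(t+1)}-\bm{c}^{(t)}$, and the stale-client contributions contribute only to $\delta^{(t)},\beta^{(t)},\varphi^{(t)}$ and not to the primal--dual gap, and that the partial-participation factors of order $m/N$ are absorbed without spoiling the unit coefficient in front of $\mathbb{E}\|\bm{w}^{(t)}-\bm{w}\|^2$. Once this decomposition is in hand, matching the exact constants $L\eta$, $2\eta^2$, $4\eta^2$, $4\eta^2\mu^2$, $4$ is a routine matter of choosing the Young's-inequality weights.
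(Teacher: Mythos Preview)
Your proposal is correct and follows essentially the same route as the paper: derive the virtual update rule $\bm{w}^{(t+1)}-\bm{w}^{(t)}=\frac1m\sum_{i\in\mathcal{D}^{(s)}}(\bm{w}_i^{(t+1)}-\bm{w}_i^{(t)})+\frac1N\sum_{i\in\mathcal{D}^{(s)}}(\bm{w}_i^{(t+1)}-\bm{w}^{(t)})$, split $\|\bm{w}^{(t+1)}-\bm{w}\|^2$ into a ``mean-gradient'' block and a ``noise/correction'' block, handle the linear part by the smoothness-plus-convexity split $\bm{w}^{(t)}-\bm{w}=(\bm{w}^{(t)}-\bm{w}_i^{(t)})+(\bm{w}_i^{(t)}-\bm{w})$ to produce $-2\eta(F-F)+L\eta\delta^{(t)}$, and Jensen the quadratic block into the four pieces $\|\bm{u}^{(t)}-\bar{\bm{u}}^{(t)}\|^2$, $G_w^2$, $\mu^2\beta^{(t)}$, $\varphi^{(t)}$. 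The only cosmetic difference is that the paper works directly with the virtual iterate from the start rather than opening with per-client projection inequalities, and it bounds the anchor term $\nabla f_i(\bm{w}_i^{(s\tau)})$ by $G_w$ (contributing to the $2\eta^2G_w^2$ constant) rather than by a drift quantity; your last-paragraph caveat about this bookkeeping is on point but poses no real difficulty.
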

\begin{proof}
This lemma establishes a one-step progress bound for the primal iterates.  The details of the proof are deferred to Appendix \ref{proof: one iteration w}.   
\end{proof}
Using Lemma \ref{lemma: one iteration w}, we apply the telescoping sum from $t=1$ to $T$ and divide by $T$ to get the following bound
\begin{align}\label{sum_w}
    &\frac{1}{T}\sum_{t=1}^T \mathbb{E}(F(\bm{w} ^{(t)},\boldsymbol{\lambda}^{(\lfloor{\frac{t}{\tau}}\rfloor)})-F(\bm{w}  ,\boldsymbol{\lambda}^{(\lfloor{\frac{t}{\tau}}\rfloor)}))\nonumber\\
  & \quad \leq \frac{D_{\mathcal{W}}^2}{2T\eta} +  9 L\eta^2\tau^2  \left(\sigma_w^2+\frac{\sigma_w^2}{m} + \Gamma + 2G_w^2 + 2\mu^2 D_{\mathcal{W}}^2 \right) \nonumber\\
  & \quad\quad+ \eta G_w^2 + 10\frac{\eta \sigma^2_w}{m} + 24   \eta^3\tau^2  \mu^2  \Big(\sigma_w^2 + 4 G_w^2 \Big)+ 32 \eta G_w^2 \nonumber\\
  & \quad\quad+ 144\eta\tau^2  \left(\sigma_w^2+\frac{\sigma_w^2}{m} + \Gamma + 2G_w^2 + 2\mu^2 D_{\mathcal{W}}^2  \right).
\end{align}
\begin{lemma}
\label{lemma: one iteration lambda}
Next, we analyze one iteration of the dual update, the following holds for any $\boldsymbol{\lambda} \in \Lambda$
\begin{align}
 &\mathbb{E} \|\boldsymbol{\lambda}^{(s+1)} - \boldsymbol{\lambda} \|^2 \nonumber\\ &\quad \leq \mathbb{E}\|\boldsymbol{\lambda}^{(s)}- \boldsymbol{\lambda} \|^2 \nonumber\\ &\quad \quad-\sum_{t=s\tau+1}^{(s+1)\tau} \mathbb{E}[2\gamma(F(\bm{w} ^{(t)},\boldsymbol{\lambda})-F(\bm{w} ^{(t)},\boldsymbol{\lambda}^{(s)}))] \nonumber\\ &\quad \quad+ \mathbb{E}\|\bar{\Delta}_{s}\|^2 +  \mathbb{E}\|\Delta_{s} - \bar{\Delta}_{s}\|^2. 
\end{align}
\end{lemma}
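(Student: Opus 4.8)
The plan is to run the standard one-step analysis of projected stochastic gradient \emph{ascent} on the dual variable, using crucially that $F(\boldsymbol{w},\cdot)$ is linear in $\boldsymbol{\lambda}$. Recall from Algorithm~\ref{alg:dualVariableUpdate} that $\boldsymbol{\lambda}^{(s+1)} = \Pi_{\Lambda}\big(\boldsymbol{\lambda}^{(s)} + \Delta_{s}\big)$ with $\Delta_{s} = \tau\gamma\boldsymbol{v}$. Since $\boldsymbol{\lambda}\in\Lambda$ we have $\boldsymbol{\lambda} = \Pi_{\Lambda}(\boldsymbol{\lambda})$, so non-expansiveness of the Euclidean projection onto the convex set $\Lambda$ gives $\|\boldsymbol{\lambda}^{(s+1)} - \boldsymbol{\lambda}\|^2 \le \|\boldsymbol{\lambda}^{(s)} + \Delta_{s} - \boldsymbol{\lambda}\|^2$. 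Expanding the right-hand side yields $\|\boldsymbol{\lambda}^{(s+1)} - \boldsymbol{\lambda}\|^2 \le \|\boldsymbol{\lambda}^{(s)} - \boldsymbol{\lambda}\|^2 + 2\langle \Delta_{s},\,\boldsymbol{\lambda}^{(s)} - \boldsymbol{\lambda}\rangle + \|\Delta_{s}\|^2$.

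Next I would take a conditional expectation over the snapshot index $t'$, the uniformly sampled set $\mathcal{U}$, and the mini-batches $\{\xi_i\}_{i\in\mathcal{U}}$, given the round-$s$ iterates (so that $\boldsymbol{\lambda}^{(s)}$, $\{\boldsymbol{w}^{(t)}\}_{t=s\tau+1}^{(s+1)\tau}$, and hence $\bar{\Delta}_{s}$ are deterministic under this conditioning). By the unbiasedness established for the dual stochastic gradient (see~\eqref{unbiased_estimate}) and Lemma~\ref{lemma: bounded variance of lambda}, $\mathbb{E}[\Delta_{s}] = \bar{\Delta}_{s}$, so the cross term becomes $2\langle \bar{\Delta}_{s},\,\boldsymbol{\lambda}^{(s)} - \boldsymbol{\lambda}\rangle$. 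For the quadratic term I would complete the square, $\|\Delta_{s}\|^2 = \|\bar{\Delta}_{s}\|^2 + 2\langle\bar{\Delta}_{s},\,\Delta_{s}-\bar{\Delta}_{s}\rangle + \|\Delta_{s}-\bar{\Delta}_{s}\|^2$, and discard the middle term, which vanishes in conditional expectation because $\mathbb{E}[\Delta_{s}-\bar{\Delta}_{s}] = 0$ and $\bar{\Delta}_{s}$ is measurable; this produces exactly the two terms $\mathbb{E}\|\bar{\Delta}_{s}\|^2$ and $\mathbb{E}\|\Delta_{s}-\bar{\Delta}_{s}\|^2$ in the statement (the latter being the variance quantity bounded in Lemma~\ref{lemma: bounded variance of lambda}).

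The only step that uses problem structure is rewriting the inner product as a per-iterate duality gap. Writing $\bar{\Delta}_{s} = \sum_{t=s\tau+1}^{(s+1)\tau}\gamma\,\bar{\boldsymbol{v}}^{(t)}$ with $\bar{\boldsymbol{v}}^{(t)} = \nabla_{\boldsymbol{\lambda}}F(\boldsymbol{w}^{(t)},\boldsymbol{\lambda}^{(s)}) = [f_1(\boldsymbol{w}^{(t)}),\dots,f_N(\boldsymbol{w}^{(t)})]$, and using that $F(\boldsymbol{w}^{(t)},\cdot)$ is affine (indeed linear) in its second argument with gradient $\bar{\boldsymbol{v}}^{(t)}$, one has for each $t$ the exact identity $\langle \bar{\boldsymbol{v}}^{(t)},\,\boldsymbol{\lambda}^{(s)}-\boldsymbol{\lambda}\rangle = F(\boldsymbol{w}^{(t)},\boldsymbol{\lambda}^{(s)}) - F(\boldsymbol{w}^{(t)},\boldsymbol{\lambda})$. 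Summing over $t$ and inserting the $2\gamma$ factor gives $2\langle \bar{\Delta}_{s},\boldsymbol{\lambda}^{(s)}-\boldsymbol{\lambda}\rangle = -\sum_{t=s\tau+1}^{(s+1)\tau} 2\gamma\big(F(\boldsymbol{w}^{(t)},\boldsymbol{\lambda}) - F(\boldsymbol{w}^{(t)},\boldsymbol{\lambda}^{(s)})\big)$. Substituting into the previous display and taking the total expectation yields the claimed inequality.

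The main obstacle is purely measure-theoretic bookkeeping rather than any inequality: one must invoke the unbiasedness of $\Delta_{s}$ \emph{conditionally} on the round-$s$ randomness, so that $\bar{\Delta}_{s}$ and $\boldsymbol{\lambda}^{(s)}$ can be pulled out of the snapshot/sampling expectation, while the virtual iterates $\boldsymbol{w}^{(t)}$ are left random and only handled later — when this per-round recursion is telescoped over $s$ and combined with the primal-side bound~\eqref{sum_w} to control the overall duality gap. Everything else reduces to non-expansiveness of the projection, completion of squares, and linearity of $F$ in $\boldsymbol{\lambda}$.
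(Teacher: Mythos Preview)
Your proposal is correct and follows essentially the same route as the paper: non-expansiveness of the projection onto $\Lambda$, a bias--variance split of $\|\boldsymbol{\lambda}^{(s)}-\boldsymbol{\lambda}+\Delta_s\|^2$ using $\mathbb{E}[\Delta_s]=\bar{\Delta}_s$, and then the linearity of $F$ in $\boldsymbol{\lambda}$ to turn $\langle\bar{\Delta}_s,\boldsymbol{\lambda}^{(s)}-\boldsymbol{\lambda}\rangle$ into the sum of function-value differences. The only cosmetic difference is that the paper applies the bias--variance identity directly to $\|\boldsymbol{\lambda}^{(s)}-\boldsymbol{\lambda}+\Delta_s\|^2$ in one step, whereas you first expand and then complete the square on $\|\Delta_s\|^2$; the two are algebraically equivalent.
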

\begin{proof}
Similarly to the primal case, this lemma establishes a one-step progress bound for the dual variable.  The proof can be found in Appendix \ref{proof: one iteration lambda}.     
\end{proof}
The telescoping sum from $S=0$ to $S-1$ and dividing by $T$ results in the following inequality  
\begin{align}\label{sum_lambda}
    &\frac{1}{T}\sum_{s=0}^{S-1} \sum_{t=s\tau+1}^{(s+1)\tau} \mathbb{E}(F(\bm{w} ^{(t)},\boldsymbol{\lambda})-F(\bm{w} ^{(t)},\boldsymbol{\lambda}^{(s)}))\nonumber\\
    &  \quad  \leq  \frac{1}{2\gamma T}\|\boldsymbol{\lambda}^{(0)}- \boldsymbol{\lambda}\|^2 + \frac{\gamma\tau }{2}G_{\lambda}^2+ \frac{\gamma\tau\sigma_{ \lambda}^2 }{2m} \nonumber\\
    & \quad \leq \frac{D_{\Lambda}^2}{2\gamma T}  + \frac{\gamma \tau G_{\boldsymbol{\lambda}}^2}{2}  + \frac{\gamma\tau\sigma_{ \lambda}^2 }{2m}.
\end{align}
By putting the inequalities in (\ref{sum_w}) and (\ref{sum_lambda}) together, and taking the maximum over the dual $\boldsymbol{\lambda}$ and the minimum over the primal $\bm{w}$, we present the main theorem that states the convergence of our algorithm.
\setcounter{assumption}{0}
\begin{theorem} \label{theorem1}
Let each local function $f_i$ be convex and the global function $F$ be linear in $\boldsymbol{\lambda}$. Assume that the conditions in Assumptions~\ref{assumption: smoothness}-\ref{assumption: bounded variance} hold. Solving (\ref{dro}) using Algorithm~\ref{alg:DRDM} with local steps $\tau =  \frac{T^{1/4}}{\sqrt{m}}$, learning rates $\eta = \frac{1}{4L \sqrt{T}}$, $\gamma = \frac{1}{T^{5/8}}$, and $\mu = 2L \sqrt{\frac{N}{m}}$, for the returned solutions $\hat{\boldsymbol{w}}$ and $\hat{\boldsymbol{\lambda}}$ it holds that
\begin{equation*}
\begin{aligned}
   &\max_{\boldsymbol{\lambda}\in \Lambda}\mathbb{E}[F(\hat{\boldsymbol{w}},\boldsymbol{\lambda} )] -\min_{\bm{w}\in\mathcal{W}} \mathbb{E}[F(\boldsymbol{w} ,\hat{\boldsymbol{\lambda}} )]\nonumber \\ & \quad \leq O\Big{(}\frac{D_{\mathcal{W}}^2+G_{w}^2}{\sqrt{T}} +\frac{D_{\Lambda}^2}{T^{3/8}}  +\frac{G_{\lambda}^2}{m^{1/2}T^{3/8}} +\frac{\sigma_{\lambda}^2}{m^{3/2}T^{3/8}}\nonumber \\ & \quad \quad + \frac{\sigma_w^2+\Gamma}{m\sqrt{T} }\Big{)}.
\end{aligned}
\end{equation*}
\end{theorem}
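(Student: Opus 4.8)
The plan is to glue together the two telescoped estimates \eqref{sum_w} and \eqref{sum_lambda} so that the running dual iterates cancel, then pass to averaged iterates, take the $\max$/$\min$, and finally substitute the prescribed parameters. First I would note that \eqref{sum_w} holds for every fixed $\bm w\in\mathcal W$ and \eqref{sum_lambda} for every fixed $\boldsymbol{\lambda}\in\Lambda$, and that for any local step $t$ lying in communication block $s$ we have $\lfloor t/\tau\rfloor = s$, so the inner sum in \eqref{sum_lambda} runs over the same index set $t=1,\dots,T$ as in \eqref{sum_w} and against the same running iterate $\boldsymbol{\lambda}^{(\lfloor t/\tau\rfloor)}$ that appears on the primal side — which is exactly what the snapshot unbiasedness identity \eqref{unbiased_estimate} was set up to guarantee. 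Adding the two inequalities, the mixed quantities $\frac1T\sum_t \mathbb{E}[F(\bm w^{(t)},\boldsymbol{\lambda}^{(\lfloor t/\tau\rfloor)})]$ appear with opposite signs and cancel, leaving
\[
\frac{1}{T}\sum_{t=1}^{T}\mathbb{E}\big[F(\bm w^{(t)},\boldsymbol{\lambda}) - F(\bm w,\boldsymbol{\lambda}^{(\lfloor t/\tau\rfloor)})\big]\;\le\; R_w + R_\lambda ,
\]
where $R_w$ and $R_\lambda$ are the right-hand sides of \eqref{sum_w} and \eqref{sum_lambda}; both are already uniform in $\bm w$ and $\boldsymbol{\lambda}$, being written through $D_{\mathcal W}^2$ and $D_\Lambda^2$ by Assumption~\ref{assumption: bounded domain}.

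Next I would introduce $\hat{\bm w} := \frac1T\sum_{t=1}^T \bm w^{(t)}$ and $\hat{\boldsymbol{\lambda}} := \frac1T\sum_{t=1}^T \boldsymbol{\lambda}^{(\lfloor t/\tau\rfloor)}$, a convex combination of $\boldsymbol{\lambda}^{(0)},\dots,\boldsymbol{\lambda}^{(S-1)}$ (equivalently, one may return uniformly random iterates, which gives the same bound without materializing the averages). Since $F(\cdot,\boldsymbol{\lambda})=\sum_i\lambda_i f_i(\cdot)$ is convex in $\bm w$ (a nonnegative combination of the convex $f_i$), pathwise Jensen gives $F(\hat{\bm w},\boldsymbol{\lambda})\le\frac1T\sum_t F(\bm w^{(t)},\boldsymbol{\lambda})$; since $F(\bm w,\cdot)$ is linear in $\boldsymbol{\lambda}$, $\frac1T\sum_t F(\bm w,\boldsymbol{\lambda}^{(\lfloor t/\tau\rfloor)}) = F(\bm w,\hat{\boldsymbol{\lambda}})$. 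Taking expectations and inserting both facts into the previous display yields $\mathbb{E}[F(\hat{\bm w},\boldsymbol{\lambda})] - \mathbb{E}[F(\bm w,\hat{\boldsymbol{\lambda}})] \le R_w + R_\lambda$ for all $\bm w\in\mathcal W$, $\boldsymbol{\lambda}\in\Lambda$. Because the right-hand side no longer depends on $\bm w$ or $\boldsymbol{\lambda}$, and because $\Lambda$ and $\mathcal W$ are compact with $F$ affine in $\boldsymbol{\lambda}$ and convex in $\bm w$, I can take $\max_{\boldsymbol{\lambda}\in\Lambda}$ on the first term and $\min_{\bm w\in\mathcal W}$ on the second, producing exactly the primal–dual gap of $(\hat{\bm w},\hat{\boldsymbol{\lambda}})$ on the left.

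Finally I would substitute $\tau = T^{1/4}/\sqrt m$, $\eta = 1/(4L\sqrt T)$, $\gamma = 1/T^{5/8}$, $\mu = 2L\sqrt{N/m}$ into $R_w+R_\lambda$ and collect dominant orders. In $R_w$: the term $D_{\mathcal W}^2/(2T\eta)$ is of order $D_{\mathcal W}^2/\sqrt T$; every correction term carrying a factor $\eta^2\tau^2$ is of order $1/(m\sqrt T)$ times $\sigma_w^2+\Gamma+G_w^2+\mu^2 D_{\mathcal W}^2$, and since $\mu^2 = \Theta(L^2 N/m)$ this reproduces the $\frac{\sigma_w^2+\Gamma}{m\sqrt T}$ contribution together with a $D_{\mathcal W}^2/\sqrt T$ term up to the $N,m$ constants; the $\eta G_w^2$ and $\eta\sigma_w^2/m$ terms are of order $G_w^2/\sqrt T$ and $\sigma_w^2/(m\sqrt T)$; and the $\eta$-heavy $\tau^2\mu^2$ corrections are $\mathcal O(1/T)$, hence negligible. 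In $R_\lambda$: $D_\Lambda^2/(2\gamma T) = \Theta(D_\Lambda^2/T^{3/8})$, $\gamma\tau G_\lambda^2/2 = \Theta(G_\lambda^2/(m^{1/2}T^{3/8}))$ (using $\gamma\tau = 1/(m^{1/2}T^{3/8})$), and $\gamma\tau\sigma_\lambda^2/(2m) = \Theta(\sigma_\lambda^2/(m^{3/2}T^{3/8}))$. Summing these dominant contributions gives precisely the stated $\mathcal O$ bound.

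The hard part is really Step~1, the cancellation: for it to work the one-step primal lemma must be evaluated against exactly the running $\boldsymbol{\lambda}^{(\lfloor t/\tau\rfloor)}$ that the dual lemma telescopes over, which is where the virtual-sequence definition of $\bm w^{(t)}$ at non-communication rounds and the unbiasedness \eqref{unbiased_estimate} of the snapshot dual gradient are essential — this is the structural choice that makes the primal and dual progress bounds composable. Everything after that is bookkeeping; the only point worth checking carefully is that the $\tau$-dependent correction terms inherited from Lemmas~\ref{lemma: deviation3} and \ref{lemma: one iteration w} genuinely vanish once $\tau$ is tied to $T^{1/4}/\sqrt m$ and $\eta$ to $1/\sqrt T$, i.e. that $\eta^2\tau^2 = \Theta(1/(m\sqrt T))$ rather than remaining $\Theta(1)$.
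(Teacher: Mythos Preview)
Your proposal is correct and follows essentially the same approach as the paper: add the telescoped primal and dual bounds \eqref{sum_w} and \eqref{sum_lambda} so that the running-iterate terms $\mathbb{E}[F(\bm w^{(t)},\boldsymbol{\lambda}^{(\lfloor t/\tau\rfloor)})]$ cancel, use convexity in $\bm w$ and linearity in $\boldsymbol{\lambda}$ to pass to the averaged outputs $(\hat{\bm w},\hat{\boldsymbol{\lambda}})$, take $\max_{\boldsymbol{\lambda}}$/$\min_{\bm w}$, and then substitute the stated parameter choices. The paper presents these steps in a slightly different order (it starts from the gap and splits by inserting $\pm F(\bm w^{(t)},\boldsymbol{\lambda}^{(\lfloor t/\tau\rfloor)})$, then invokes \eqref{sum_w} and \eqref{sum_lambda}), but the content and the final parameter bookkeeping are identical.
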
 
\begin{proof}
The proof is provided in Appendix \ref{proof: theorem1}.
\end{proof}
The primal-dual duality gap found in Theorem \ref{theorem1} is composed of three main terms. That is, the first term $\frac{D_{\mathcal{W}}^2+G_{w}^2}{\sqrt{T}}$ arises from the smoothness of local functions and the boundedness of the gradients. The decay is at a rate $\mathcal{O}(1/T^{1/2})$, which matches the original agnostic federated learning using the SGD steps. The second collection of terms $\frac{D_{\Lambda}^2}{T^{3/8}}, \frac{G_{\lambda}^2}{m^{1/2}T^{3/8}},$ and $\frac{\sigma_{\lambda}^2}{m^{3/2}T^{3/8}}$ comes from the periodic update of the dual variable $\boldsymbol{\lambda}$ at every $\tau =  \frac{T^{1/4}}{\sqrt{m}}$ local step. These terms contribute to a decay $\mathcal{O}(1/T^{3/8})$, which reflects the penalty incurred from infrequent updates of the dual variable, and the communication savings from the decrease in communication rounds to $\mathcal{O}(T^{3/4})$. Finally, the term $\frac{\sigma_w^2+\Gamma}{m\sqrt{T} }$ captures the effects of heterogeneity in terms of the variance of stochastic gradients and the measure of dissimilarity between the gradients of different clients. The term is divided by the $m$ participating clients and decays at $\mathcal{O}(1/T^{1/2})$. By choosing 
\begin{align}
    \tau =  \frac{T^{1/4}}{\sqrt{m}},\,\,\, \eta = \frac{1}{4L \sqrt{T}},\,\,\, \gamma = \frac{1}{T^{5/8}},\,\,\, \mu = 2L \sqrt{\frac{N}{m}},\nonumber
\end{align}
the sources of error are balanced such that the rate is dominated by $\mathcal{O}(1/T^{3/8})$.
\section{EXPERIMENTS}\label{experiments}
\subsection{EXPERIMENTAL SETUP}
We consider a system model with $N = 30$ clients and a server. In each communication round, $m = 20$ clients are sampled to participate in the learning. Each client processes $\tau = 10$ local iterations before communicating with the server. We report the averaged results of the experiments from $10$ Monte Carlo runs.
\subsubsection{BASELINES} Our proposed algorithm \textit{DRDM} is compared to the following baselines.
\begin{itemize}
    \item \textit{SCAFFOLD:} Stochastic controlled averaging
for FL. \textit{SCAFFOLD} aims to correct the client drift by correcting the local update using an estimate of the client drift. \cite{scaffold}
    \item \textit{SCAFF-PD:} Accelerated primal-dual FL algorithm with bias-corrected local steps. \textit{SCAFF-PD} leverages the accelerated primal-dual algorithm and corrects the bias caused by local steps using \textit{SCAFFOLD}-like control variates \cite{scaffpd}.
    \item \textit{FedAvg:} Federated averaging allows participating clients to perform local stochastic gradient descent (SGD) update steps on their local data before
communicating with the server for model aggregation \cite{fl}.
    \item \textit{DRFA:} The distributionally robust federated averaging algorithm solves the problem in (\ref{dro}) by allowing participating clients to perform multiple local SGD steps at the clients’ side, while the dual variable $\lambda$ is only updated periodically at every communication round \cite{drfa}.
\end{itemize}
\subsubsection{DATASETS} We used three datasets in our experiments, namely MNIST \cite{mnist}, Fashion-MNIST \cite{fashionmnist}, and Kuzushiji-MNIST \cite{kuzushijimnist}. The MNIST dataset consists of $28 \times 28$ grayscale handwritten digits ranging from 0 to 9 images, with 60K samples for training and 10K for testing. Fashion-MNIST has a similar format (28x28 grayscale, 10 classes) but contains images of Zalando fashion articles. Kuzushiji-MNIST serves as a drop-in replacement of MNIST and Fashion-MNIST datasets, where the authors chose one character to represent each of the 10 rows of the Hiragana writing system, using Kuzushiji, which is a Japanese cursive writing style.\\
\subsubsection{MODELS} We aim to solve a classification task by using different model architectures for different datasets. This allows us to have more generalized results when comparing our algorithm with the different baselines. With the MNIST dataset, we apply a linear model with input size $28 \times 28$ (flattened input data size) and output size of $10$ (number of classes). For the Fashion-MNIST dataset, we fine-tune the last layer of the pre-trained ResNet18 model \cite{resnet18}. By keeping the model backbone fixed, we preserve its low-level feature detection learned from large-scale datasets, leading to faster and more stable training and preventing overfitting when dealing with relatively small datasets such as Fashion-MNIST. The trained layer has an input size of $512$ and an output size of $10$. Finally, for the Kuzushiji-MNIST dataset, we employ a convolutional neural network (CNN) architecture consisting of a sequence of convolutional and pooling layers. The network begins with a convolutional layer that utilizes $3 \times 3$ kernels and $16$ output channels, followed by a max-pooling operation. This is succeeded by a second convolutional layer with $3 \times 3$ kernels and $32$ output channels, again followed by max-pooling. The resulting feature maps are then flattened and passed through a fully connected (FC) layer with an input dimension of $32 \times 7 \times 7$ and an output dimension of $500$, followed by a rectified linear unit (ReLU) activation. The final FC layer outputs a $10$-dimensional vector, corresponding to the number of target classes. All experiments are carried out using a batch size of $32$.\\
\subsubsection{DATA PARTITIONING} We generate non-IID data by partitioning the total dataset among clients using stratified sampling guided by Zipf and Dirichlet distributions. The data heterogeneity is studied from two aspects: i) heterogeneity of local dataset sizes, i.e., clients hold local datasets with varying numbers of samples, and ii) heterogeneity of class availability, i.e., the local datasets of clients contain different numbers of samples per class. Heterogeneity in terms of local dataset size is achieved by partitioning the samples among the clients using the Zipf distribution, where each client $i$ receives a local dataset of size $d_i = d / i^{\sigma} \sum_{j \in \mathcal{N}} j^{-\sigma}$ samples \cite{zipf_dist_paper}. The parameter $\sigma$ controls the number of samples that each client gets. While $\sigma = 0$ results in clients having approximately equal dataset sizes, increasing $\sigma$ leads to greater heterogeneity in dataset sizes, concentrating more samples on fewer clients. Dataset class availability is controlled by the Dirichlet distribution, with a concentration parameter $ \alpha \in (0, \infty]$ that controls the per-class sample distribution among clients \cite{dirichlet_dist_paper}. Setting $\alpha = 0$ means that each client has samples from a single class, increasing $\alpha$ allows clients to have samples from more classes, and as $\alpha \to \infty$ , local datasets contain samples that are uniformly distributed from all classes.  

\begin{figure*}[t]
\centering
\includegraphics[width=1\textwidth,  height=6cm, trim={0cm 0cm 0cm 0cm}
]{./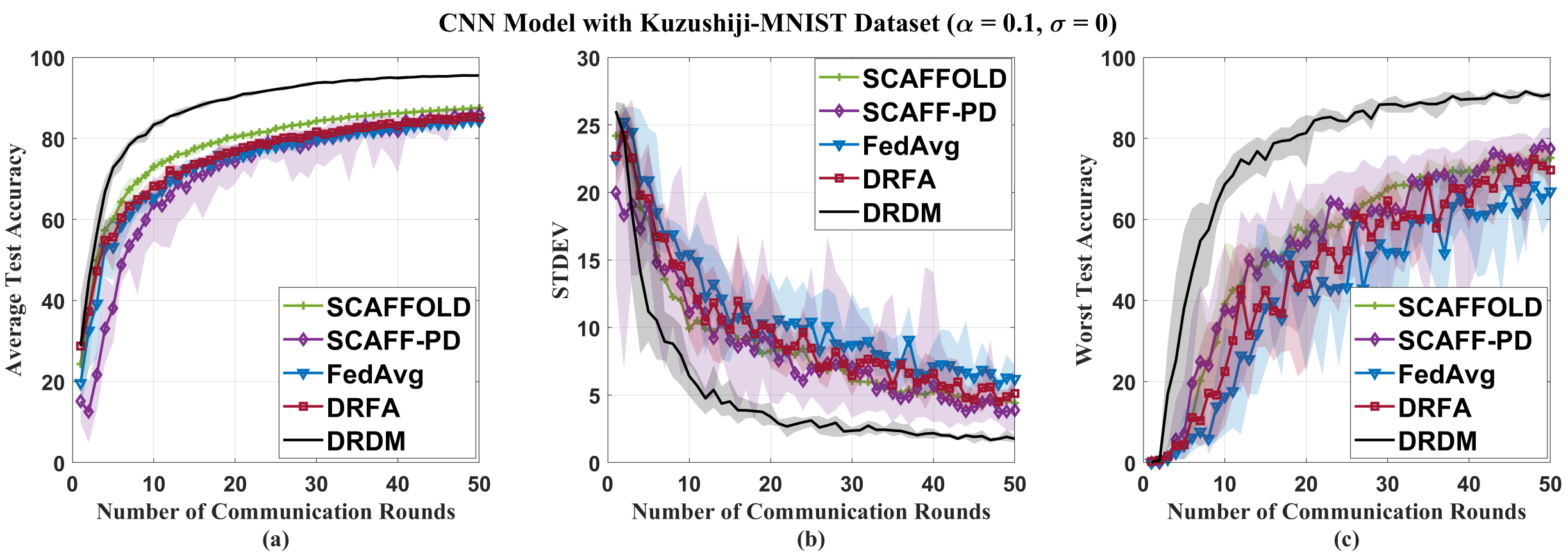}	
\caption{Results of \textit{DRDM} compared to the other baselines using CNN model with Kuzushiji-MNIST dataset with non-IID setting ($\alpha = 0.1$ and $\sigma = 0$). (a) Average test accuracy, (b) standard deviation values, and (c) worst-case test accuracy experienced by the different algorithms.}
\label{fig:cnn_01_0}
\end{figure*} 
\begin{figure*}[t]
\centering
\includegraphics[width=1\textwidth,  height=6cm, trim={0cm 0cm 0cm 0cm}]{./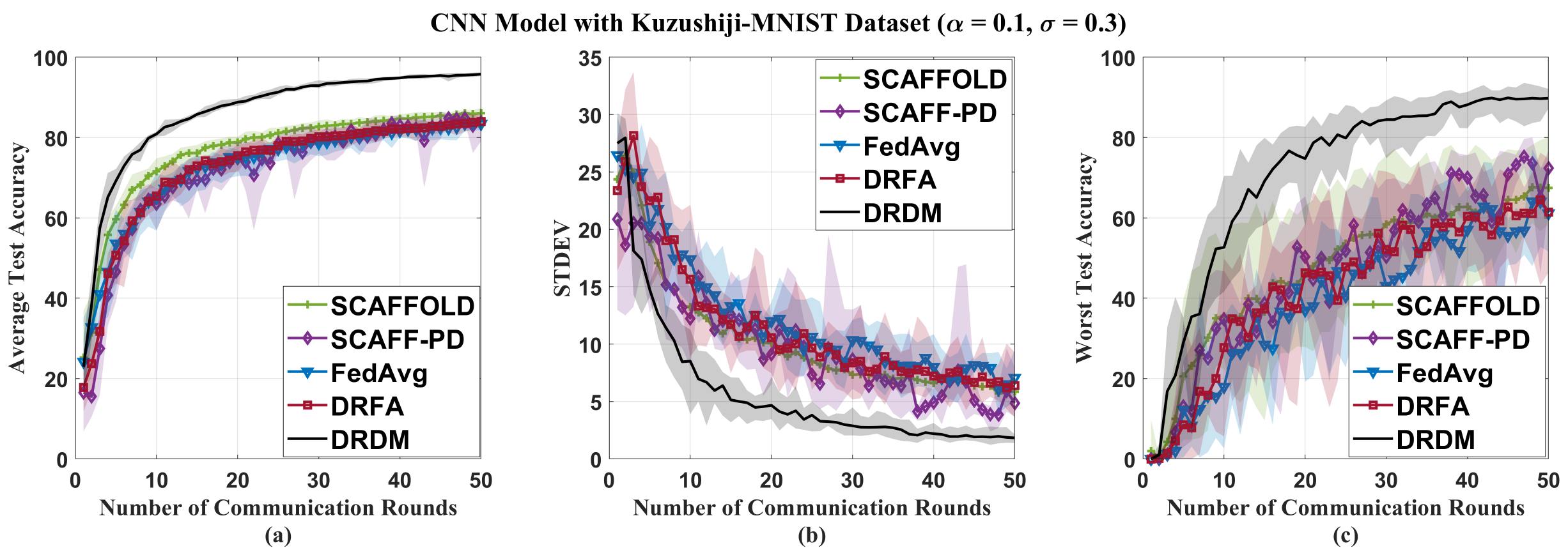}	
\caption{Results of \textit{DRDM} compared to the other baselines using CNN model with Kuzushiji-MNIST dataset with non-IID setting ($\alpha = 0.1$ and $\sigma = 0.3$). (a) Average test accuracy, (b) standard deviation values, and (c) worst-case test accuracy experienced by the different algorithms.}
\label{fig:cnn_01_03}
\end{figure*}

\begin{table*}[t]
\centering
\caption{Results of the different algorithms using CNN model and Kuzushiji-MNIST dataset, with $\sigma = 0$ and $\alpha \in \{0.1, 0.3, 0.5, 0.7\}$.}
\begin{tabular}{|
>{}c l|
>{}c 
>{}c 
>{}c 
>{}c |}
\hline
\multicolumn{2}{|c|}{Dataset} &
  \multicolumn{4}{c|}{Kuzushiji-MNIST} \\ \hline
\multicolumn{2}{|c|}{{($\alpha | \sigma = 0$)}} &
  \multicolumn{1}{P{2cm}|}{0.1} &
  \multicolumn{1}{P{2cm}|}{0.3} &
  \multicolumn{1}{P{2cm}|}{0.5} &
  \multicolumn{1}{P{2cm}|}{0.7} \\ \hline
\multicolumn{2}{|c|}{Model type} &
  \multicolumn{4}{c|}{CNN} \\ \hline
\multicolumn{1}{|c|}{} &
   \multicolumn{1}{l|}{{Average test accuracy}} &
  \multicolumn{1}{c|}{\textbf{95.44}} &
  \multicolumn{1}{c|}{\textbf{95.10}} &
  \multicolumn{1}{c|}{\textbf{95.27}} &
  \multicolumn{1}{c|}{\textbf{95.68}} \\ \cline{2-6} 
\multicolumn{1}{|c|}{} &
  \multicolumn{1}{l|}{{Worst-case test accuracy}} &
  \multicolumn{1}{c|}{\textbf{91.58}} &
  \multicolumn{1}{c|}{\textbf{92.73}} &
  \multicolumn{1}{c|}{\textbf{92.75}} &
  \multicolumn{1}{c|}{\textbf{92.84}} \\ \cline{2-6} 
\multicolumn{1}{|c|}{\multirow{-3}{*}{DRDM}} &
   \multicolumn{1}{l|}{{Standard deviation}} &
  \multicolumn{1}{c|}{\textbf{1.68}} &
 \multicolumn{1}{c|}{\textbf{1.26}} &
 \multicolumn{1}{c|}{\textbf{1.34}} &
  \multicolumn{1}{c|}{\textbf{1.37}} \\ \hline
\multicolumn{1}{|c|}{} &
   \multicolumn{1}{l|}{{Average test accuracy}} &
  \multicolumn{1}{c|}{85.27} &
  \multicolumn{1}{c|}{85.16} &
  \multicolumn{1}{c|}{85.78} &
  86.32 \\ \cline{2-6} 
\multicolumn{1}{|c|}{} &
  \multicolumn{1}{l|}{{Worst-case test accuracy}} &
  \multicolumn{1}{c|}{74.79} &
  \multicolumn{1}{c|}{78.80} &
  \multicolumn{1}{c|}{79.96} &
  81.76 \\ \cline{2-6} 
\multicolumn{1}{|c|}{\multirow{-3}{*}{DRFA}} &
   \multicolumn{1}{l|}{{Standard deviation}} &
  \multicolumn{1}{c|}{4.51} &
  \multicolumn{1}{c|}{2.79} &
  \multicolumn{1}{c|}{2.85} &
  2.34 \\ \hline
\multicolumn{1}{|c|}{} &
   \multicolumn{1}{l|}{{Average test accuracy}} &
  \multicolumn{1}{c|}{87.54} &
  \multicolumn{1}{c|}{86.46} &
  \multicolumn{1}{c|}{86.95} &
  87.08 \\ \cline{2-6} 
\multicolumn{1}{|c|}{} &
  \multicolumn{1}{l|}{{Worst-case test accuracy}} &
  \multicolumn{1}{c|}{75.21} &
  \multicolumn{1}{c|}{80.36} &
  \multicolumn{1}{c|}{80.47} &
  81.96 \\ \cline{2-6} 
\multicolumn{1}{|c|}{\multirow{-3}{*}{SCAFFOLD}} &
   \multicolumn{1}{l|}{{Standard deviation}} &
  \multicolumn{1}{c|}{4.41} &
  \multicolumn{1}{c|}{2.77} &
  \multicolumn{1}{c|}{3.00} &
  2.43 \\ \hline
\multicolumn{1}{|c|}{} &
  \multicolumn{1}{l|}{{Average test accuracy}} &
  \multicolumn{1}{c|}{85.62} &
  \multicolumn{1}{c|}{87.15} &
  \multicolumn{1}{c|}{87.59} &
  87.62 \\ \cline{2-6} 
\multicolumn{1}{|c|}{} &
  \multicolumn{1}{l|}{{Worst-case test accuracy}} &
  \multicolumn{1}{c|}{78.18} &
  \multicolumn{1}{c|}{81.76} &
  \multicolumn{1}{c|}{82.19} &
  83.12 \\ \cline{2-6} 
\multicolumn{1}{|c|}{\multirow{-3}{*}{SCAFF-PD}} &
  \multicolumn{1}{l|}{{Standard deviation}} &
  \multicolumn{1}{c|}{3.79} &
  \multicolumn{1}{c|}{2.47} &
  \multicolumn{1}{c|}{2.42} &
  2.19 \\ \hline
\multicolumn{1}{|c|}{} &
  \multicolumn{1}{l|}{{Average test accuracy}} &
  \multicolumn{1}{c|}{84.12} &
  \multicolumn{1}{c|}{85.30} &
  \multicolumn{1}{c|}{85.86} &
  86.16 \\ \cline{2-6} 
\multicolumn{1}{|c|}{} &
  \multicolumn{1}{l|}{{Worst-case test accuracy}} &
  \multicolumn{1}{c|}{68.42} &
  \multicolumn{1}{c|}{78.63} &
  \multicolumn{1}{c|}{79.83} &
  80.67 \\ \cline{2-6} 
\multicolumn{1}{|c|}{\multirow{-3}{*}{Fed-Avg}} &
   \multicolumn{1}{l|}{{Standard deviation}} &
  \multicolumn{1}{c|}{5.83} &
  \multicolumn{1}{c|}{3.11} &
  \multicolumn{1}{c|}{2.85} &
  2.48 \\ \hline
\end{tabular}
\label{table:cnn_01_0}
\end{table*}

\begin{table*}[t]
\caption{Results of the different algorithms using CNN model and Kuzushiji-MNIST dataset, with $\alpha = 0.1$ and $\sigma \in \{0.3, 0.5, 0.7\}$.}
\centering
\begin{tabular}{|
>{}c l|
>{}c 
>{}c 
>{}c |}
\hline
\multicolumn{2}{|c|}{Dataset} &
  \multicolumn{3}{c|}{Kuzushiji-MNIST} \\ \hline
\multicolumn{2}{|c|}{{($\sigma | \alpha = 0.1$)}} &
  \multicolumn{1}{P{2cm}|}{0.3} &
  \multicolumn{1}{P{2cm}|}{0.5} &
  \multicolumn{1}{P{2cm}|}{0.7} \\ \hline
\multicolumn{2}{|c|}{Model type} &
  \multicolumn{3}{c|}{CNN} \\ \hline
\multicolumn{1}{|c|}{} &
  \multicolumn{1}{l|}{{Average test accuracy}} &
  \multicolumn{1}{c|}{\textbf{95.42}} &
  \multicolumn{1}{c|}{\textbf{94.26}} &
   \multicolumn{1}{c|}{\textbf{93.32}} \\ \cline{2-5} 
\multicolumn{1}{|c|}{} &
\multicolumn{1}{l|}{{Worst-case test accuracy}} &
  \multicolumn{1}{c|}{\textbf{89.87}} &
  \multicolumn{1}{c|}{\textbf{88.44}} &
  \multicolumn{1}{c|}{\textbf{86.83}} \\ \cline{2-5} 
\multicolumn{1}{|c|}{\multirow{-3}{*}{DRDM}} &
   \multicolumn{1}{l|}{{Standard deviation}} &
  \multicolumn{1}{c|}{\textbf{1.92}} &
  \multicolumn{1}{c|}{\textbf{2.39}} &
  \multicolumn{1}{c|}{\textbf{2.64}} \\ \hline
\multicolumn{1}{|c|}{} &
   
 \multicolumn{1}{l|}{{Average test accuracy}} &
  \multicolumn{1}{c|}{83.71} &
  \multicolumn{1}{c|}{80.98} &
  78.86 \\ \cline{2-5} 
\multicolumn{1}{|c|}{} &
  \multicolumn{1}{l|}{{Worst-case test accuracy}} &
  \multicolumn{1}{c|}{64.46} &
  \multicolumn{1}{c|}{63.63} &
  61.42 \\ \cline{2-5} 
\multicolumn{1}{|c|}{\multirow{-3}{*}{DRFA}} &
   \multicolumn{1}{l|}{{Standard deviation}} &
  \multicolumn{1}{c|}{6.23} &
  \multicolumn{1}{c|}{6.95} &
  6.94 \\ \hline
\multicolumn{1}{|c|}{} &
    \multicolumn{1}{l|}{{Average test accuracy}} &
  \multicolumn{1}{c|}{86.03} &
  \multicolumn{1}{c|}{83.29} &
  81.04 \\ \cline{2-5} 
\multicolumn{1}{|c|}{} &
  \multicolumn{1}{l|}{{Worst-case test accuracy}} &
  \multicolumn{1}{c|}{67.62} &
  \multicolumn{1}{c|}{65.34} &
  62.62 \\ \cline{2-5} 
\multicolumn{1}{|c|}{\multirow{-3}{*}{SCAFFOLD}} &
   \multicolumn{1}{l|}{{Standard deviation}} &
  \multicolumn{1}{c|}{5.71} &
  \multicolumn{1}{c|}{7.35} &
  6.85 \\ \hline
\multicolumn{1}{|c|}{} &
    \multicolumn{1}{l|}{{Average test accuracy}} &
  \multicolumn{1}{c|}{84.63} &
  \multicolumn{1}{c|}{84.56} &
  83.71 \\ \cline{2-5} 
\multicolumn{1}{|c|}{} &
 \multicolumn{1}{l|}{{Worst-case test accuracy}} &
  \multicolumn{1}{c|}{75.23} &
  \multicolumn{1}{c|}{74.15} &
  72.66 \\ \cline{2-5} 
\multicolumn{1}{|c|}{\multirow{-3}{*}{SCAFF-PD}} &
 \multicolumn{1}{l|}{{Standard deviation}} &
  \multicolumn{1}{c|}{3.88} &
  \multicolumn{1}{c|}{4.25} &
  5.18 \\ \hline
\multicolumn{1}{|c|}{} &
    \multicolumn{1}{l|}{{Average test accuracy}} &
  \multicolumn{1}{c|}{83.44} &
  \multicolumn{1}{c|}{80.65} &
  78.92 \\ \cline{2-5} 
\multicolumn{1}{|c|}{} &
  \multicolumn{1}{l|}{{Worst-case test accuracy}} &
  \multicolumn{1}{c|}{64.07} &
  \multicolumn{1}{c|}{61.25} &
  57.99 \\ \cline{2-5} 
\multicolumn{1}{|c|}{\multirow{-3}{*}{Fed-Avg}} &
    \multicolumn{1}{l|}{{Standard deviation}} &
  \multicolumn{1}{c|}{6.07} &
  \multicolumn{1}{c|}{7.85} &
  7.46 \\ \hline
\end{tabular}
\label{table:cnn_01_03}
\end{table*}

\subsection{RESULTS AND DISCUSSION}
\subsubsection{IMPACT OF DATA HETEROGENEITY}Figures~\ref{fig:cnn_01_0} and~\ref{fig:cnn_01_03} present the performance of \textit{DRDM} in comparison to the baseline methods on the Kuzushiji-MNIST dataset using the CNN model architecture. Both experiments are carried out with $\alpha = 0.1$, while $\sigma$ is set to zero in Figure~\ref{fig:cnn_01_0} and to $0.3$ in Figure~\ref{fig:cnn_01_03}. 
Figure~\ref{fig:cnn_01_0}a presents the average test accuracy over communication rounds, where our proposed algorithm, \textit{DRDM}, consistently outperforms all baseline methods. Figure~\ref{fig:cnn_01_0}b shows the standard deviation across clients, which reflects the fairness of the model performance. In particular, \textit{DRDM} achieves the lowest standard deviation, indicating improved robustness and fairer performance among clients. In Figure~\ref{fig:cnn_01_0}c, a similar trend is observed, with \textit{DRDM} attaining the highest worst-case test accuracy, further supporting its ability to provide robust performance under non-IID data distributions. Additionally, \textit{FedAvg} consistently achieves the lowest performance across all metrics, highlighting the limitations of simple averaging in heterogeneous data settings and the need for more sophisticated aggregation strategies. 
In a more heterogeneous setting shown in Figure~\ref{fig:cnn_01_03} with $\sigma = 0.3$, \textit{DRDM} continues to outperform all baselines across all performance indicators. Table~\ref{table:cnn_01_0} explores the heterogeneity of the class availability ($\alpha \in \{0.1, 0.3, 0.5, 0.7\}$) under uniform client data sizes ($\sigma = 0$), while Table~\ref{table:cnn_01_03} investigates the performance with class availability level ($\alpha = 0.1$) and increasing dataset size heterogeneity ($\sigma \in \{0.3, 0.5, 0.7\}$). Across all configurations, \textit{DRDM} consistently achieves the best performance in terms of average accuracy, worst-case test accuracy, and fairness, with the highest performance results highlighted in bold.
\subsubsection{IMPACT OF THE NUMBER OF LOCAL STEPS} Figure \ref{cnn_local_steps} shows the worst-case test accuracy obtained by \textit{DRDM} for various values of local steps $\tau$, with respect to the number of communication rounds $s$. We notice that a higher value of local steps results in better performance with fewer communication rounds. In particular, for $\tau = 30$, \textit{DRDM} needs $s = 25$ communication rounds to reach the $85\%$ worst-case test accuracy, while for $\tau = 5$, $10$ and $20$, it requires $s = 48$, $41$ and $32$, respectively.
\subsubsection{IMPACT OF CHANNEL CAPACITY}Although \textit{DRDM} achieves higher performance when increasing the number of local steps, as illustrated in Figure \ref{cnn_local_steps}, it can come at the cost of more data processing at the client side. Consequently, the processing energy can become a bottleneck for energy-constrained devices. The total energy cost at each client comprises both processing energy $E^p$ and transmission energy $E^t$, and is defined as follows
\begin{align}  \label{equ:energy}
    E = T \times \tau \sum_{i=1}^m E_i^p + T \times \sum_{i=1}^m E_i^t,
\end{align}
where $E_i^p$ and $E_i^t$ denote the $i^{th}$ client's processing energy per update step and transmission energy, respectively. As seen in \eqref{equ:energy}, the processing energy depends both on the number of local steps and the number of communication rounds, while the transmission energy is influenced by the number of communication rounds, the experienced signal-to-noise ratio (SNR), and the available bandwidth between the clients and the server. Figure \ref{cnn_energy} illustrates how the SNR and the bandwidth affect the optimal number of local steps required to achieve a target worst-case test accuracy of 80\% with minimal total energy cost. In low-SNR and bandwidth-constrained conditions, the optimal number of local steps increases (e.g., $\tau = \{25, 30\}$), since more local computation reduces the number of required communication rounds, which is beneficial when transmission energy dominates the total energy cost. As the SNR increases, transmission becomes more efficient and the optimal number of local steps decreases (e.g., $\tau = 15$ or lower for all bandwidths at $SNR = 20\,\mathrm{dB}$). In this high-SNR regime, transmission energy becomes less significant, allowing clients to reduce local computation in favor of lower processing energy, which now constitutes the dominant part of the total energy cost.
\section{CONCLUSION}\label{conclusion}
In this work, we introduced \textit{DRDM}, an algorithm that enhances the robustness and fairness of federated learning models in heterogeneous settings by combining a DRO formulation with client drift mitigation via dynamic regularization.
We provided a rigorous convergence analysis for convex objectives, accounting for multiple local SGD steps, dynamic regularization, partial client participation, and periodic dual updates via randomized snapshotting. We bounded the duality gap and derived a convergence rate of $\mathcal{O}(1/T^{3/8})$, where $T$ represents the total number of iterations. Empirically, across three benchmark datasets with different ML model architectures and under varying non-IID scenarios, \textit{DRDM} achieves convergence with a reduced number of communication rounds compared to baselines. Moreover, it achieves a substantial improvement in the worst-case test accuracy compared to baselines across different heterogeneity settings. Finally, by characterizing the interplay between the SNR, bandwidth, and energy costs, we demonstrated how \textit{DRDM} can be adaptively tuned, through the number of local update steps, to meet a target worst-case test accuracy with minimal total energy costs. 

\begin{figure}[t]
\centering
\includegraphics[width=0.45\textwidth,  height=7cm]{./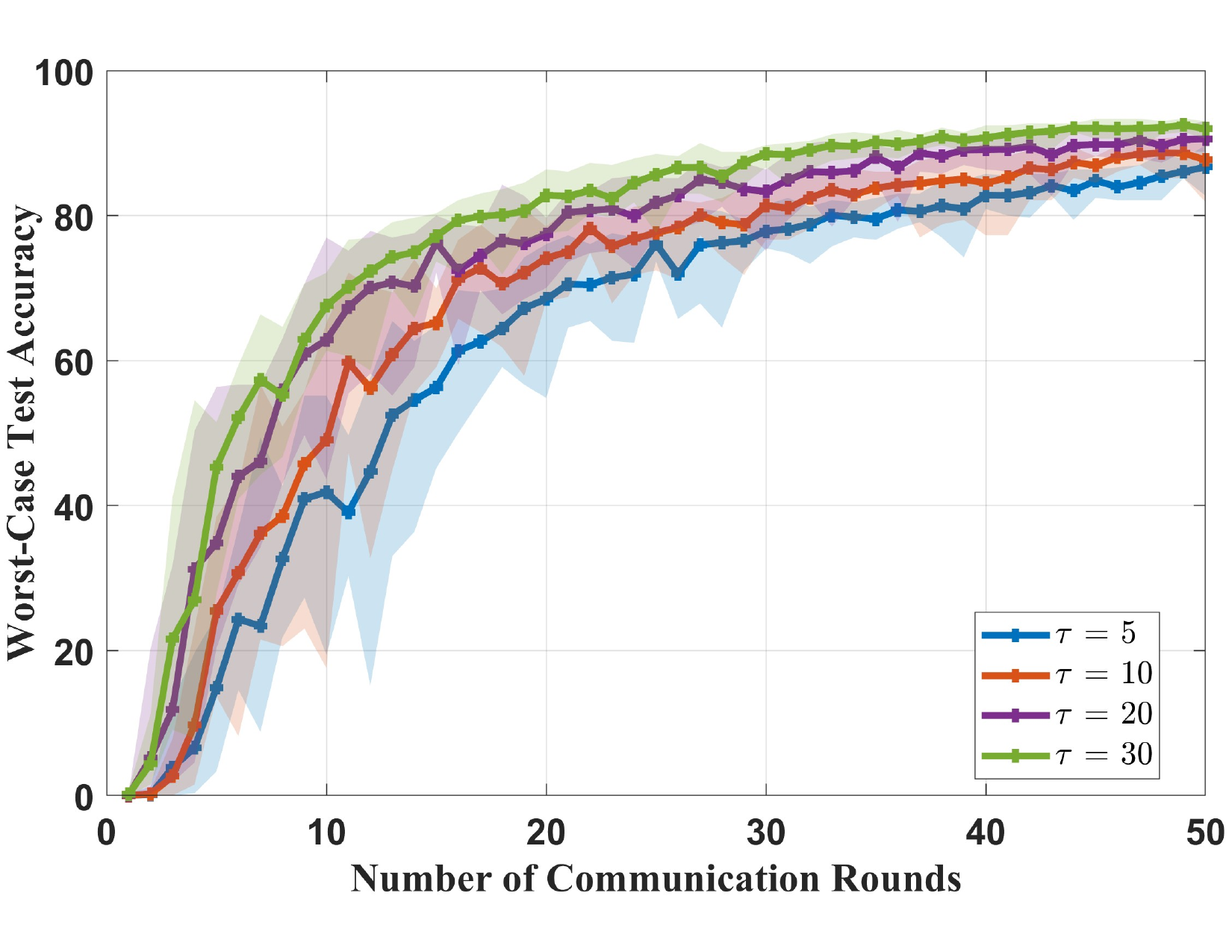}	
\caption{Worst-case test accuracy versus the number of communication rounds, with respect to different values of local steps $\tau$.}
\label{cnn_local_steps}
\end{figure} 
\begin{figure}[t]
\centering
\includegraphics[width=0.45\textwidth,  height=7cm]{./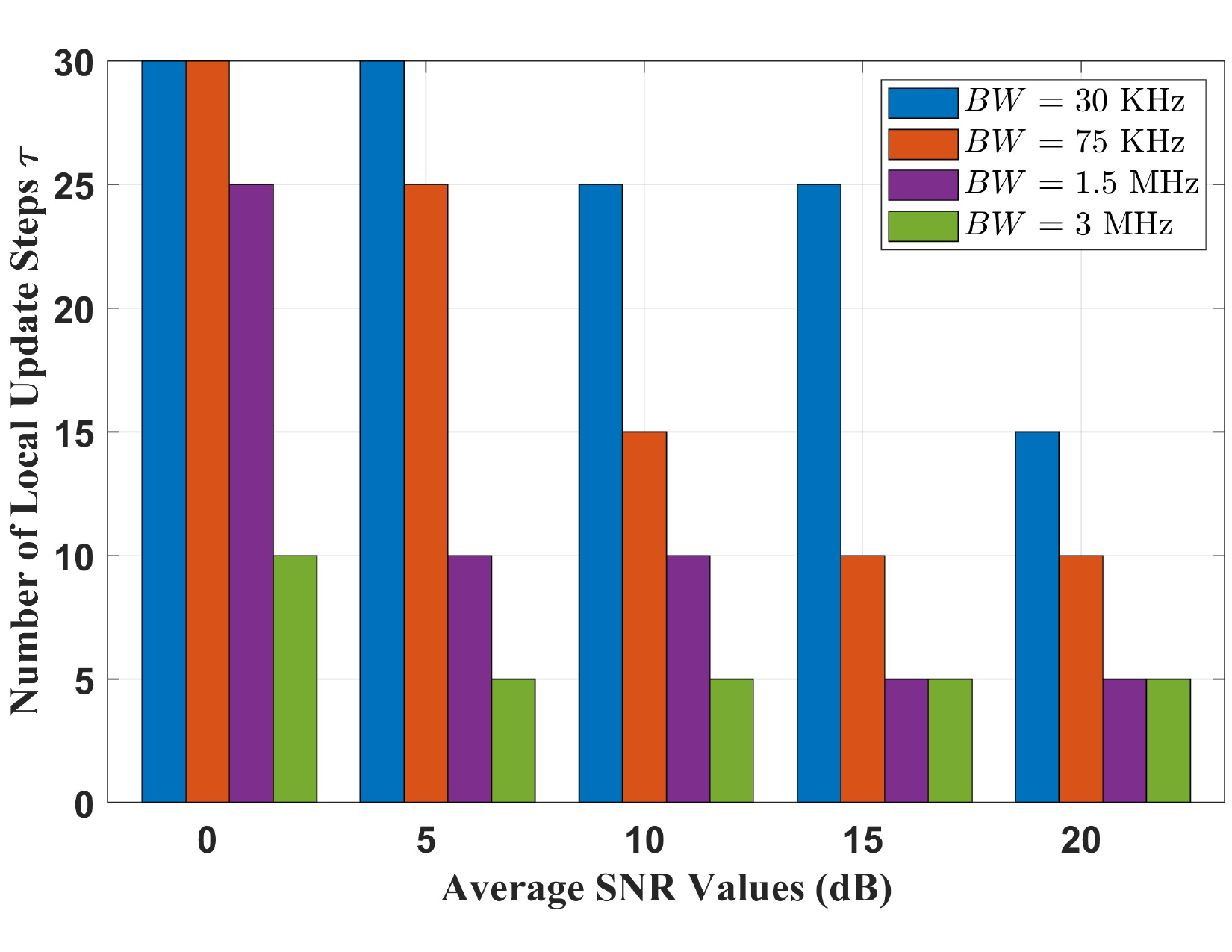}	
\caption{Number of local steps to achieve 80$\%$ worst-case test accuracy with minimum total energy cost, for different SNR values and bandwidth choices.}
\label{cnn_energy}
\end{figure} 
\bibliographystyle{IEEEtran}  
\bibliography{IEEEabrv,refs}

\begin{thebibliography}{10}
\providecommand{\url}[1]{#1}
\csname url@samestyle\endcsname
\providecommand{\newblock}{\relax}
\providecommand{\bibinfo}[2]{#2}
\providecommand{\BIBentrySTDinterwordspacing}{\spaceskip=0pt\relax}
\providecommand{\BIBentryALTinterwordstretchfactor}{4}
\providecommand{\BIBentryALTinterwordspacing}{\spaceskip=\fontdimen2\font plus
\BIBentryALTinterwordstretchfactor\fontdimen3\font minus \fontdimen4\font\relax}
\providecommand{\BIBforeignlanguage}[2]{{%
\expandafter\ifx\csname l@#1\endcsname\relax
\typeout{** WARNING: IEEEtran.bst: No hyphenation pattern has been}%
\typeout{** loaded for the language `#1'. Using the pattern for}%
\typeout{** the default language instead.}%
\else
\language=\csname l@#1\endcsname
\fi
#2}}
\providecommand{\BIBdecl}{\relax}
\BIBdecl

\bibitem{9464278}
\BIBentryALTinterwordspacing
P.~Kairouz, H.~B. McMahan, B.~Avent, A.~Bellet, M.~Bennis, and at~al, ``Advances and open problems in federated learning,'' \emph{Foundations and Trends® in Machine Learning}, vol.~14, no. 1–2, pp. 1--210, 2021. [Online]. Available: \url{http://dx.doi.org/10.1561/2200000083}
\BIBentrySTDinterwordspacing

\bibitem{10.1145/3298981}
\BIBentryALTinterwordspacing
Q.~Yang, Y.~Liu, T.~Chen, and Y.~Tong, ``Federated machine learning: Concept and applications,'' \emph{ACM Trans. Intell. Syst. Technol.}, vol.~10, no.~2, Jan. 2019. [Online]. Available: \url{https://doi.org/10.1145/3298981}
\BIBentrySTDinterwordspacing

\bibitem{9599369}
Q.~Li, Z.~Wen, Z.~Wu, S.~Hu, N.~Wang, Y.~Li, X.~Liu, and B.~He, ``A survey on federated learning systems: Vision, hype and reality for data privacy and protection,'' \emph{IEEE Transactions on Knowledge and Data Engineering}, vol.~35, no.~4, pp. 3347--3366, 2023.

\bibitem{9084352}
T.~Li, A.~K. Sahu, A.~Talwalkar, and V.~Smith, ``Federated learning: Challenges, methods, and future directions,'' \emph{IEEE Signal Processing Magazine}, vol.~37, no.~3, pp. 50--60, 2020.

\bibitem{10118646}
M.~Krouka, A.~Elgabli, C.~B. Issaid, and M.~Bennis, ``Communication-efficient second-order newton-type approach for decentralized learning,'' in \emph{2023 IEEE Wireless Communications and Networking Conference (WCNC)}, 2023, pp. 1--7.

\bibitem{9685045}
M.~Krouka, A.~Elgabli, C.~b. Issaid, and M.~Bennis, ``Communication-efficient split learning based on analog communication and over the air aggregation,'' in \emph{2021 IEEE Global Communications Conference (GLOBECOM)}, 2021, pp. 1--6.

\bibitem{9090366}
\BIBentryALTinterwordspacing
Q.~Wu, K.~He, and X.~Chen, ``{ Personalized Federated Learning for Intelligent IoT Applications: A Cloud-Edge Based Framework },'' \emph{IEEE Open Journal of the Computer Society}, vol.~1, no.~01, pp. 35--44, Jan. 2020. [Online]. Available: \url{https://doi.ieeecomputersociety.org/10.1109/OJCS.2020.2993259}
\BIBentrySTDinterwordspacing

\bibitem{9578660}
\BIBentryALTinterwordspacing
Q.~Li, B.~He, and D.~Song, ``{ Model-Contrastive Federated Learning },'' in \emph{2021 IEEE/CVF Conference on Computer Vision and Pattern Recognition (CVPR)}.\hskip 1em plus 0.5em minus 0.4em\relax Los Alamitos, CA, USA: IEEE Computer Society, Jun. 2021, pp. 10\,708--10\,717. [Online]. Available: \url{https://doi.ieeecomputersociety.org/10.1109/CVPR46437.2021.01057}
\BIBentrySTDinterwordspacing

\bibitem{10.1007/978-3-030-58607-2_5}
\BIBentryALTinterwordspacing
T.-M.~H. Hsu, H.~Qi, and M.~Brown, ``Federated visual classification with real-world data distribution,'' in \emph{Computer Vision – ECCV 2020: 16th European Conference, Glasgow, UK, August 23–28, 2020, Proceedings, Part X}.\hskip 1em plus 0.5em minus 0.4em\relax Berlin, Heidelberg: Springer-Verlag, 2020, p. 76–92. [Online]. Available: \url{https://doi.org/10.1007/978-3-030-58607-2_5}
\BIBentrySTDinterwordspacing

\bibitem{fl_non_iid_survey}
Z.~Lu, H.~Pan, Y.~Dai, X.~Si, and Y.~Zhang, ``Federated learning with non-iid data: A survey,'' \emph{IEEE Internet of Things Journal}, vol.~11, no.~11, pp. 19\,188--19\,209, 2024.

\bibitem{opt_fl_non_iid}
H.~Wang, Z.~Kaplan, D.~Niu, and B.~Li, ``Optimizing federated learning on non-iid data with reinforcement learning,'' in \emph{IEEE INFOCOM 2020 - IEEE Conference on Computer Communications}, 2020, pp. 1698--1707.

\bibitem{localSGD}
\BIBentryALTinterwordspacing
S.~U. Stich, ``Local sgd converges fast and communicates little,'' 2019. [Online]. Available: \url{https://arxiv.org/abs/1805.09767}
\BIBentrySTDinterwordspacing

\bibitem{fl}
\BIBentryALTinterwordspacing
H.~B. McMahan, E.~Moore, D.~Ramage, S.~Hampson, and B.~A. y~Arcas, ``Communication-efficient learning of deep networks from decentralized data,'' 2023. [Online]. Available: \url{https://arxiv.org/abs/1602.05629}
\BIBentrySTDinterwordspacing

\bibitem{fedProx}
T.~Li, A.~K. Sahu, M.~Zaheer, M.~Sanjabi, A.~Talwalkar, and V.~Smith, ``Federated optimization in heterogeneous networks,'' in \emph{Proceedings of Machine Learning and Systems}, I.~Dhillon, D.~Papailiopoulos, and V.~Sze, Eds., vol.~2, 2020, pp. 429--450.

\bibitem{li2019practicalfederatedgradientboosting}
\BIBentryALTinterwordspacing
Q.~Li, Z.~Wen, and B.~He, ``Practical federated gradient boosting decision trees,'' 2019. [Online]. Available: \url{https://arxiv.org/abs/1911.04206}
\BIBentrySTDinterwordspacing

\bibitem{Wang2020Federated}
\BIBentryALTinterwordspacing
H.~Wang, M.~Yurochkin, Y.~Sun, D.~Papailiopoulos, and Y.~Khazaeni, ``Federated learning with matched averaging,'' in \emph{International Conference on Learning Representations}, 2020. [Online]. Available: \url{https://openreview.net/forum?id=BkluqlSFDS}
\BIBentrySTDinterwordspacing

\bibitem{a96536}
Z.~Dai, B.~K.~H. Low, and P.~Jaillet, ``Federated bayesian optimization via thompson sampling,'' in \emph{Proceedings of the 34th International Conference on Neural Information Processing Systems}, ser. NIPS '20.\hskip 1em plus 0.5em minus 0.4em\relax Red Hook, NY, USA: Curran Associates Inc., 2020.

\bibitem{10.1145/3510540}
\BIBentryALTinterwordspacing
S.~Hu, Y.~Li, X.~Liu, Q.~Li, Z.~Wu, and B.~He, ``The oarf benchmark suite: Characterization and implications for federated learning systems,'' \emph{ACM Trans. Intell. Syst. Technol.}, vol.~13, no.~4, Jun. 2022. [Online]. Available: \url{https://doi.org/10.1145/3510540}
\BIBentrySTDinterwordspacing

\bibitem{he2020fedmlresearchlibrarybenchmark}
\BIBentryALTinterwordspacing
C.~He, S.~Li, J.~So, X.~Zeng, M.~Zhang, H.~Wang, X.~Wang, P.~Vepakomma, A.~Singh, H.~Qiu, X.~Zhu, J.~Wang, L.~Shen, P.~Zhao, Y.~Kang, Y.~Liu, R.~Raskar, Q.~Yang, M.~Annavaram, and S.~Avestimehr, ``Fedml: A research library and benchmark for federated machine learning,'' 2020. [Online]. Available: \url{https://arxiv.org/abs/2007.13518}
\BIBentrySTDinterwordspacing

\bibitem{fl_comm-eff}
\BIBentryALTinterwordspacing
J.~Konečný, H.~B. McMahan, F.~X. Yu, P.~Richtarik, A.~T. Suresh, and D.~Bacon, ``Federated learning: Strategies for improving communication efficiency,'' in \emph{NIPS Workshop on Private Multi-Party Machine Learning}, 2016. [Online]. Available: \url{https://arxiv.org/abs/1610.05492}
\BIBentrySTDinterwordspacing

\bibitem{Li2020On}
\BIBentryALTinterwordspacing
X.~Li, K.~Huang, W.~Yang, S.~Wang, and Z.~Zhang, ``On the convergence of fedavg on non-iid data,'' in \emph{International Conference on Learning Representations}, 2020. [Online]. Available: \url{https://openreview.net/forum?id=HJxNAnVtDS}
\BIBentrySTDinterwordspacing

\bibitem{FedDYN}
\BIBentryALTinterwordspacing
D.~A.~E. Acar, Y.~Zhao, R.~M. Navarro, M.~Mattina, P.~N. Whatmough, and V.~Saligrama, ``Federated learning based on dynamic regularization,'' 2021. [Online]. Available: \url{https://arxiv.org/abs/2111.04263}
\BIBentrySTDinterwordspacing

\bibitem{scaffold}
\BIBentryALTinterwordspacing
S.~P. Karimireddy, S.~Kale, M.~Mohri, S.~J. Reddi, S.~U. Stich, and A.~T. Suresh, ``Scaffold: Stochastic controlled averaging for federated learning,'' 2021. [Online]. Available: \url{https://arxiv.org/abs/1910.06378}
\BIBentrySTDinterwordspacing

\bibitem{fedNova}
J.~Wang, Q.~Liu, H.~Liang, G.~Joshi, and H.~V. Poor, ``Tackling the objective inconsistency problem in heterogeneous federated optimization,'' in \emph{Advances in Neural Information Processing Systems}, H.~Larochelle, M.~Ranzato, R.~Hadsell, M.~Balcan, and H.~Lin, Eds., vol.~33.\hskip 1em plus 0.5em minus 0.4em\relax Curran Associates, Inc., 2020, pp. 7611--7623.

\bibitem{scallion}
\BIBentryALTinterwordspacing
X.~Huang, P.~Li, and X.~Li, ``Stochastic controlled averaging for federated learning with communication compression,'' in \emph{The Twelfth International Conference on Learning Representations}, 2024. [Online]. Available: \url{https://openreview.net/forum?id=jj5ZjZsWJe}
\BIBentrySTDinterwordspacing

\bibitem{Li2020DittoFA}
\BIBentryALTinterwordspacing
T.~Li, S.~Hu, A.~Beirami, and V.~Smith, ``Ditto: Fair and robust federated learning through personalization,'' in \emph{International Conference on Machine Learning}, 2020. [Online]. Available: \url{https://api.semanticscholar.org/CorpusID:235446706}
\BIBentrySTDinterwordspacing

\bibitem{afl}
\BIBentryALTinterwordspacing
M.~Mohri, G.~Sivek, and A.~T. Suresh, ``Agnostic federated learning,'' in \emph{Proceedings of the 36th International Conference on Machine Learning}, ser. Proceedings of Machine Learning Research, K.~Chaudhuri and R.~Salakhutdinov, Eds., vol.~97.\hskip 1em plus 0.5em minus 0.4em\relax PMLR, 09--15 Jun 2019, pp. 4615--4625. [Online]. Available: \url{https://proceedings.mlr.press/v97/mohri19a.html}
\BIBentrySTDinterwordspacing

\bibitem{FedRobust}
A.~Reisizadeh, F.~Farnia, R.~Pedarsani, and A.~Jadbabaie, ``Robust federated learning: the case of affine distribution shifts,'' in \emph{Proceedings of the 34th International Conference on Neural Information Processing Systems}, ser. NIPS '20.\hskip 1em plus 0.5em minus 0.4em\relax Red Hook, NY, USA: Curran Associates Inc., 2020.

\bibitem{drfa}
\BIBentryALTinterwordspacing
Y.~Deng, M.~M. Kamani, and M.~Mahdavi, ``Distributionally robust federated averaging,'' 2021. [Online]. Available: \url{https://arxiv.org/abs/2102.12660}
\BIBentrySTDinterwordspacing

\bibitem{DR_DSGD}
\BIBentryALTinterwordspacing
C.~B. Issaid, A.~Elgabli, and M.~Bennis, ``{DR}-{DSGD}: A distributionally robust decentralized learning algorithm over graphs,'' \emph{Transactions on Machine Learning Research}, 2022. [Online]. Available: \url{https://openreview.net/forum?id=VcXNAr5Rur}
\BIBentrySTDinterwordspacing

\bibitem{Li2020Fair}
\BIBentryALTinterwordspacing
T.~Li, M.~Sanjabi, A.~Beirami, and V.~Smith, ``Fair resource allocation in federated learning,'' in \emph{International Conference on Learning Representations}, 2020. [Online]. Available: \url{https://openreview.net/forum?id=ByexElSYDr}
\BIBentrySTDinterwordspacing

\bibitem{clustered_fl}
F.~Sattler, K.-R. Müller, and W.~Samek, ``Clustered federated learning: Model-agnostic distributed multitask optimization under privacy constraints,'' \emph{IEEE Transactions on Neural Networks and Learning Systems}, vol.~32, no.~8, pp. 3710--3722, 2021.

\bibitem{li2021}
\BIBentryALTinterwordspacing
X.~Li, W.~Yang, S.~Wang, and Z.~Zhang, ``Communication-efficient local decentralized sgd methods,'' 2021. [Online]. Available: \url{https://arxiv.org/abs/1910.09126}
\BIBentrySTDinterwordspacing

\bibitem{scaffpd}
\BIBentryALTinterwordspacing
Y.~Yu, S.~P. Karimireddy, Y.~Ma, and M.~I. Jordan, ``Scaff-pd: Communication efficient fair and robust federated learning,'' 2023. [Online]. Available: \url{https://arxiv.org/abs/2307.13381}
\BIBentrySTDinterwordspacing

\bibitem{mnist}
L.~Deng, ``The mnist database of handwritten digit images for machine learning research,'' \emph{IEEE Signal Processing Magazine}, vol.~29, no.~6, pp. 141--142, 2012.

\bibitem{fashionmnist}
\BIBentryALTinterwordspacing
H.~Xiao, K.~Rasul, and R.~Vollgraf, ``Fashion-mnist: a novel image dataset for benchmarking machine learning algorithms,'' 2017. [Online]. Available: \url{https://arxiv.org/abs/1708.07747}
\BIBentrySTDinterwordspacing

\bibitem{kuzushijimnist}
T.~Clanuwat, M.~Bober-Irizar, A.~Kitamoto, A.~Lamb, K.~Yamamoto, and D.~Ha, ``Deep learning for classical japanese literature,'' 2018.

\bibitem{resnet18}
\BIBentryALTinterwordspacing
K.~He, X.~Zhang, S.~Ren, and J.~Sun, ``Deep residual learning for image recognition,'' 2015. [Online]. Available: \url{https://arxiv.org/abs/1512.03385}
\BIBentrySTDinterwordspacing

\bibitem{zipf_dist_paper}
\BIBentryALTinterwordspacing
I.~Moreno-Sánchez, F.~Font-Clos, and A.~Corral, ``Large-scale analysis of zipf’s law in english texts,'' \emph{PLOS ONE}, vol.~11, pp. 1--19, 01 2016. [Online]. Available: \url{https://doi.org/10.1371/journal.pone.0147073}
\BIBentrySTDinterwordspacing

\bibitem{dirichlet_dist_paper}
\BIBentryALTinterwordspacing
J.~Pitman and M.~Yor, ``{The two-parameter Poisson-Dirichlet distribution derived from a stable subordinator},'' \emph{The Annals of Probability}, vol.~25, no.~2, pp. 855 -- 900, 1997. [Online]. Available: \url{https://doi.org/10.1214/aop/1024404422}
\BIBentrySTDinterwordspacing

\end{thebibliography}
\section*{APPENDIX} \label{sec: proof DRDM convex}
\setcounter{lemma}{0}
\setcounter{theorem}{0}
In this section, we present the proof of Theorem~\ref{theorem1}, which establishes the convergence of the \textit{DRDM} algorithm in the convex-linear setting. In addition, we provide supplementary experimental results to further support our findings.
\subsection{OVERVIEW OF THE PROOF}\label{overview_proof}
We analyze the one-step progression of the virtual iterates, $\bm{w}^{(t+1)}$ and $\boldsymbol{\lambda}^{(s+1)}$, as part of our proof approach. Compared to the fully synchronous primal-dual methods for min-max optimization, our analysis is more involved, considering the periodic decoupled updates, clients sampling, and drift terms introduced to the local updates of the primal model.  

We begin by examining a single iteration of $\bm{w}$, which satisfies the following update rule,
\begin{align}
&\mathbb{E}\|\bm{w}^{(t+1)} - \bm{w} \|^2 \nonumber \\ & \,\,\, \leq \mathbb{E}\|\bm{w}^{(t)} - \bm{w} \|^2 + 2\eta^2 G_w^2 + 4 \eta^2 \mathbb{E}\left[\|\bm{u}^{(t)} - \Bar{\bm{u}}^{(t)}\|^2\right] \nonumber \\ & \,\,\,\quad + L\eta \mathbb{E}\Big[ \delta^{(t)} \Big] -2\eta\mathbb{E}\left[F(\bm{w} ^{(t)},\boldsymbol{\lambda}^{(\lfloor{\frac{t}{\tau}}\rfloor)})  - F(\bm{w} ,\boldsymbol{\lambda}^{(\lfloor{\frac{t}{\tau}}\rfloor)})\right]  \nonumber \\ & \,\,\, \quad+ 4\eta^2 \mu^2 \mathbb{E} \Big[\beta^{(t)} \Big] + 4\mathbb{E} \Big[\varphi^{(t)} \Big].
\end{align}
Next, we focus on bounding the terms $\delta^{(t)}$, $\beta^{(t)}$, and $\varphi^{(t)}$, which represent the different deviations between the local and virtual global models. First, we bound $\delta^{(t)}$ using Lemma~\ref{lemma: deviation1}, obtaining
\begin{align}
    &\frac{1}{T}\sum_{t=0}^{T}\mathbb{E}\left[\delta^{(t)}\right]  
    \nonumber \\ & \quad \leq 18\eta^2\tau^2 \Big( \sigma_w^2 + \frac{\sigma_w^2}{m} + \Gamma        + 2G_w^2 + 2\mu^2 D_{\mathcal{W}}^2 \Big),
\end{align}
where the bound is characterized by the variance of the stochastic gradient, gradient dissimilarity, gradient boundedness, and domain boundedness. Next, we bound $\beta^{(t)}$ using Lemma~\ref{lemma: deviation2}    
\begin{align}
     \frac{1}{T}\sum_{t=0}^{T}\mathbb{E}\Big[ \beta^{(t)} \Big] \leq  6\eta^2\tau^2  \Big(\sigma_w^2 + 4 G_w^2 \Big).
\end{align}
Finally, we bound $\varphi^{(t)}$, which is a combination of $\delta^{(t)}$ and $\beta^{(t)}$ using Lemma~\ref{lemma: deviation3} as follows
\begin{align}
     &\frac{1}{T}\sum_{t=0}^{T}\mathbb{E}\Big[ \varphi^{(t)} \Big] \nonumber \\  &\quad  \leq   72\eta^2\tau^2  \left(\sigma_w^2+\frac{\sigma_w^2}{m} + \Gamma + 2G_w^2 + 2\mu^2 D_{\mathcal{W}}^2 \right) \nonumber \\& \quad \quad +  4\eta^2 \frac{\sigma_w^2}{m} + 16 \eta^2G_w^2+ 6\eta^4\tau^2  \mu^2\Big(\sigma_w^2 + 4 G_w^2 \Big).
\end{align}
Now we switch to the one iteration analysis on $\boldsymbol{\lambda}$
\begin{align}
 &\mathbb{E} \|\boldsymbol{\lambda}^{(s+1)} - \boldsymbol{\lambda} \|^2 \nonumber\\ &\quad \leq \mathbb{E}\|\boldsymbol{\lambda}^{(s)}- \boldsymbol{\lambda} \|^2 + \mathbb{E}\|\bar{\Delta}_{t}\|^2 +  \mathbb{E}\|\Delta_{t} - \bar{\Delta}_{t}\|^2 \nonumber\\ &\quad \quad -2\gamma\sum_{t=s\tau+1}^{(s+1)\tau} \mathbb{E}[(F(\bm{w} ^{(t)},\boldsymbol{\lambda})-F(\bm{w} ^{(t)},\boldsymbol{\lambda}^{(s)}))]. 
\end{align}
The bound of the variance of $\Delta_s$ is obtained from Lemma 1 as follows
\begin{align}
         \mathbb{E}[\|\Delta_{s} - \bar{\Delta}_{s}\|^2] \leq \gamma^2\tau^2\frac{\sigma_{\lambda}^2}{m}. 
\end{align} 
Putting all pieces together and taking the telescoping sum will yield the result in Theorem~\ref{theorem1}. We next present the proofs of several technical lemmas that will be used in establishing Theorem~\ref{theorem1}.
\subsection{PROOF OF LEMMA \ref{lemma: deviation1}}\label{proof_deviation1}
Consider $s\tau \leq t\leq (s+1)\tau$. Recall that we only perform averaging based on a uniformly sampled subset of clients $\mathcal{D}^{(\lfloor{\frac{t}{\tau}}\rfloor)}$ chosen from $\mathcal{N}$. Following the updating rule, we have
{\begin{align}
     & \mathbb{E}[\delta^{(t)}] \nonumber\\
     &  \quad= \mathbb{E}\Big[\frac{1}{m}\sum_{i\in\mathcal{D}^{(\lfloor{\frac{t}{\tau}}\rfloor)}}\|\bm{w} ^{(t)}_i- \bm{w}^{(t)}\|^2\Big] \nonumber\\
     & \quad \leq  \mathbb{E}\Big[\frac{1}{m}\sum_{i\in\mathcal{D}^{(\lfloor{\frac{t}{\tau}}\rfloor)}}\mathbb{E}\Big\|\boldsymbol{{w}}^{({\lfloor{\frac{t}{\tau}}\rfloor}\tau)}-   \sum_{r=s\tau}^{t -1} \eta  \Big[  \nabla \ell_i(\bm{w} ^{(r)}_i;\xi_i^{(r)})\nonumber\\
     & \quad \quad-  {\nabla}f_i(\boldsymbol{w}_i^{({\lfloor{\frac{t}{\tau}}\rfloor}\tau)}) - \mu (\boldsymbol{{w}}^{({\lfloor{\frac{t}{\tau}}\rfloor}\tau)} - \boldsymbol{w}_{i}^{(r)}) \Big ]  \nonumber\\
    &\quad \quad - \Big(\boldsymbol{{w}}^{({\lfloor{\frac{t}{\tau}}\rfloor}\tau)}- \frac{1}{m}\sum_{j\in\mathcal{D}^{(\lfloor{\frac{t}{\tau}}\rfloor)}}\sum_{r=s\tau}^{t -1}\eta  \Big[\nabla \ell_{j}(\bm{w} ^{(r)}_{j};\xi_{j}^{(r)}) \nonumber\\
    &\quad \quad-  {\nabla}f_{j}(\boldsymbol{w}_{j}^{({\lfloor{\frac{t}{\tau}}\rfloor}\tau)}) - \mu (\boldsymbol{{w}}^{({\lfloor{\frac{t}{\tau}}\rfloor}\tau)} - \boldsymbol{w}_{j}^{(r)}) \Big]\Big) \Big\|^2\Big]\nonumber\\
    &\quad =  \mathbb{E}\Big[\frac{1}{m}\sum_{i\in\mathcal{D}^{(\lfloor{\frac{t}{\tau}}\rfloor)}} \Big\| \sum_{r=s\tau}^{t-1}\eta  \Big[ \nabla \ell_i(\bm{w} ^{(r)}_i;\xi_i^{(r)})\nonumber\\
    &\quad \quad-  {\nabla}f_{i}(\boldsymbol{w}_{i}^{({\lfloor{\frac{t}{\tau}}\rfloor}\tau)}) + \mu\boldsymbol{w}_{i}^{(r)} \Big] \nonumber\\
    &\quad \quad- \frac{1}{m}\sum_{j\in\mathcal{D}^{(\lfloor{\frac{t}{\tau}}\rfloor)}}\sum_{r=s\tau}^{t-1}\eta  \Big[ \nabla \ell_{j}(\bm{w} ^{(r)}_{j};\xi_{j}^{(r)}) -  {\nabla}f_{j}(\boldsymbol{w}_{j}^{({\lfloor{\frac{t}{\tau}}\rfloor}\tau)}) \nonumber\\
    &\quad \quad+ \mu\boldsymbol{w}_{j}^{(r)}  \Big]\Big\|^2 \Big]\nonumber\\
    & \quad\leq \mathbb{E}\Big[\frac{1}{m}\sum_{i\in\mathcal{D}^{(\lfloor{\frac{t}{\tau}}\rfloor)}}\eta^2\tau\sum_{r=s\tau}^{(s+1)\tau} \Big\|  \nabla \ell_i(\bm{w} ^{(r)}_i;\xi_i^{(r)}) \nonumber\\
    & \quad \quad-  {\nabla}f_{i}(\boldsymbol{w}_{i}^{({\lfloor{\frac{t}{\tau}}\rfloor}\tau)}) + \mu\boldsymbol{w}_{i}^{(r)}  \nonumber\\
    & \quad \quad-\frac{1}{m}\sum_{j\in\mathcal{D}^{(\lfloor{\frac{t}{\tau}}\rfloor)}} \Big[\nabla \ell_j(\bm{w} ^{(r)}_{j};\xi_{j}^{(r)})  \nonumber\\
    & \quad \quad-  {\nabla}f_{j}(\boldsymbol{w}_{j}^{({\lfloor{\frac{t}{\tau}}\rfloor}\tau)}) + \mu\boldsymbol{w}_{j}^{(r)}\Big]\Big\|^2\Big]\nonumber \\
     &\quad =  \eta^2\tau\mathbb{E}\Big[\frac{1}{m}\sum_{i\in\mathcal{D}^{(\lfloor{\frac{t}{\tau}}\rfloor)}}\sum_{r=s\tau}^{(s+1)\tau} \Big\|  \nabla \ell_i(\bm{w} ^{(r)}_i;\xi_i^{(r)})  \nonumber\\
    & \quad \quad-  {\nabla}f_{i}(\boldsymbol{w}_{i}^{({\lfloor{\frac{t}{\tau}}\rfloor}\tau)}) + \mu\boldsymbol{w}_{i}^{(r)} -\nabla f_i(\bm{w}^{(r)}_i ) \nonumber\\
    & \quad \quad + \nabla f_i(\bm{w}^{(r)}_i ) -\nabla f_i(\bm{w}^{(r)} ) \vphantom{-\frac{1}{m}\sum_{j\in\mathcal{D}^{(\lfloor{\frac{t}{\tau}}\rfloor)}} \nabla f_{j}(\bm{w}^{(r)}_{j})} +\nabla f_i(\bm{w}^{(r)}) \nonumber\\
    & \quad \quad-\frac{1}{m}\sum_{j\in\mathcal{D}^{(\lfloor{\frac{t}{\tau}}\rfloor)}} \nabla f_{j}(\bm{w}^{(r)}) 
      + \frac{1}{m}\sum_{j\in\mathcal{D}^{(\lfloor{\frac{t}{\tau}}\rfloor)}} \nabla f_{j}(\bm{w}^{(r)}) \nonumber\\
    &\quad \quad -\frac{1}{m}\sum_{j\in\mathcal{D}^{(\lfloor{\frac{t}{\tau}}\rfloor)}} \nabla f_{j}(\bm{w}^{(r)}_{j})+\frac{1}{m}\sum_{j\in\mathcal{D}^{(\lfloor{\frac{t}{\tau}}\rfloor)}} \nabla f_{j}(\bm{w}^{(r)}_{j})\nonumber\\
    &\quad \quad- \frac{1}{m}\sum_{j\in\mathcal{D}^{(\lfloor{\frac{t}{\tau}}\rfloor)}} \nabla \ell_{j}(\bm{w} ^{(r)}_{j};\xi_{j}^{(r)}) \nonumber\\
    &\quad \quad+ \frac{1}{m}\sum_{j\in\mathcal{D}^{(\lfloor{\frac{t}{\tau}}\rfloor)}}  {\nabla}f_{j}(\boldsymbol{w}_{j}^{({\lfloor{\frac{t}{\tau}}\rfloor}\tau)}) \nonumber\\
    &\quad \quad- \frac{1}{m}\sum_{j\in\mathcal{D}^{(\lfloor{\frac{t}{\tau}}\rfloor)}} \mu\boldsymbol{w}_{j}^{(r)}\Big\|^2\Big]. \label{eq: deviation 1}
      \end{align}}
Applying Jensen's inequality to split the norm yields
\begin{align}
      &\mathbb{E}[\delta^{(t)}]  \nonumber\\ 
      & \quad\leq  9\eta^2\tau\sum_{r=s\tau}^{(s+1)\tau}  \Big(\sigma_w^2 + L^2\mathbb{E}\Big[\frac{1}{m}\sum_{i\in\mathcal{D}^{(\lfloor{\frac{t}{\tau}}\rfloor)}}\Big\|\bm{w}^{(r)}_i - \bm{w}^{(r)}  \Big\|^2\Big]\nonumber\\ 
      & \quad \quad+ L^2\mathbb{E}\Big[\frac{1}{m}\sum_{j\in\mathcal{D}^{(\lfloor{\frac{t}{\tau}}\rfloor)}}\Big\|\bm{w}^{(r)}_{j} - \bm{w}^{(r)}  \Big\|^2\Big] \nonumber\\ 
      & \quad \quad +\mathbb{E}\Big[\frac{1}{m}\sum_{j\in\mathcal{D}^{(\lfloor{\frac{t}{\tau}}\rfloor)}}\left\|\nabla f_{i}(\bm{w}^{(r)}) - \nabla f_{j}(\bm{w} ^{(r)}) \right\|^2\Big]+\frac{\sigma_w^2}{m} \nonumber\\ 
      & \quad \quad +\mathbb{E}\Big[\frac{1}{m}\sum_{i\in\mathcal{D}^{(\lfloor{\frac{t}{\tau}}\rfloor)}}\Big\|{{\nabla}f_{i}(\boldsymbol{w}_{i}^{({\lfloor{\frac{t}{\tau}}\rfloor}\tau)}) } \Big\|^2\Big] \nonumber\\ 
      & \quad \quad+\mathbb{E}\Big[\frac{1}{m}\sum_{i\in\mathcal{D}^{(\lfloor{\frac{t}{\tau}}\rfloor)}}\Big\|\mu\boldsymbol{w}_{i}^{(r)} \Big\|^2\Big] \nonumber\\ 
      & \quad \quad +\mathbb{E}\Big[\frac{1}{m}\sum_{j\in\mathcal{D}^{(\lfloor{\frac{t}{\tau}}\rfloor)}}\Big\|{{\nabla}f_{j}(\boldsymbol{w}_{j}^{({\lfloor{\frac{t}{\tau}}\rfloor}\tau)}) } \Big\|^2\Big] \nonumber\\ 
      & \quad \quad+\mathbb{E}\Big[\frac{1}{m}\sum_{j\in\mathcal{D}^{(\lfloor{\frac{t}{\tau}}\rfloor)}}\Big\|\mu\boldsymbol{w}_{j}^{(r)} \Big\|^2\Big]  \Big)   \nonumber\\ 
      &\quad \leq  9\eta^2\tau  \sum_{r=s\tau}^{(s+1)\tau}\Big(\sigma^2_w + 2L^2\mathbb{E}[\delta^{(r)}]  \nonumber\\ 
      & \quad \quad + \Gamma + \frac{\sigma^2_w}{m} + 2G_w^2 + 2\mu^2 D_{\mathcal{W}}^2  \Big) \label{eq: deviation 3}.
\end{align}
Now, we sum \eqref{eq: deviation 3} over $t = s\tau$ to $(s+1)\tau$ to get
\begin{align}
    \sum_{t=s\tau}^{(s+1)\tau}\mathbb{E}[\delta^{(t)}] & \leq 9\eta^2\tau   \sum_{t=s\tau}^{(s+1)\tau}\sum_{r=s\tau}^{(s+1)\tau}\Big(\sigma^2_w + 2 L^2\mathbb{E}[\delta^{(r)}]   \nonumber\\ 
      & \quad + \Gamma +\frac{\sigma^2_w}{m} + 2 G_w^2 + 2\mu^2 D_{\mathcal{W}}^2   \Big) \nonumber\\
     & = 9\eta^2\tau^2 \sum_{r=s\tau}^{(s+1)\tau}\Big(\sigma_w^2 + 2L^2\mathbb{E}[\delta^{(r)}]   \nonumber\\ 
      & \quad + \Gamma +\frac{\sigma_w^2}{m}  + 2G_w^2 + 2\mu^2 D_{\mathcal{W}}^2 \Big).\label{eq: deviation 4} 
\end{align}
Re-arranging the terms in (\ref{eq: deviation 4}) and using the fact $1-18\eta^2\tau^2 L^2 \geq \frac{1}{2}$ yields
\begin{align}
   & \sum_{t=s\tau}^{(s+1)\tau}\mathbb{E}[\delta^{(t)}] \nonumber \\ & \quad \leq 18\eta^2\tau^2 \sum_{r=s\tau}^{(s+1)\tau}\Big(\sigma_w^2    + \Gamma +\frac{\sigma_w^2}{m} + 2 G_w^2 + 2\mu^2 D_{\mathcal{W}}^2  \Big).\nonumber
\end{align}
Summing over communication steps $s=0$ to $S-1$, and dividing both sides by $T=S\tau$ gives
\begin{align}
  & \frac{1}{T}\sum_{t=0}^{T} \mathbb{E}[\delta^{(t)}] \nonumber \\ & \quad \leq 18\eta^2\tau^2  \left(\sigma_w^2+ \Gamma+\frac{\sigma_w^2}{m}  + 2G_w^2 + 2\mu^2 D_{\mathcal{W}}^2  \right),\nonumber
\end{align}
as desired.
\subsection{PROOF OF LEMMA \ref{lemma: deviation2}}
\label{proof: deviation2}
Consider $s\tau \leq t\leq (s+1)\tau$. Recall that we only perform averaging based on a uniformly sampled subset of clients $\mathcal{D}^{(\lfloor{\frac{t}{\tau}}\rfloor)}$ chosen from $\mathcal{N}$. 
We use the following virtual updates, 
$\boldsymbol{w}_{i}^{(t)} = \prod_{\mathcal{W}}(\boldsymbol{{w}}^{({\lfloor{\frac{t}{\tau}}\rfloor}\tau)}- \eta \sum_{r = {{\lfloor{\frac{t}{\tau}}\rfloor}\tau}}^{t-1} \Big[  \nabla \ell_i(\bm{w} ^{(r)}_i;\xi_i^{(r)}) -  {\nabla}f_i(\boldsymbol{w}_i^{({\lfloor{\frac{t}{\tau}}\rfloor}\tau)}) - \mu (\boldsymbol{{w}}^{({\lfloor{\frac{t}{\tau}}\rfloor}\tau)} - \boldsymbol{w}_{i}^{(r)}) \Big ])$ and 
$\boldsymbol{w}^{(t)} =   \prod_{\mathcal{W}}(\boldsymbol{{w}}^{({\lfloor{\frac{t}{\tau}}\rfloor}\tau)}- \eta \sum_{r = {{\lfloor{\frac{t}{\tau}}\rfloor}\tau}}^{t-1} \bm{u}^{(r)})$. Following the updating rule, we have
\begin{align}
     &\mathbb{E}\Big[ \beta^{(t)} \Big] \nonumber \\  &\quad= \mathbb{E}\Big[\frac{1}{m}\sum_{i\in\mathcal{D}^{(\lfloor{\frac{t}{\tau}}\rfloor)}} \Big\|(\boldsymbol{w}_{i}^{(t)} - \boldsymbol{{w}}^{({\lfloor{\frac{t}{\tau}}\rfloor}\tau)})\Big\|^2\Big] \nonumber \\  &\quad \leq \mathbb{E}\Big[\frac{1}{m}\sum_{i\in\mathcal{D}^{(\lfloor{\frac{t}{\tau}}\rfloor)}} \Big\| \boldsymbol{{w}}^{({\lfloor{\frac{t}{\tau}}\rfloor}\tau)}-  \eta \sum_{r = {{\lfloor{\frac{t}{\tau}}\rfloor}\tau}}^{t-1} \Big[  \nabla \ell_i(\bm{w} ^{(r)}_i;\xi_i^{(r)}) \nonumber \\  &\quad \quad-  {\nabla}f_i(\boldsymbol{w}_i^{({\lfloor{\frac{t}{\tau}}\rfloor}\tau)}) - \mu (\boldsymbol{{w}}^{({\lfloor{\frac{t}{\tau}}\rfloor}\tau)} - \boldsymbol{w}_{i}^{(r)}) \Big ]  - \boldsymbol{{w}}^{({\lfloor{\frac{t}{\tau}}\rfloor}\tau)}\Big\|^2\Big]\nonumber \\ & \quad
     = \mathbb{E}\Big[\frac{1}{m}\sum_{i\in\mathcal{D}^{(\lfloor{\frac{t}{\tau}}\rfloor)}} \Big\| \eta \sum_{r = {{\lfloor{\frac{t}{\tau}}\rfloor}\tau}}^{t-1} \Big[  \nabla \ell_i(\bm{w} ^{(r)}_i;\xi_i^{(r)}) \nonumber \\ & \quad \quad- {\nabla}f_i(\boldsymbol{w}_i^{({\lfloor{\frac{t}{\tau}}\rfloor}\tau)}) - \mu (\boldsymbol{{w}}^{({\lfloor{\frac{t}{\tau}}\rfloor}\tau)} - \boldsymbol{w}_{i}^{(r)}) \Big ]\Big\|^2\Big]\nonumber \\ &\quad \leq \mathbb{E}\Big[\frac{1}{m} \sum_{i\in\mathcal{D}^{(\lfloor{\frac{t}{\tau}}\rfloor)}} \eta^2 \tau \sum_{r = {{\lfloor{\frac{t}{\tau}}\rfloor}\tau}}^{{({\lfloor{\frac{t}{\tau}}\rfloor} + 1)\tau}}  \Big\| \nabla \ell_i(\bm{w} ^{(r)}_i;\xi_i^{(r)}) \nonumber \\ & \quad \quad-  {\nabla}f_i(\boldsymbol{w}_i^{({\lfloor{\frac{t}{\tau}}\rfloor}\tau)}) - \mu (\boldsymbol{{w}}^{({\lfloor{\frac{t}{\tau}}\rfloor}\tau)} - \boldsymbol{w}_{i}^{(r)})\Big\|^2\Big]\nonumber \\ & \quad  
     = \eta^2 \tau \mathbb{E}\Big[\frac{1}{m} \sum_{i\in\mathcal{D}^{(\lfloor{\frac{t}{\tau}}\rfloor)}} \sum_{r = {{\lfloor{\frac{t}{\tau}}\rfloor}\tau}}^{{({\lfloor{\frac{t}{\tau}}\rfloor} + 1)\tau}}  \Big\| \nabla \ell_i(\bm{w} ^{(r)}_i;\xi_i^{(r)}) \nonumber \\ & \quad \quad- {\nabla}f_i(\boldsymbol{w}_i^{(r)}) + {\nabla}f_i(\boldsymbol{w}_i^{(r)}) - {\nabla}f_i(\boldsymbol{w}_i^{({\lfloor{\frac{t}{\tau}}\rfloor}\tau)}) \nonumber \\ & \quad \quad - \mu (\boldsymbol{{w}}^{({\lfloor{\frac{t}{\tau}}\rfloor}\tau)} - \boldsymbol{w}_{i}^{(r)})\Big\|^2\Big]. 
\end{align}
We apply Jensen's inequality to split the norms, which yields
\begin{align}
    \mathbb{E}[\beta^{(t)}]& \leq 3\eta^2\tau \sum_{r = {{\lfloor{\frac{t}{\tau}}\rfloor}\tau}}^{{({\lfloor{\frac{t}{\tau}}\rfloor} + 1)\tau}} \Big( \sigma_w^2 + 4 G_w^2 \nonumber \\ & \quad + \mu^2\mathbb{E}\Big[\frac{1}{m}\sum_{i\in\mathcal{D}^{(\lfloor{\frac{t}{\tau}}\rfloor)}} \Big\|(\boldsymbol{w}_{i}^{(r)} - \boldsymbol{{w}}^{({\lfloor{\frac{t}{\tau}}\rfloor}\tau)})\Big\|^2\Big] \Big)\nonumber \\ & \leq 3\eta^2\tau \sum_{r = {{\lfloor{\frac{t}{\tau}}\rfloor}\tau}}^{{({\lfloor{\frac{t}{\tau}}\rfloor} + 1)\tau}} \Big(\sigma_w^2 + 4 G_w^2 + \mu^2\mathbb{E} \Big[\beta^{(r)} \Big] \Big). \label{20}
\end{align}
We sum (\ref{20}) over $t = s\tau$ to $(s+1)\tau$ to get 
\begin{align}
&\sum_{t = {{\lfloor{\frac{t}{\tau}}\rfloor}\tau}}^{{({\lfloor{\frac{t}{\tau}}\rfloor} + 1)\tau}} \mathbb{E}[\beta^{(t)}] \nonumber \\ & \quad\leq 3\eta^2\tau \sum_{t = {{\lfloor{\frac{t}{\tau}}\rfloor}\tau}}^{{({\lfloor{\frac{t}{\tau}}\rfloor} + 1)\tau}} \sum_{r = {{\lfloor{\frac{t}{\tau}}\rfloor}\tau}}^{{({\lfloor{\frac{t}{\tau}}\rfloor} + 1)\tau}} \Big(\sigma_w^2 + 4 G_w^2 + \mu^2\mathbb{E} \left[\beta^{(r)} \right] \Big)\nonumber \\ & \quad \leq 3\eta^2\tau^2 \sum_{r = {{\lfloor{\frac{t}{\tau}}\rfloor}\tau}}^{{({\lfloor{\frac{t}{\tau}}\rfloor} + 1)\tau}} \Big(\sigma_w^2 + 4 G_w^2 + \mu^2\mathbb{E} \left[\beta^{(r)} \right] \Big).
\end{align}
We rearrange the terms and use the fact that $1 - 3\eta^2 \tau^2 \mu^2 \geq \frac{1}{2}$ yields
\begin{align}
\sum_{t = {{\lfloor{\frac{t}{\tau}}\rfloor}\tau}}^{{({\lfloor{\frac{t}{\tau}}\rfloor} + 1)\tau}} \mathbb{E}[\beta^{(t)}]& \leq 6\eta^2\tau^2 \sum_{r = {{\lfloor{\frac{t}{\tau}}\rfloor}\tau}}^{{({\lfloor{\frac{t}{\tau}}\rfloor} + 1)\tau}} \Big(\sigma_w^2 + 4 G_w^2 \Big).
\end{align}
Summing over the the communication steps $s = 0$ to $S-1$, then dividing both sides by $T = S\tau$ gives 
\begin{align}
\frac{1}{T}\sum_{t = 0}^{T} \mathbb{E}[\beta^{(t)}]& \leq 6\eta^2\tau^2  \Big(\sigma_w^2 + 4 G_w^2 \Big).
\end{align}
\subsection{PROOF OF LEMMA \ref{lemma: deviation3}}
\label{proof: deviation3}
Consider $s\tau \leq t\leq (s+1)\tau$. Recall that we only perform averaging based on a uniformly sampled subset of clients $\mathcal{D}^{(\lfloor{\frac{t}{\tau}}\rfloor)}$ chosen from $\mathcal{N}$. 
Following the updating rule, we have
\begin{align}
    &\mathbb{E}\Big[ \varphi^{(t)} \Big] \nonumber \\ & \,\,= \mathbb{E} \Big[ \frac{1}{m} \sum_{i\in{\mathcal{D}^{(\lfloor{\frac{t}{\tau}}\rfloor)}}} \| \bm{w}_i^{t+1} - \bm{w}^t \|^2 \Big]\nonumber \\ & \,\, \leq \mathbb{E} \Big[ \frac{1}{m} \sum_{i\in{\mathcal{D}^{(\lfloor{\frac{t}{\tau}}\rfloor)}}} \| \bm{w}_i^{t} - \eta ( \nabla \ell_i( \bm{w} ^{(t)}_i;\xi^{(t)}_i)-  {\nabla}f_i(\boldsymbol{w}_i^{({\lfloor{\frac{t}{\tau}}\rfloor}\tau)}) \nonumber \\ & \quad \quad - \mu (\boldsymbol{{w}}^{({\lfloor{\frac{t}{\tau}}\rfloor}\tau)} - \boldsymbol{w}_{i}^{(t)})) - \bm{w}^t \|^2 \Big]\nonumber \\& \,\, \leq \mathbb{E} \Big[ \frac{1}{m} \sum_{i\in{\mathcal{D}^{(\lfloor{\frac{t}{\tau}}\rfloor)}}} \| \bm{w}_i^{t} - \bm{w}^t - \eta ( \nabla \ell_i( \bm{w} ^{(t)}_i;\xi^{(t)}_i) + {\nabla}f_i(\boldsymbol{w}_i^{t}) \nonumber \\& \quad  \quad - {\nabla}f_i(\boldsymbol{w}_i^{({\lfloor{\frac{t}{\tau}}\rfloor}\tau)}) -{\nabla}f_i(\boldsymbol{w}_i^{t}) - \mu (\boldsymbol{{w}}^{({\lfloor{\frac{t}{\tau}}\rfloor}\tau)} - \boldsymbol{w}_{i}^{(t)})) \|^2 \Big].
\end{align}
We apply Jensen's inequality to split the norms, which yields
\begin{align}
    \mathbb{E}[\varphi^{(t)}]&  \leq 4  
\Big[ \mathbb{E} \Big[ \frac{1}{m} \sum_{i\in{\mathcal{D}^{(\lfloor{\frac{t}{\tau}}\rfloor)}}} \| \bm{w}_i^{t} - \bm{w}^t \|^2 \Big] \nonumber \\& \quad+ \eta^2 \mathbb{E} \Big[\frac{1}{m} \sum_{i\in{\mathcal{D}^{(\lfloor{\frac{t}{\tau}}\rfloor)}}} \| \nabla \ell_i( \bm{w} ^{(t)}_i;\xi^{(t)}_i) + {\nabla}f_i(\boldsymbol{w}_i^{t})  \|^2 \Big] \nonumber \\& \quad + \eta^2\mathbb{E} \Big[ \frac{1}{m} \sum_{i\in{\mathcal{D}^{(\lfloor{\frac{t}{\tau}}\rfloor)}}}\|{\nabla}f_i(\boldsymbol{w}_i^{({\lfloor{\frac{t}{\tau}}\rfloor}\tau)})      -{\nabla}f_i(\boldsymbol{w}_i^{t})\|^2 \Big] \nonumber \\& \quad + \eta^2\mu^2 \mathbb{E} \Big[ \frac{1}{m} \sum_{i\in{\mathcal{D}^{(\lfloor{\frac{t}{\tau}}\rfloor)}}}\|\boldsymbol{{w}}^{({\lfloor{\frac{t}{\tau}}\rfloor}\tau)} - \boldsymbol{w}_{i}^{(t)} \|^2 \Big] \Big] \nonumber \\ &  \leq 4  
\Big[ \mathbb{E} \Big[ \delta^{(t)} \Big] + \eta^2 \frac{\sigma_w^2}{m}    + 4 \eta^2G_w^2 + \eta^2\mu^2 \mathbb{E} \Big[ \beta^{(t)} \Big] \Big]. \label{24}
\end{align}
We sum (\ref{24}) over $t = s\tau$ to $(s+1)\tau$ to get
\begin{align}
&\sum_{t = {{\lfloor{\frac{t}{\tau}}\rfloor}\tau}}^{{({\lfloor{\frac{t}{\tau}}\rfloor} + 1)\tau}} \mathbb{E}[\varphi^{(t)}] \nonumber \\ & \,\,\,\, \leq 4\sum_{t = {{\lfloor{\frac{t}{\tau}}\rfloor}\tau}}^{{({\lfloor{\frac{t}{\tau}}\rfloor} + 1)\tau}}   
\left[ \mathbb{E} \Big[ \delta^{(t)} \Big] + \eta^2 \frac{\sigma_w^2}{m}    + 4 \eta^2G_w^2 + \eta^2\mu^2 \mathbb{E} \Big[ \beta^{(t)} \Big] \right].
\end{align}
Summing over the communication steps $s = 0$ to $S-1$, then dividing both sides by $T = S\tau$ yields
\begin{align}
\frac{1}{T}\sum_{t = 0}^{T} \mathbb{E}[\varphi^{(t)}]& \leq  4\frac{1}{T}\sum_{t = 0}^{T} \mathbb{E}[\delta^{(t)}] + 4\eta^2 \frac{\sigma_w^2}{m}  \nonumber \\ & \quad+ 16 \eta^2G_w^2 + \eta^2\mu^2 \frac{1}{T}\sum_{t = 0}^{T} \mathbb{E}[\beta^{(t)}] \nonumber \\ & 
  \leq 72\eta^2\tau^2  \left(\sigma_w^2+\frac{\sigma_w^2}{m} + \Gamma + 2 G_w^2 + 2\mu^2 D_{\mathcal{W}}^2  \right) \nonumber \\ & \quad+  4\eta^2 \frac{\sigma_w^2}{m}  + 16 \eta^2 G_w^2 \nonumber \\ & \quad+ 6\eta^4\tau^2  \mu^2\Big(\sigma_w^2 + 4 G_w^2 \Big).
\end{align}
\subsection{PROOF OF LEMMA \ref{lemma: one iteration w}}\label{proof: one iteration w}
From the global model updating rule, we have
\begin{align}
\bm{w}^{t+1} - \bm{w}^t &= \frac{1}{m} \sum_{i\in \mathcal{D}^{(\lfloor{\frac{t}{\tau}}\rfloor)}} (\bm{w}_i^{t+1}-\bm{w}_i^{t}) - \frac{1}{\mu}(\bm{c}^{t+1} - \bm{c}^t).
\end{align}
Substituting the expression for the control variable update
\begin{align}
\bm{c}^{t+1} - \bm{c}^t &= -  \frac{\mu}{N} \sum_{i\in \mathcal{D}^{(\lfloor{\frac{t}{\tau}}\rfloor)}} (\bm{w}_i^{t+1} - \bm{w}^t).
\end{align}
Replacing this into the global update rule gives
\begin{align}
&\bm{w}^{t+1} - \bm{w}^t \nonumber \\ & \quad = \frac{1}{m} \sum_{i\in \mathcal{D}^{(\lfloor{\frac{t}{\tau}}\rfloor)}} (\bm{w}_i^{t+1} - \bm{w}_i^{t}) + \frac{1}{N} \sum_{i\in \mathcal{D}^{(\lfloor{\frac{t}{\tau}}\rfloor)}}(\bm{w}_i^{t+1} - \bm{w}^t).
\end{align}
Hence, we obtain the expression for the global model update 
\begin{align}
  \bm{w}^{t+1}&=  \prod_{\mathcal{W}}\Big( \bm{w}^t -\frac{1}{m} \eta\sum_{i\in{\mathcal{D}^{(\lfloor{\frac{t}{\tau}}\rfloor)}}} ( \nabla \ell_i( \bm{w} ^{(t)}_i;\xi^{(t)}_i)\nonumber \\ & \quad-  {\nabla}f_i(\boldsymbol{w}_i^{({\lfloor{\frac{t}{\tau}}\rfloor}\tau)}) - \mu (\boldsymbol{{w}}^{({\lfloor{\frac{t}{\tau}}\rfloor}\tau)} - \boldsymbol{w}_{i}^{(t)}))\Big) \nonumber\\
  & \quad + \frac{1}{N} \sum_{i\in \mathcal{D}^{(\lfloor{\frac{t}{\tau}}\rfloor)}}(\bm{w}_i^{t+1} - \bm{w}^t). 
  \end{align}
The one iteration primal update is expressed as follows
  \begin{align}
   &\mathbb{E}\|\bm{w}^{(t+1)} - \bm{w} \|^2 \nonumber\\
  & \quad =  \mathbb{E} \Big[ \Big.\| \prod_{\mathcal{W}} \Big(\bm{w}^t -\frac{1}{m} \eta\sum_{i\in{\mathcal{D}^{(\lfloor{\frac{t}{\tau}}\rfloor)}}}  \big(\nabla \ell_i( \bm{w} ^{(t)}_i;\xi^{(t)}_i)\nonumber\\
  & \quad \quad-  {\nabla}f_i(\boldsymbol{w}_i^{({\lfloor{\frac{t}{\tau}}\rfloor}\tau)}) - \mu (\boldsymbol{{w}}^{({\lfloor{\frac{t}{\tau}}\rfloor}\tau)} - \boldsymbol{w}_{i}^{(t)})\big)\Big)   
   \nonumber \\   
   &\quad \quad+ \frac{1}{N} \sum_{i\in \mathcal{D}^{(\lfloor{\frac{t}{\tau}}\rfloor)}}(\bm{w}_i^{t+1} - \bm{w}^t) -\bm{w} \|^2 \Big]   \nonumber\\
   &\quad=  \mathbb{E} \Big[ \Big.\| \prod_{\mathcal{W}} \Big(\bm{w}^t -\frac{1}{m} \eta\sum_{i\in{\mathcal{D}^{(\lfloor{\frac{t}{\tau}}\rfloor)}}} \big(\nabla \ell_i( \bm{w} ^{(t)}_i;\xi^{(t)}_i) \nonumber \\   
   &\quad \quad - \nabla f_i( \bm{w} ^{(t)}_i)  + \nabla f_i( \bm{w} ^{(t)}_i)  - {\nabla}f_i(\boldsymbol{w}_i^{({\lfloor{\frac{t}{\tau}}\rfloor}\tau)})  \Big.
\nonumber \\   
   &\quad \quad  -\mu (\boldsymbol{{w}}^{({\lfloor{\frac{t}{\tau}}\rfloor}\tau)} - \boldsymbol{w}_{i}^{(t)})\big)\Big)  
   \nonumber \\   
   &\quad \quad  + \frac{1}{N} \sum_{i\in \mathcal{D}^{(\lfloor{\frac{t}{\tau}}\rfloor)}}(\bm{w}_i^{t+1} - \bm{w}^t) -\bm{w} \|^2 \Big]  \nonumber\\
    & \quad  \leq \mathbb{E}\|\bm{w} ^{(t)}  -\frac{1}{m} \eta\sum_{i\in{\mathcal{D}^{(\lfloor{\frac{t}{\tau}}\rfloor)}}}   \nabla f_i( \bm{w} ^{(t)}_i) - \bm{w}\|^2 +  \mathbb{E}\| \nonumber \\   
   &\quad \quad -\frac{1}{m} \eta\sum_{i\in{\mathcal{D}^{(\lfloor{\frac{t}{\tau}}\rfloor)}}} \big(\nabla \ell_i( \bm{w} ^{(t)}_i;\xi^{(t)}_i) -  \nabla f_i( \bm{w} ^{(t)}_i)  \nonumber \\   
   &\quad \quad-  \nabla f_i(\boldsymbol{w}_i^{({\lfloor{\frac{t}{\tau}}\rfloor}\tau)})  - \mu (\boldsymbol{{w}}^{({\lfloor{\frac{t}{\tau}}\rfloor}\tau)} - \boldsymbol{w}_{i}^{(t)})\big)   
   \nonumber \\   
   &\quad \quad  + \frac{1}{N} \sum_{i\in \mathcal{D}^{(\lfloor{\frac{t}{\tau}}\rfloor)}}(\bm{w}_i^{t+1} - \bm{w}^t)    \|^2 \nonumber\\
   & \quad  \leq \mathbb{E}\|\bm{w} ^{(t)}- \bm{w}\|^2 \nonumber\\ & \quad \quad +  \underbrace{\mathbb{E}[- 2 \eta \langle \frac{1}{m}\sum_{i\in \mathcal{D}^{(\lfloor{\frac{t}{\tau}}\rfloor)}} \nabla f_i (\bm{w} ^{(t)}_i),\bm{w} ^{(t)}- \bm{w} \rangle]}_{T_1}\nonumber\\ & \quad \quad + \underbrace{ \eta^2 \mathbb{E}\| \frac{1}{m}\sum_{i\in \mathcal{D}^{(\lfloor{\frac{t}{\tau}}\rfloor)}} \nabla f_i \bm{w} ^{(t)}_i)\|^2}_{T_2} \nonumber\\ & \quad \quad  + \underbrace{\mathbb{E}\|-\frac{\eta}{m} \sum_{i\in{\mathcal{D}^{(\lfloor{\frac{t}{\tau}}\rfloor)}}} \big(\nabla \ell_i( \bm{w} ^{(t)}_i;\xi^{(t)}_i)  
-  \nabla f_i( \bm{w} ^{(t)}_i)}_{T_3} \nonumber\\ &\quad \quad -  \underbrace{ \nabla f_i(\boldsymbol{w}_i^{({\lfloor{\frac{t}{\tau}}\rfloor}\tau)}) - \mu (\boldsymbol{{w}}^{({\lfloor{\frac{t}{\tau}}\rfloor}\tau)} - \boldsymbol{w}_{i}^{(t)}) \big)}_{T_3} \nonumber\\ &\quad \quad + \underbrace{\frac{1}{N} \sum_{i\in \mathcal{D}^{(\lfloor{\frac{t}{\tau}}\rfloor)}}(\bm{w}_i^{t+1} - \bm{w}^t) \|^2}_{T_3}.  \label{l20}
\end{align}
We start by bounding $T_1$ first as follows
\begin{align}
   T_1 &=  \mathbb{E}_{\mathcal{D}^{(\lfloor{\frac{t}{\tau}}\rfloor)}}\Big[ \frac{1}{m} \sum_{i\in \mathcal{D}^{(\lfloor{\frac{t}{\tau}}\rfloor)}}\Big[- 2\eta\Big \langle\nabla f_i(\bm{w}^{(t)}_i), \bm{w} ^{(t)}- \bm{w}^{(t)}_i  \Big\rangle \nonumber\\ &\quad - 2\eta \Big\langle \nabla f_i(\bm{w} ^{(t)}_i), \bm{w}^{(t)}_i - \bm{w}  \Big\rangle\Big]\Big]  \label{l21} \\
   &\leq \mathbb{E}_{\mathcal{D}^{(\lfloor{\frac{t}{\tau}}\rfloor)}}\Big[2\eta \frac{1}{m}\sum_{i\in \mathcal{D}^{(\lfloor{\frac{t}{\tau}}\rfloor)}}  \Big[f_i(\bm{w} ^{(t)}_i) - f_i(\bm{w} ^{(t)})\nonumber\\ &\quad + \frac{L}{2}\|\bm{w} ^{(t)} - \bm{w}^{(t)}_i \|^2 +  f_i(\bm{w}) - f_i(\bm{w} ^{(t)}_i) \Big]\Big] \label{l22} \\
   & = -2\eta \mathbb{E}\left[\sum_{i=1}^N  \lambda^{(\lfloor{\frac{t}{\tau}}\rfloor)}_i f_i(\bm{w} ^{(t)})  -  \lambda^{(\lfloor{\frac{t}{\tau}}\rfloor)}_i f_i(\bm{w})\right] \nonumber\\ &\quad + L\eta \mathbb{E}\Big[\frac{1}{m}\sum_{i\in \mathcal{D}^{(\lfloor{\frac{t}{\tau}}\rfloor)}} \|\bm{w} ^{(t)} - \bm{w}^{(t)}_i \|^2 \Big]\nonumber\\
   & = -2\eta\mathbb{E}\Big[F(\bm{w} ^{(t)},\boldsymbol{\lambda}^{(\lfloor{\frac{t}{\tau}}\rfloor)})  - F(\bm{w} ,\boldsymbol{\lambda}^{(\lfloor{\frac{t}{\tau}}\rfloor)})\Big] + L\eta \mathbb{E}\Big[ \delta^{(t)} \Big],\nonumber
\end{align}
where from~(\ref{l21})  to~(\ref{l22}) we use the smoothness and convexity properties.\\
Next, the term $T_2$ is bounded as follows
\begin{align}
   T_2 &= \eta^2  \mathbb{E}\left\| \frac{1}{m}\sum_{i\in\mathcal{D}^{(\lfloor{\frac{t}{\tau}}\rfloor)}} \nabla  f_i(\bm{w} ^{(t)}_i) \right\|^2\nonumber\\ & \leq \eta^2 \frac{1}{m}\sum_{i\in\mathcal{D}^{(\lfloor{\frac{t}{\tau}}\rfloor)}} \mathbb{E}\left\|   \nabla  f_i(\bm{w} ^{(t)}_i) \right\|^2    \leq \eta^2 G_w^2. 
\end{align} 
Finally, we bound the term $T_3$ as follows
\begin{align}
   T_3 &= \mathbb{E}\|-\frac{1}{m} \eta\sum_{i\in{\mathcal{D}^{(\lfloor{\frac{t}{\tau}}\rfloor)}}} \big(\nabla \ell_i( \bm{w} ^{(t)}_i;\xi^{(t)}_i) -  \nabla f_i( \bm{w} ^{(t)}_i) \nonumber\\ &\quad -  \nabla f_i(\boldsymbol{w}_i^{({\lfloor{\frac{t}{\tau}}\rfloor}\tau)}) - \mu (\boldsymbol{{w}}^{({\lfloor{\frac{t}{\tau}}\rfloor}\tau)} - \boldsymbol{w}_{i}^{(t)}) \big) \nonumber\\ & \quad + \frac{1}{N} \sum_{i\in \mathcal{D}^{(\lfloor{\frac{t}{\tau}}\rfloor)}}(\bm{w}_i^{t+1} - \bm{w}^t) \|^2 \nonumber  \\ & \leq 4 
\Big[ \mathbb{E}  \| \frac{1}{m} \eta\sum_{i\in{\mathcal{D}^{(\lfloor{\frac{t}{\tau}}\rfloor)}}} \big(\nabla f_i( \bm{w} ^{(t)}_i) - \nabla \ell_i( \bm{w} ^{(t)}_i;\xi^{(t)}_i)   \big) \|^2 \nonumber\\ &\quad + \mathbb{E}  \| \frac{1}{m} \eta\sum_{i\in{\mathcal{D}^{(\lfloor{\frac{t}{\tau}}\rfloor)}}}\nabla f_i(\boldsymbol{w}_i^{({\lfloor{\frac{t}{\tau}}\rfloor}\tau)}) \|^2 \nonumber \\
& \quad + \mathbb{E}  \| \frac{1}{m} \eta\sum_{i\in{\mathcal{D}^{(\lfloor{\frac{t}{\tau}}\rfloor)}}}\mu (\boldsymbol{{w}}^{({\lfloor{\frac{t}{\tau}}\rfloor}\tau)} - \bm{w}_i^t) \|^2 \nonumber\\ &\quad+ \mathbb{E}  \| \frac{1}{N} \sum_{i\in \mathcal{D}^{(\lfloor{\frac{t}{\tau}}\rfloor)}}(\bm{w}_i^{t+1} - \bm{w}^t) \|^2
\Big] \nonumber \\
& \leq 4 \eta^2 \mathbb{E}\left[\|\bm{u}^{(t)} - \Bar{\bm{u}}^{(t)}\|^2\right] + \eta^2 G_w^2 \nonumber \\
& \quad+ 4\eta^2 \mu^2 \mathbb{E}  \| \frac{1}{m} \sum_{i\in{\mathcal{D}^{(\lfloor{\frac{t}{\tau}}\rfloor)}}}(\boldsymbol{{w}}^{({\lfloor{\frac{t}{\tau}}\rfloor}\tau)} - \bm{w}_i^t) \|^2  \nonumber \\
& \quad+ 4\mathbb{E}  \| \frac{1}{N} \sum_{i\in \mathcal{D}^{(\lfloor{\frac{t}{\tau}}\rfloor)}}(\bm{w}_i^{t+1} - \bm{w}^t) \|^2 \nonumber \\
& \leq 4 \eta^2 \mathbb{E}\left[\|\bm{u}^{(t)} - \Bar{\bm{u}}^{(t)}\|^2\right] + \eta^2 G_w^2 \nonumber \\
& \quad + 4\eta^2 \mu^2  \mathbb{E} \Big[ \frac{1}{m} \sum_{i\in{\mathcal{D}^{(\lfloor{\frac{t}{\tau}}\rfloor)}}}\|\boldsymbol{{w}}^{({\lfloor{\frac{t}{\tau}}\rfloor}\tau)} - \bm{w}_i^t \|^2 \Big]  \nonumber \\
& \quad + 4 \frac{m^2}{N^2} \mathbb{E} \Big[ \frac{1}{m} \sum_{i\in{\mathcal{D}^{(\lfloor{\frac{t}{\tau}}\rfloor)}}} \| \bm{w}_i^{t+1} - \bm{w}^t \|^2 \Big] \nonumber \\
& \leq 4 \eta^2 \mathbb{E}\left[\|\bm{u}^{(t)} - \Bar{\bm{u}}^{(t)}\|^2\right] + \eta^2 G_w^2\nonumber \\
& \quad + 4\eta^2 \mu^2 \mathbb{E} \Big[\beta^{(t)} \Big] + 4\mathbb{E} \Big[\varphi^{(t)} \Big]
\end{align} 
Plugging ($T_1$), ($T_2$), and ($T_3$) back in (\ref{l20}) gives
\begin{align}
&\mathbb{E}\|\bm{w}^{(t+1)} - \bm{w} \|^2 \nonumber \\
&\quad \leq \mathbb{E}\|\bm{w}^{(t)} - \bm{w} \|^2 \nonumber \\
& \quad\quad -2\eta\mathbb{E}\left[F(\bm{w} ^{(t)},\boldsymbol{\lambda}^{(\lfloor{\frac{t}{\tau}}\rfloor)})  - F(\bm{w} ,\boldsymbol{\lambda}^{(\lfloor{\frac{t}{\tau}}\rfloor)})\right]\nonumber \\
& \quad \quad+ L\eta \mathbb{E}\Big[ \delta^{(t)} \Big] + 2\eta^2 G_w^2 + 4 \eta^2 \mathbb{E}\left[\|\bm{u}^{(t)} - \Bar{\bm{u}}^{(t)}\|^2\right] \nonumber \\
& \quad \quad+ 4\eta^2 \mu^2 \mathbb{E} \Big[\beta^{(t)} \Big] + 4\mathbb{E} \Big[\varphi^{(t)} \Big].
\end{align}
which concludes the proof.
\subsection{PROOF OF LEMMA \ref{lemma: one iteration lambda}}\label{proof: one iteration lambda}
According to the updating rule for $\boldsymbol{\lambda}$ and the fact that $F$ is linear in $\boldsymbol{\lambda}$, we have
\begin{equation}
    \begin{aligned}
     &\mathbb{E}\Big\|\boldsymbol{\lambda}^{(s+1)} - \boldsymbol{\lambda}\Big\|^2 \nonumber\\& \quad= \mathbb{E}\Big\|\prod_{\Lambda}(\boldsymbol{\lambda}^{(s)}+ \Delta_{s})- \boldsymbol{\lambda} \Big\|^2\nonumber\\& \quad\leq \mathbb{E}\Big\|\boldsymbol{\lambda}^{(s)}- \boldsymbol{\lambda} + \Delta_{s}\Big\|^2\nonumber\\& \quad= \mathbb{E}\Big\|\boldsymbol{\lambda}^{(s)}- \boldsymbol{\lambda} + \bar{\Delta}_{s}\Big\|^2 + \mathbb{E}\Big\|\Delta_{s} - \bar{\Delta}_{s}\Big\|^2\nonumber\\& \quad = \mathbb{E}\|\boldsymbol{\lambda}^{(s)}- \boldsymbol{\lambda} \|^2 + \mathbb{E}\Big[2\Big\langle \bar{\Delta}_{s},  \boldsymbol{\lambda}^{(s)}- \boldsymbol{\lambda}\Big\rangle\Big] + \mathbb{E}\|\bar{\Delta}_{s}\|^2\nonumber\\& \quad \quad +\mathbb{E} \|\Delta_{s} - \bar{\Delta}_{s}\|^2\nonumber\\& \quad  = \mathbb{E}\|\boldsymbol{\lambda}^{(s)}- \boldsymbol{\lambda} \|^2\nonumber\\& \quad  \quad + 2\gamma \sum_{t= s\tau+1}^{(s+1)\tau}\mathbb{E}\Big[\Big\langle \nabla_{\boldsymbol{\lambda}} F(\bm{w}^{(t)},\boldsymbol{\lambda}^{(s)}),  \boldsymbol{\lambda}^{(s)}- \boldsymbol{\lambda}\Big\rangle\Big] \nonumber\\& \quad \quad + \mathbb{E}\|\bar{\Delta}_{s}\|^2 + \mathbb{E}\|\Delta_{s} - \bar{\Delta}_{s}\|^2\nonumber\\& \quad=\|\boldsymbol{\lambda}^{(s)}- \boldsymbol{\lambda}\|^2 \nonumber\\& \quad \quad -2\gamma \sum_{t= s\tau+1}^{(s+1)\tau} \mathbb{E}\Big[F(\bm{w} ^{(t)},\boldsymbol{\lambda})-F(\bm{w} ^{(t)},\boldsymbol{\lambda}^{(s)}))\Big]\nonumber\\& \quad \quad + \mathbb{E}\|\bar{\Delta}_{s}\|^2 +  \mathbb{E}\|\Delta_{s} - \bar{\Delta}_{s}\|^2 \label{l41}.
\end{aligned}
\end{equation}

\subsection{PROOF OF THEOREM~\ref{theorem1}}\label{proof: theorem1}
Equipped with the above results, we now turn to proving Theorem~\ref{theorem1}. We start by noting that $\forall \bm{w} \in \mathcal{W}$, $\forall \boldsymbol{\lambda} \in \Lambda$,  according the convexity of global objective w.r.t. $\bm{w} $ and its linearity in terms of $\boldsymbol{\lambda}$, we have
\begin{align}
     &\mathbb{E}[F(\hat{\bm{w} },\boldsymbol{\lambda} ) - \mathbb{E}[F(\bm{w}  ,\hat{\boldsymbol{\lambda}})]\nonumber\\
     & \quad \leq \frac{1}{T}\sum_{t=1}^T \Big\{   \mathbb{E}\Big[F( \bm{w}^{(t)},\boldsymbol{\lambda} )\Big] - \mathbb{E}\Big[F(\bm{w}  ,\boldsymbol{\lambda}^{(\lfloor{\frac{t}{\tau}}\rfloor)})\Big] \Big\}\nonumber\\
     & \quad \leq \frac{1}{T}\sum_{t=1}^T\Big \{   \mathbb{E}\Big[F( \bm{w}^{(t)},\boldsymbol{\lambda} )\Big] -\mathbb{E}\Big[F( \bm{w}^{(t)},\boldsymbol{\lambda}^{(\lfloor{\frac{t}{\tau}}\rfloor)})\Big] \nonumber\\& \quad \quad+\mathbb{E}\Big[ F( \bm{w}^{(t)},\boldsymbol{\lambda}^{(\lfloor{\frac{t}{\tau}}\rfloor)})\Big] - \mathbb{E}\Big[F(\bm{w},\boldsymbol{\lambda}^{(\lfloor{\frac{t}{\tau}}\rfloor)})\Big] \Big\}\nonumber\\
     & \quad \leq \frac{1}{T}\sum_{s=0}^{S-1} \sum_{t=s\tau+1}^{(s+1)\tau}\mathbb{E}\{F( \bm{w}^{(t)},\boldsymbol{\lambda} ) -F( \bm{w}^{(t)},\boldsymbol{\lambda}^{(s)})\}\label{eq: thm1 1}  \\ 
     & \quad \quad +\frac{1}{T}\sum_{t=1}^T\mathbb{E}\{ F( \bm{w}^{(t)},\boldsymbol{\lambda}^{(\lfloor{\frac{t}{\tau}}\rfloor)}) - F(\bm{w}  ,\boldsymbol{\lambda}^{(\lfloor{\frac{t}{\tau}}\rfloor)}) \}  \label{eq: thm1 2},
\end{align}
To bound the term in (\ref{eq: thm1 1}), plugging Lemma~\ref{lemma: bounded variance of lambda}  into Lemma~\ref{lemma: one iteration lambda}, we have
\begin{align}
   & \frac{1}{T}\sum_{s=0}^{S-1} \sum_{t=s\tau+1}^{(s+1)\tau} \mathbb{E}(F(\bm{w} ^{(t)},\boldsymbol{\lambda})-F(\bm{w} ^{(t)},\boldsymbol{\lambda}^{(s)})) \nonumber\\& \quad \leq  \frac{1}{2\gamma T}\|\boldsymbol{\lambda}^{(0)}- \boldsymbol{\lambda}\|^2 + \frac{\gamma\tau }{2}G_{\lambda}^2+ \frac{\gamma\tau\sigma_{ \lambda}^2 }{2m} \nonumber\\
    & \quad \leq \frac{D_{\Lambda}^2}{2\gamma T}  + \frac{\gamma \tau G_{\boldsymbol{\lambda}}^2}{2}  + \frac{\gamma\tau\sigma_{ \lambda}^2 }{2m}.
\end{align}
To bound the term in (\ref{eq: thm1 2}), we plug Lemmas~\ref{lemma: bounded variance of w}, ~\ref{lemma: deviation1}, ~\ref{lemma: deviation2}, and ~\ref{lemma: deviation3} into Lemma~\ref{lemma: one iteration w} and apply the telescoping sum from $t = 1$ to $T$ to get
\begin{align}
    &\frac{1}{T}\sum_{t=1}^T \mathbb{E}(F(\bm{w} ^{(t)},\boldsymbol{\lambda}^{(\lfloor{\frac{t}{\tau}}\rfloor)})-F(\bm{w}  ,\boldsymbol{\lambda}^{(\lfloor{\frac{t}{\tau}}\rfloor)})) \nonumber\\
    & \quad  \leq \frac{1}{2T\eta} \mathbb{E}\|\bm{w}^{(0)} - \bm{w} \|^2 + \frac{L}{2} \frac{1}{T} \sum_{t=1}^T \mathbb{E}\Big[ \delta^{(t)} \Big] + \eta G_w^2  \nonumber\\
    & \quad \quad + \frac{2 \eta \sigma^2_w}{m} + 2\eta \mu^2 \frac{1}{T} \sum_{t=1}^T \mathbb{E} \Big[\beta^{(t)} \Big] + \frac{2}{\eta}\frac{1}{T} \sum_{t=1}^T \mathbb{E} \Big[\varphi^{(t)} \Big]\nonumber \\ & \quad \leq \frac{D_{\mathcal{W}}^2}{2T\eta} +  9 L\eta^2\tau^2  \left(\sigma_w^2+\frac{\sigma_w^2}{m} + \Gamma + 2G_w^2 + 2\mu^2 D_{\mathcal{W}}^2 \right)  \nonumber\\
    & \quad \quad + \eta G_w^2 + 10\frac{\eta \sigma^2_w}{m} + 24   \eta^3\tau^2  \mu^2  \Big(\sigma_w^2 + 4 G_w^2 \Big) + 32 \eta G_w^2  \nonumber\\
    & \quad \quad+ 144\eta\tau^2  \left(\sigma_w^2+\frac{\sigma_w^2}{m} + \Gamma + 2G_w^2 + 2\mu^2 D_{\mathcal{W}}^2  \right).
\end{align}
Putting pieces together, and taking maximum over dual $\boldsymbol{\lambda}$, minimum over primal $\bm{w}$ yields
\begin{align}
     &\min_{\bm{w}\in \mathcal{W}}\max_{\boldsymbol{\lambda}\in \Lambda} \mathbb{E}[F(\hat{\bm{w} },\boldsymbol{\lambda} ) - \mathbb{E}[F(\bm{w}  ,\hat{\boldsymbol{\lambda}})]\nonumber\\
     & \quad \leq \frac{D_{\mathcal{W}}^2}{2T\eta} + 9 L\eta^2\tau^2  \left(\sigma_w^2+\frac{\sigma_w^2}{m} + \Gamma + 2 G_w^2 + 2\mu^2 D_{\mathcal{W}}^2 \right) \nonumber \\ & \quad \quad+ \eta G_w^2 + 10\frac{\eta \sigma^2_w}{m} + 24   \eta^3\tau^2  \mu^2  \Big(\sigma_w^2 + 4 G_w^2 \Big)  + 32\eta G_w^2 \nonumber \\ & \quad \quad  + 144\eta\tau^2  \left(\sigma_w^2+\frac{\sigma_w^2}{m} + \Gamma + 2G_w^2 + 2\mu^2 D_{\mathcal{W}}^2 \right)\nonumber \\ & \quad \quad + \frac{D_{\Lambda}^2}{2\gamma T}  + \frac{\gamma \tau G_{\boldsymbol{\lambda}}^2}{2}  + \frac{\gamma\tau\sigma_{ \lambda}^2 }{2m}.
\end{align} 
Plugging in $\tau =  \frac{T^{1/4}}{\sqrt{m}}$, $\eta = \frac{1}{4L \sqrt{T}}$, and $\gamma = \frac{1}{T^{5/8}}$, and $\mu = 2L \sqrt{\frac{N}{m}}$,  we  conclude the proof by getting
\begin{align}
    &\max_{\boldsymbol{\lambda}\in \Lambda}\mathbb{E}[F(\hat{\boldsymbol{w}},\boldsymbol{\lambda} )] -\min_{\bm{w}\in\mathcal{W}} \mathbb{E}[F(\boldsymbol{w} ,\hat{\boldsymbol{\lambda}} )] 
    \nonumber \\&\quad  \leq  \frac{9}{16L}  \frac{1}{\sqrt{T}m}  \left(\sigma_w^2+\frac{\sigma_w^2}{m} + \Gamma + 2G_w^2 + 2\mu^2 D_{\mathcal{W}}^2  \right)   \nonumber \\&\quad \quad+  2L\frac{D_{\mathcal{W}}^2}{\sqrt{T}} + \frac{1}{4L}\frac{ G_w^2}{\sqrt{T}} + \frac{5}{2L}\frac{\sigma^2_w}{\sqrt{T}m} \nonumber \\ & \quad \quad+ \frac{3N}{2L} \frac{1}{Tm} \Big(\sigma_w^2 + 4 G_w^2 \Big) \nonumber \\ & \quad \quad +  \frac{36}{m}  \left(\sigma_w^2+\frac{\sigma_w^2}{m} + \Gamma  + 2G_w^2 + 2\mu^2 D_{\mathcal{W}}^2  \right) + \frac{8}{L} \frac{G_w^2}{\sqrt{T}}  \nonumber \\ & \quad \quad+ \frac{1}{2}\frac{D_{\Lambda}^2}{T^{3/8}}  + \frac{1}{2}\frac{G_{\boldsymbol{\lambda}}^2}{{m}^{1/2}T^{3/8}}  + \frac{1}{2}\frac{\sigma_{ \lambda}^2 }{m^{3/2}T^{3/8}}
   \nonumber \\ & \quad \leq  O\Big{(}\frac{D_{\mathcal{W}}^2+G_{w}^2}{\sqrt{T}} +\frac{D_{\Lambda}^2}{T^{3/8}} +\frac{G_{\lambda}^2}{m^{1/2}T^{3/8}} \nonumber \\ & \quad \quad+\frac{\sigma_{\lambda}^2}{m^{3/2}T^{3/8}}+ \frac{\sigma_w^2+\Gamma}{m\sqrt{T} }\Big{)}.
\end{align}

\subsection{ADDITIONAL EXPERIMENTS}\label{extra_experiments}
\begin{figure*}[t]
\centering
\includegraphics[width=1\textwidth,  height=6cm, trim={0cm 0cm 0cm 0cm}]{./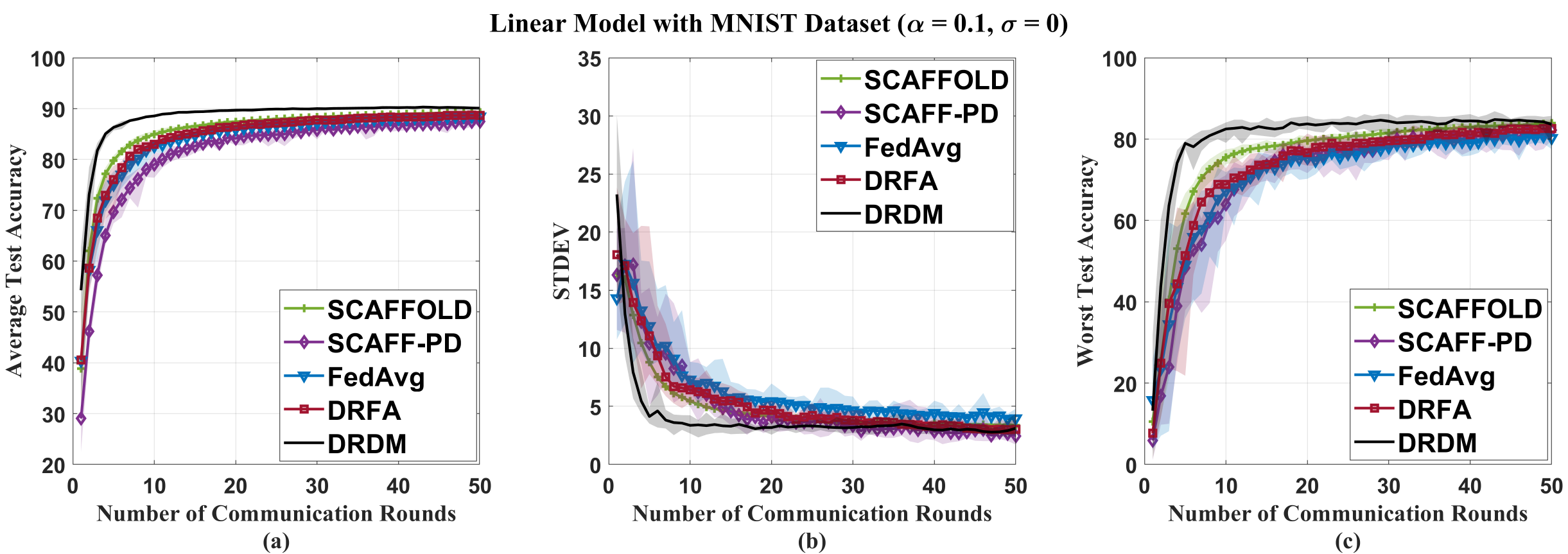}	
\caption{Results of \textit{DRDM} compared to the other baselines using Linear model with MNIST dataset with non-IID setting ($\alpha = 0.1$ and $\sigma = 0$). (a) Average test accuracy, (b) standard deviation values, and (c) worst test accuracy experienced by the different algorithms.}
\label{fig:linear_01_0}
\end{figure*} 
\begin{figure*}[t]
\centering
\includegraphics[width=1\textwidth,  height=6cm, trim={0cm 0cm 0cm 0cm}]{./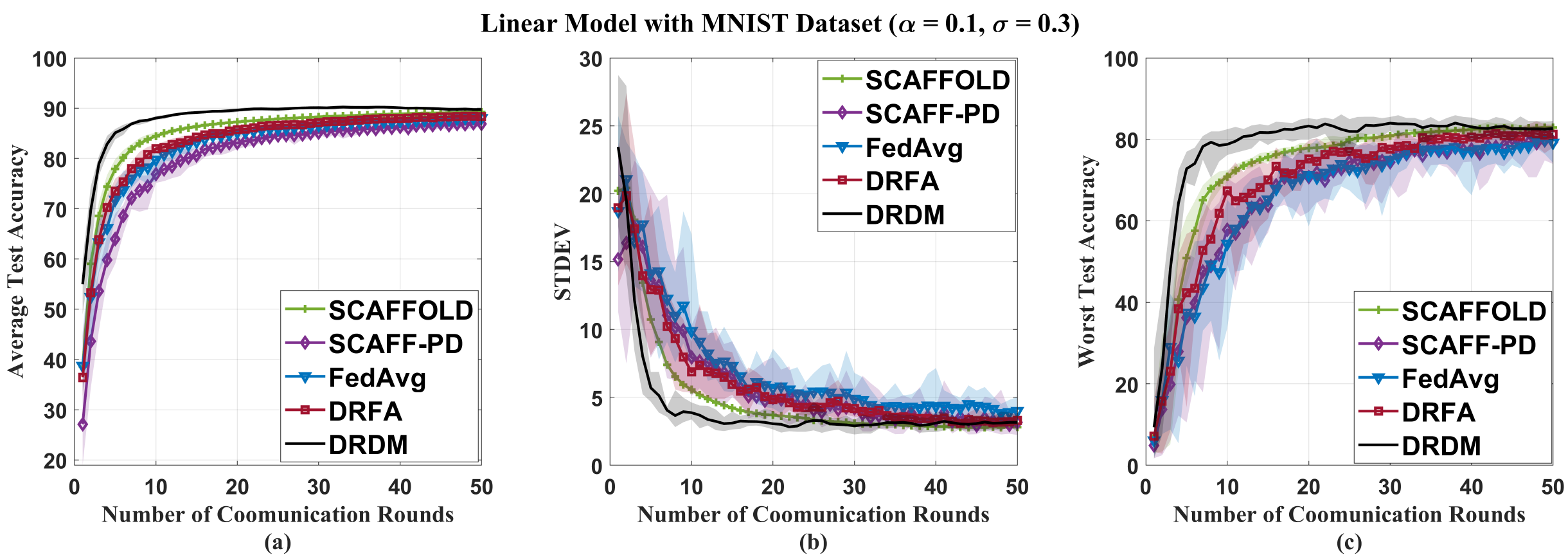}	
\caption{Results of \textit{DRDM} compared to the other baselines using Linear model with MNIST dataset with non-IID setting ($\alpha = 0.1$ and $\sigma = 0.3$). (a) Average test accuracy, (b) standard deviation values, and (c) worst test accuracy experienced by the different algorithms.}
\label{fig:linear_01_03}
\end{figure*} 
This section provides additional experimental results on the MNIST and Fashion-MNIST datasets to further evaluate \textit{DRDM} against baseline methods under various heterogeneity settings.   

Figures~\ref{fig:linear_01_0} and~\ref{fig:linear_01_03} present the comparison using a linear model with the MNIST dataset under non-IID conditions with fixed class heterogeneity $\alpha=0.1$ and varying dataset size levels ($\sigma=0$ and $\sigma=0.3$, respectively).  Additional results for other values of $\alpha$ and $\sigma$ are provided in Tables~\ref{table:linear_00} and~\ref{table:linear_01}. 
The experiments evaluate the average test accuracy, the standard deviation of the accuracy between clients, and the worst-case test accuracy. \textit{DRDM} consistently outperforms the baselines in terms of the average and worst-case test accuracy while maintaining competitive standard deviation between clients.

Tables~\ref{table:finetune_00} and~\ref{table:finetune_01} extend the evaluation to a different setup, where the last layer of the pre-trained ResNet18 model is fine-tuned on the Fashion-MNIST dataset. \textit{DRDM} demonstrates a strong performance across the varying levels of heterogeneity, highlighting its ability to deliver stable performance for different model and dataset complexities. Overall, these additional experiments reinforce the findings from Section \ref{experiments}, demonstrating the effectiveness and robustness of \textit{DRDM} across different datasets, model architectures, and heterogeneity levels.

\begin{table*}[ht]
\caption{Results of the different algorithms using Linear model and MNIST dataset, with $\sigma = 0$ and $\alpha \in \{0.1, 0.3, 0.5, 0.7\}$.}
\centering
\begin{tabular}{|
>{}c l|
>{}c 
>{}c 
>{}c 
>{}c |}
\hline
\multicolumn{2}{|c|}{Dataset} &
  \multicolumn{4}{c|}{MNIST} \\ \hline
\multicolumn{2}{|c|}{{($\alpha | \sigma = 0$)}} &
  \multicolumn{1}{P{2cm}|}{0.1} &
  \multicolumn{1}{P{2cm}|}{0.3} &
  \multicolumn{1}{P{2cm}|}{0.5} &
   \multicolumn{1}{P{2cm}|}{0.7} \\ \hline
\multicolumn{2}{|c|}{Model type} &
  \multicolumn{4}{c|}{Linear} \\ \hline
\multicolumn{1}{|c|}{} &
  \multicolumn{1}{l|}{{Average test accuracy}} &
  \multicolumn{1}{c|}{\textbf{90.31}} &
  \multicolumn{1}{c|}{\textbf{89.11}} &
  \multicolumn{1}{c|}{\textbf{90.20}} &
  \textbf{89.91} \\ \cline{2-6} 
\multicolumn{1}{|c|}{} &
  \multicolumn{1}{l|}{{Worst test accuracy}} &
  \multicolumn{1}{c|}{\textbf{84.81}} &
  \multicolumn{1}{c|}{\textbf{85.68}} &
  \multicolumn{1}{c|}{\textbf{86.36}} &
  \textbf{86.36} \\ \cline{2-6} 
\multicolumn{1}{|c|}{\multirow{-3}{*}{DRDM}} &
   \multicolumn{1}{l|}{{Standard deviation}} &
  \multicolumn{1}{c|}{2.96} &
  \multicolumn{1}{c|}{\textbf{2.29}} &
  \multicolumn{1}{c|}{\textbf{2.37}} &
  \textbf{2.14} \\ \hline
\multicolumn{1}{|c|}{} &
  \multicolumn{1}{l|}{{Average test accuracy}} &
  \multicolumn{1}{c|}{88.67} &
  \multicolumn{1}{c|}{88.07} &
  \multicolumn{1}{c|}{89.18} &
  88.13 \\ \cline{2-6} 
\multicolumn{1}{|c|}{} &
  \multicolumn{1}{l|}{{Worst test accuracy}} &
  \multicolumn{1}{c|}{82.60} &
  \multicolumn{1}{c|}{83.22} &
  \multicolumn{1}{c|}{83.86} &
  83.41 \\ \cline{2-6} 
\multicolumn{1}{|c|}{\multirow{-3}{*}{DRFA}} &
   \multicolumn{1}{l|}{{Standard deviation}} &
  \multicolumn{1}{c|}{3.02} &
  \multicolumn{1}{c|}{2.86} &
  \multicolumn{1}{c|}{2.80} &
  2.58 \\ \hline
\multicolumn{1}{|c|}{} &
  \multicolumn{1}{l|}{{Average test accuracy}} &
  \multicolumn{1}{c|}{89.32} &
  \multicolumn{1}{c|}{88.45} &
  \multicolumn{1}{c|}{89.15} &
  88.47 \\ \cline{2-6} 
\multicolumn{1}{|c|}{} &
  \multicolumn{1}{l|}{{Worst test accuracy}} &
  \multicolumn{1}{c|}{83.92} &
  \multicolumn{1}{c|}{84.10} &
  \multicolumn{1}{c|}{84.77} &
  83.67 \\ \cline{2-6} 
\multicolumn{1}{|c|}{\multirow{-3}{*}{SCAFFOLD}} &
   \multicolumn{1}{l|}{{Standard deviation}} &
  \multicolumn{1}{c|}{3.39} &
  \multicolumn{1}{c|}{2.61} &
  \multicolumn{1}{c|}{2.55} &
  2.49 \\ \hline
\multicolumn{1}{|c|}{} &
  \multicolumn{1}{l|}{{Average test accuracy}} &
  \multicolumn{1}{c|}{87.49} &
  \multicolumn{1}{c|}{86.38} &
  \multicolumn{1}{c|}{87.94} &
  87.68 \\ \cline{2-6} 
\multicolumn{1}{|c|}{} &
  \multicolumn{1}{l|}{{Worst test accuracy}} &
  \multicolumn{1}{c|}{82.23} &
  \multicolumn{1}{c|}{81.59} &
  \multicolumn{1}{c|}{83.22} &
  84.09 \\ \cline{2-6} 
\multicolumn{1}{|c|}{\multirow{-3}{*}{SCAFF-PD}} &
   \multicolumn{1}{l|}{{Standard deviation}} &
  \multicolumn{1}{c|}{\textbf{2.45}} &
  \multicolumn{1}{c|}{3.11} &
  \multicolumn{1}{c|}{2.58} &
  2.23 \\ \hline
\multicolumn{1}{|c|}{} &
  \multicolumn{1}{l|}{{Average test accuracy}} &
  \multicolumn{1}{c|}{88.46} &
  \multicolumn{1}{c|}{87.81} &
  \multicolumn{1}{c|}{88.86} &
  87.69 \\ \cline{2-6} 
\multicolumn{1}{|c|}{} &
  \multicolumn{1}{l|}{{Worst test accuracy}} &
  \multicolumn{1}{c|}{80.70} &
  \multicolumn{1}{c|}{82.96} &
  \multicolumn{1}{c|}{82.99} &
  84.09 \\ \cline{2-6} 
\multicolumn{1}{|c|}{\multirow{-3}{*}{Fed-Avg}} &
   \multicolumn{1}{l|}{{Standard deviation}} &
  \multicolumn{1}{c|}{3.93} &
  \multicolumn{1}{c|}{3.19} &
  \multicolumn{1}{c|}{2.97} &
  2.41 \\ \hline
\end{tabular}
\label{table:linear_00}
\end{table*}

\begin{table*}[ht]
\caption{Results of the different algorithms using Linear model and MNIST dataset, with $\alpha = 0.1$ and $\sigma \in \{0.3, 0.5, 0.7\}$.}
\centering
\begin{tabular}{|
>{}c l|
>{}c 
>{}c 
>{}c |}
\hline
\multicolumn{2}{|c|}{DATASET} &
  \multicolumn{3}{c|}{MNIST} \\ \hline
\multicolumn{2}{|c|}{{($\sigma | \alpha = 0.1$)}} &
  \multicolumn{1}{P{3cm}|}{0.3} &
  \multicolumn{1}{P{3cm}|}{0.5} &
  \multicolumn{1}{P{3cm}|}{0.7} \\ \hline
\multicolumn{2}{|c|}{Model type} &
  \multicolumn{3}{c|}{Linear} \\ \hline
\multicolumn{1}{|c|}{} &
  \multicolumn{1}{l|}{{Average test accuracy}} &
  \multicolumn{1}{c|}{\textbf{90.20}} &
  \multicolumn{1}{c|}{\textbf{90.05}} &
  \textbf{90.48} \\ \cline{2-5} 
\multicolumn{1}{|c|}{} &
   \multicolumn{1}{l|}{{Worst test accuracy}} &
  \multicolumn{1}{c|}{\textbf{84.00}} &
  \multicolumn{1}{c|}{\textbf{82.68}} &
  \textbf{83.07} \\ \cline{2-5} 
\multicolumn{1}{|c|}{\multirow{-3}{*}{DRDM}} &
  \multicolumn{1}{l|}{{Standard deviation}} &
  \multicolumn{1}{c|}{2.94} &
  \multicolumn{1}{c|}{3.66} &
  \textbf{3.30} \\ \hline
\multicolumn{1}{|c|}{} &
  \multicolumn{1}{l|}{{Average test accuracy}} &
  \multicolumn{1}{c|}{88.35} &
  \multicolumn{1}{c|}{88.74} &
  88.85 \\ \cline{2-5} 
\multicolumn{1}{|c|}{} &
   \multicolumn{1}{l|}{{Worst test accuracy}} &
  \multicolumn{1}{c|}{81.69} &
  \multicolumn{1}{c|}{81.37} &
  80.91 \\ \cline{2-5} 
\multicolumn{1}{|c|}{\multirow{-3}{*}{DRFA}} &
  \multicolumn{1}{l|}{{Standard deviation}} &
  \multicolumn{1}{c|}{3.15} &
  \multicolumn{1}{c|}{3.70} &
  4.41 \\ \hline
\multicolumn{1}{|c|}{} &
  \multicolumn{1}{l|}{{Average test accuracy}} &
  \multicolumn{1}{c|}{89.21} &
  \multicolumn{1}{c|}{89.34} &
  90.10 \\ \cline{2-5} 
\multicolumn{1}{|c|}{} &
   \multicolumn{1}{l|}{{Worst test accuracy}} &
  \multicolumn{1}{c|}{82.95} &
  \multicolumn{1}{c|}{80.18} &
  81.50 \\ \cline{2-5} 
\multicolumn{1}{|c|}{\multirow{-3}{*}{SCAFFOLD}} &
  \multicolumn{1}{l|}{{Standard deviation}} &
  \multicolumn{1}{c|}{\textbf{2.81}} &
  \multicolumn{1}{c|}{3.70} &
  4.00 \\ \hline
\multicolumn{1}{|c|}{} &
  \multicolumn{1}{l|}{{Average test accuracy}} &
  \multicolumn{1}{c|}{86.90} &
  \multicolumn{1}{c|}{86.59} &
  85.72 \\ \cline{2-5} 
\multicolumn{1}{|c|}{} &
   \multicolumn{1}{l|}{{Worst test accuracy}} &
  \multicolumn{1}{c|}{80.14} &
  \multicolumn{1}{c|}{80.89} &
  76.11 \\ \cline{2-5} 
\multicolumn{1}{|c|}{\multirow{-3}{*}{SCAFF-PD}} &
  \multicolumn{1}{l|}{{Standard deviation}} &
  \multicolumn{1}{c|}{2.91} &
  \multicolumn{1}{c|}{\textbf{3.02}} &
  4.18 \\ \hline
\multicolumn{1}{|c|}{} &
  \multicolumn{1}{l|}{{Average test accuracy}} &
  \multicolumn{1}{c|}{88.07} &
  \multicolumn{1}{c|}{88.47} &
  88.46 \\ \cline{2-5} 
\multicolumn{1}{|c|}{} &
   \multicolumn{1}{l|}{{Worst test accuracy}} &
  \multicolumn{1}{c|}{79.89} &
  \multicolumn{1}{c|}{80.35} &
  79.04 \\ \cline{2-5} 
\multicolumn{1}{|c|}{\multirow{-3}{*}{Fed-Avg}} &
  \multicolumn{1}{l|}{{Standard deviation}} &
  \multicolumn{1}{c|}{3.91} &
  \multicolumn{1}{c|}{4.29} &
  5.30 \\ \hline
\end{tabular}
\label{table:linear_01}
\end{table*}

\begin{table*}[ht]
\caption{Results of the different algorithms fine-tuning the last layer of ResNet18 pre-trained model, using Fashion-MNIST dataset, with $\sigma = 0$ and $\alpha \in \{0.1, 0.3, 0.5, 0.7, 0.9\}$.}
\centering
\begin{tabular}{|
>{}c l|
>{}c 
>{}c 
>{}c 
>{}c 
>{}c |}
\hline
\multicolumn{2}{|c|}{Dataset} &
  \multicolumn{5}{c|}{Fashion-MNIST} \\ \hline
\multicolumn{2}{|c|}{{($\alpha | \sigma = 0$)}} &
  \multicolumn{1}{P{2cm}|}{0.1} &
  \multicolumn{1}{P{2cm}|}{0.3} &
  \multicolumn{1}{P{2cm}|}{0.5} &
  \multicolumn{1}{P{2cm}|}{0.7} &
  \multicolumn{1}{P{2cm}|}{0.9} \\ \hline
\multicolumn{2}{|c|}{Model type} &
  \multicolumn{5}{c|}{Fine-tuning the last layer of pre-trained model ResNet18} \\ \hline
\multicolumn{1}{|c|}{} &
   \multicolumn{1}{l|}{{Average test accuracy}} &
  \multicolumn{1}{c|}{67.55} &
  \multicolumn{1}{c|}{\textbf{65.67}} &
  \multicolumn{1}{c|}{\textbf{65.15}} &
  \multicolumn{1}{c|}{\textbf{65.34}} &
  \textbf{66.31} \\ \cline{2-7} 
\multicolumn{1}{|c|}{} &
  \multicolumn{1}{l|}{{Worst test accuracy}} &
  \multicolumn{1}{c|}{\textbf{39.51}} &
  \multicolumn{1}{c|}{\textbf{44.78}} &
  \multicolumn{1}{c|}{\textbf{47.65}} &
  \multicolumn{1}{c|}{\textbf{51.04}} &
  51.21 \\ \cline{2-7} 
\multicolumn{1}{|c|}{\multirow{-3}{*}{DRDM}} &
  \multicolumn{1}{l|}{{Standard deviation}} &
  \multicolumn{1}{c|}{12.39} &
  \multicolumn{1}{c|}{\textbf{9.52}} &
  \multicolumn{1}{c|}{8.70} &
  \multicolumn{1}{c|}{\textbf{7.34}} &
  7.30 \\ \hline
\multicolumn{1}{|c|}{} &
   \multicolumn{1}{l|}{{Average test accuracy}}&
  \multicolumn{1}{c|}{67.43} &
  \multicolumn{1}{c|}{63.36} &
  \multicolumn{1}{c|}{63.37} &
  \multicolumn{1}{c|}{62.82} &
  65.82 \\ \cline{2-7} 
\multicolumn{1}{|c|}{} &
  \multicolumn{1}{l|}{{Worst test accuracy}} &
  \multicolumn{1}{c|}{37.41} &
  \multicolumn{1}{c|}{39.38} &
  \multicolumn{1}{c|}{45.01} &
  \multicolumn{1}{c|}{47.88} &
  \textbf{51.80} \\ \cline{2-7} 
\multicolumn{1}{|c|}{\multirow{-3}{*}{DRFA}} &
  \multicolumn{1}{l|}{{Standard deviation}} &
  \multicolumn{1}{c|}{12.88} &
  \multicolumn{1}{c|}{9.94} &
  \multicolumn{1}{c|}{8.53} &
  \multicolumn{1}{c|}{7.74} &
  \textbf{6.66} \\ \hline
\multicolumn{1}{|c|}{} &
    \multicolumn{1}{l|}{{Average test accuracy}} &
  \multicolumn{1}{c|}{\textbf{70.58}} &
  \multicolumn{1}{c|}{65.17} &
  \multicolumn{1}{c|}{64.83} &
  \multicolumn{1}{c|}{64.99} &
  65.34 \\ \cline{2-7} 
\multicolumn{1}{|c|}{} &
  \multicolumn{1}{l|}{{Worst test accuracy}} &
  \multicolumn{1}{c|}{38.01} &
  \multicolumn{1}{c|}{40.77} &
  \multicolumn{1}{c|}{47.05} &
  \multicolumn{1}{c|}{50.00} &
  51.07 \\ \cline{2-7} 
\multicolumn{1}{|c|}{\multirow{-3}{*}{SCAFFOLD}} &
  \multicolumn{1}{l|}{{Standard deviation}} &
  \multicolumn{1}{c|}{\textbf{12.35}} &
  \multicolumn{1}{c|}{9.88} &
  \multicolumn{1}{c|}{\textbf{8.494}} &
  \multicolumn{1}{c|}{7.59} &
  6.66 \\ \hline
\multicolumn{1}{|c|}{} &
   \multicolumn{1}{l|}{{Average test accuracy}} &
  \multicolumn{1}{c|}{59.41} &
  \multicolumn{1}{c|}{61.59} &
  \multicolumn{1}{c|}{60.51} &
  \multicolumn{1}{c|}{62.28} &
  62.17 \\ \cline{2-7} 
\multicolumn{1}{|c|}{} &
  \multicolumn{1}{l|}{{Worst test accuracy}} &
  \multicolumn{1}{c|}{34.44} &
  \multicolumn{1}{c|}{39.82} &
  \multicolumn{1}{c|}{42.90} &
  \multicolumn{1}{c|}{43.70} &
  48.64 \\ \cline{2-7} 
\multicolumn{1}{|c|}{\multirow{-3}{*}{SCAFF-PD}} &
  \multicolumn{1}{l|}{{Standard deviation}} &
  \multicolumn{1}{c|}{12.92} &
  \multicolumn{1}{c|}{10.44} &
  \multicolumn{1}{c|}{8.94} &
  \multicolumn{1}{c|}{8.01} &
  7.21 \\ \hline
\multicolumn{1}{|c|}{} &
    \multicolumn{1}{l|}{{Average test accuracy}} &
  \multicolumn{1}{c|}{63.07} &
  \multicolumn{1}{c|}{59.68} &
  \multicolumn{1}{c|}{58.91} &
  \multicolumn{1}{c|}{61.51} &
  61.95 \\ \cline{2-7} 
\multicolumn{1}{|c|}{} &
  \multicolumn{1}{l|}{{Worst test accuracy}} &
  \multicolumn{1}{c|}{28.43} &
  \multicolumn{1}{c|}{34.90} &
  \multicolumn{1}{c|}{39.44} &
  \multicolumn{1}{c|}{46.06} &
  46.53 \\ \cline{2-7} 
\multicolumn{1}{|c|}{\multirow{-3}{*}{Fed-Avg}} &
  \multicolumn{1}{l|}{{Standard deviation}} &
  \multicolumn{1}{c|}{15.26} &
  \multicolumn{1}{c|}{10.93} &
  \multicolumn{1}{c|}{9.55} &
  \multicolumn{1}{c|}{8.02} &
  7.27 \\ \hline
\end{tabular}
\label{table:finetune_00}
\end{table*}

\begin{table*}[ht]
\caption{Results of the different algorithms fine-tuning the last layer of ResNet18 pre-trained model, using Fashion-MNIST dataset, with $\alpha = 0.1$ and $\sigma \in \{0.3, 0.5, 0.7\}$.}
\centering
\begin{tabular}{|
>{}c l|
>{}c 
>{}c 
>{}c |}
\hline
\multicolumn{2}{|c|}{Dataset} &
  \multicolumn{3}{c|}{Fashion-MNIST} \\ \hline
\multicolumn{2}{|c|}{{($\sigma | \alpha = 0.1$)}} &
  \multicolumn{1}{P{3.5cm}|}{0.3} &
  \multicolumn{1}{P{2cm}|}{0.5} &
  \multicolumn{1}{P{2cm}|}{0.7} \\ \hline
\multicolumn{2}{|c|}{Model type} &
  \multicolumn{3}{c|}{Fine-tuning the last layer of pre-trained model ResNet18} \\ \hline
\multicolumn{1}{|c|}{} &
   \multicolumn{1}{l|}{{Average test accuracy}} &
  \multicolumn{1}{c|}{\textbf{67.89}} &
  \multicolumn{1}{c|}{68.85} &
  67.66 \\ \cline{2-5} 
\multicolumn{1}{|c|}{} &
  \multicolumn{1}{l|}{{Worst test accuracy}} &
  \multicolumn{1}{c|}{\textbf{40.60}} &
  \multicolumn{1}{c|}{\textbf{43.41}} &
  \textbf{40.62} \\ \cline{2-5} 
\multicolumn{1}{|c|}{\multirow{-3}{*}{DRDM}} &
   \multicolumn{1}{l|}{{Standard deviation}} &
  \multicolumn{1}{c|}{12.37} &
  \multicolumn{1}{c|}{11.32} &
  12.77 \\ \hline
\multicolumn{1}{|c|}{} &
    \multicolumn{1}{l|}{{Average test accuracy}} &
  \multicolumn{1}{c|}{62.99} &
  \multicolumn{1}{c|}{65.57} &
  63.81 \\ \cline{2-5} 
\multicolumn{1}{|c|}{} &
  \multicolumn{1}{l|}{{Worst test accuracy}} &
  \multicolumn{1}{c|}{32.47} &
  \multicolumn{1}{c|}{37.27} &
  30.60 \\ \cline{2-5} 
\multicolumn{1}{|c|}{\multirow{-3}{*}{DRFA}} &
   \multicolumn{1}{l|}{{Standard deviation}} &
  \multicolumn{1}{c|}{13.55} &
  \multicolumn{1}{c|}{12.74} &
  13.43 \\ \hline
\multicolumn{1}{|c|}{} &
    \multicolumn{1}{l|}{{Average test accuracy}} &
  \multicolumn{1}{c|}{67.11} &
  \multicolumn{1}{c|}{\textbf{69.03}} &
  \textbf{67.80} \\ \cline{2-5} 
\multicolumn{1}{|c|}{} &
  \multicolumn{1}{l|}{{Worst test accuracy}} &
  \multicolumn{1}{c|}{36.11} &
  \multicolumn{1}{c|}{39.16} &
  34.84 \\ \cline{2-5} 
\multicolumn{1}{|c|}{\multirow{-3}{*}{SCAFFOLD}} &
   \multicolumn{1}{l|}{{Standard deviation}} &
  \multicolumn{1}{c|}{13.40} &
  \multicolumn{1}{c|}{12.39} &
  12.49 \\ \hline
\multicolumn{1}{|c|}{} &
    \multicolumn{1}{l|}{{Average test accuracy}} &
  \multicolumn{1}{c|}{61.24} &
  \multicolumn{1}{c|}{61.52} &
  62.08 \\ \cline{2-5} 
\multicolumn{1}{|c|}{} &
  \multicolumn{1}{l|}{{Worst test accuracy}} &
  \multicolumn{1}{c|}{36.30} &
  \multicolumn{1}{c|}{35.74} &
  37.90 \\ \cline{2-5} 
\multicolumn{1}{|c|}{\multirow{-3}{*}{SCAFF-PD}} &
   \multicolumn{1}{l|}{{Standard deviation}} &
  \multicolumn{1}{c|}{\textbf{11.38}} &
  \multicolumn{1}{c|}{\textbf{10.77}} &
  \textbf{11.98} \\ \hline
\multicolumn{1}{|c|}{} &
    \multicolumn{1}{l|}{{Average test accuracy}} &
  \multicolumn{1}{c|}{59.27} &
  \multicolumn{1}{c|}{61.80} &
  60.24 \\ \cline{2-5} 
\multicolumn{1}{|c|}{} &
  \multicolumn{1}{l|}{{Worst test accuracy}} &
  \multicolumn{1}{c|}{24.92} &
  \multicolumn{1}{c|}{27.66} &
  13.96 \\ \cline{2-5} 
\multicolumn{1}{|c|}{\multirow{-3}{*}{Fed-Avg}} &
  \multicolumn{1}{l|}{{Standard deviation}} &
  \multicolumn{1}{c|}{15.84} &
  \multicolumn{1}{c|}{14.85} &
  16.50 \\ \hline
\end{tabular}
\label{table:finetune_01}
\end{table*}

\end{document}